\title{Noise Estimation in Gaussian Process Regression}
\author{Siavash Ameli\protect\thanks{Email address: \href{mailto:sameli@berkeley.edu}{\protect\nolinkurl{sameli@berkeley.edu}}}\;}
\author{Shawn C. Shadden\protect\thanks{Email address: \href{mailto:shadden@berkeley.edu}{\protect\nolinkurl{shadden@berkeley.edu}}}}
\affil{\small\textit{Mechanical Engineering}, \small\textit{University of California}, \small\textit{Berkeley, CA, USA 94720}}
\date{}
\def \X {X}
\def \eigM {\psi}
\def \basis {\phi}
\begin{document}

\maketitle

\begin{abstract}
    We develop a computational procedure to estimate the covariance hyperparameters for semiparametric Gaussian process regression models with additive noise. Namely, the presented method can be used to efficiently estimate the variance of the correlated error, and the variance of the noise based on maximizing a marginal likelihood function. Our method involves suitably reducing the dimensionality of the hyperparameter space to simplify the estimation procedure to a univariate root-finding problem. Moreover, we derive bounds and asymptotes of the marginal likelihood function and its derivatives, which are useful to narrowing the initial range of the hyperparameter search. Using numerical examples, we demonstrate the computational advantages and robustness of the presented approach compared to traditional parameter optimization.

\paragraph{Keywords:} parameter estimation, mixed covariance, multivariable linear model, correlated error, nugget

\paragraph{Mathematics Subject Classification (2020):} 62J05, 62H12
\end{abstract}

% ============
% Introduction
% ============

\section{Introduction}

Gaussian process models are commonly used in a wide range of statistical inference and machine learning methods such as regression and classification \citep{NEAL-1998,MACKAY-1998,SEEGER-2004,RASMUSSEN-2006}, latent variable models \citep{LAWRENCE-2003}, neural networks \citep{NEAL-1996}, and deep belief networks \citep{DAMIANOU-2013}.  

An important application of the Gaussian process is in data regression. In linear regression models, using a Gaussian process prior, a stochastic function \(z(\vect{x})\) is modeled by 
\( z(\vect{x}) = \mu(\vect{x}) + \delta(\vect{x}) \),
%\begin{equation} 
%z(\vect{x}) = \mu(\vect{x}) + \delta(\vect{x}),
%\end{equation}
where \(\mu\) represents the trend of the data and \(\delta\) represents residual error, which is assumed to be a Gaussian process, i.e. the joint distribution between any finite set of points is normal. % Here the independent variable \(\vect{x}\) is assumed to be continuous and can be multidimensional.

A Gaussian process is characterized by a mean function \(\mu(\vect{x}) = \mathbb{E}(z(\vect{x}))\) and a covariance function \(\Sigma(\vect{x},\vect{x}') = \operatorname{cov}(z(\vect{x}),z(\vect{x}'))\). The covariance function \(\Sigma\) is often assumed to be of the form \(\sigma^2 K(\vect{x},\vect{x}'|\vect{\theta})\), where \(\sigma^2\) is the variance of the stationary residual error, and \(K\) is a continuous correlation function depending on some hyperparameters \(\vect{\theta}\), where the correlation represents the variability of \(z\) with \(\vect{x}\). 

%\paragraph{Gaussian Process with Noise.}
In many models, the Gaussian process is considered together with an uncorrelated additive noise to either incorporate the uncertainty of the data or to regularize ill-conditioned problems. In such scenarios, the covariance function includes an additional covariance representing the local noise, i.e.,
\begin{equation} 
    \Sigma(\vect{x},\vect{x}'|\vect{\theta}) = \sigma^2 K(\vect{x},\vect{x}'|\vect{\theta}) + \sigma_0^2 I(\vect{x},\vect{x}'), \label{eq:covariance-function}
\end{equation}
where \(I\) is a discontinuous (Kronecker delta) function equal to \(1\) if \(\vect{x} = \vect{x}'\) and zero otherwise, and \(\sigma_0^2\) is the noise variance. Mixed covariance models analogous to \eqref{eq:covariance-function} have been studied in many applications, such as in spatial linear models in geostatistics \cite[p. 59]{CRESSIE-1993}, \cite[\S 3.7]{STEIN-1999}, \cite[\S 2.3 and 23.3]{GELFAND-2010}, \citep{KLEIBER-2012}, \cite[Ch. 8]{WACKERNAGEL-2013}, \citep{GENTON-2015}, in computer experiments where the uncertainty of simulations is modeled by noise in emulators \citep{OHAGAN-2001}, \citep{GRAMACY-2012}, \citep{ANDRIANAKIS-2012}, deterministic computer experiments \citep{PEPELYSHEV-2012}, \citep{PENG-2014}, \citep{LEE-2018}, and in optimal designs of experiments \citep{STEIN-2005}. 

%\paragraph{Objective.}
Once a covariance structure is assumed, e.g., \eqref{eq:covariance-function}, the primary task in regression is \emph{model selection}, i.e., to learn the model hyperparameters, e.g., \((\sigma^2,\sigma_0^2,\vect{\theta})\), from the data. Various model selection criteria exist, such as maximum likelihood estimation (MLE) methods \citep{WILLIAMS-1996}, cross-validation methods \citep{SUNDARARAJAN-2001}, and expectation-maximization for Gaussian process latent variable models and probabilistic principal component analysis \citep{TIPPING-1999}, \citep[\S 12.2]{BISHOP-2006}. Regardless of the method, however, the estimation of \((\sigma^2,\sigma_0^2,\vect{\theta})\) altogether is unwieldy; the hyperparameter space is too large for non-convex global optimization, and gradient-based local optimization usually fails to converge to proper hyperparameter values \citep[\S 3.3]{DALLAIRE-2009}. While there are several works on sparse approximation techniques for Gaussian process regression \citep{SMOLA-2001,LAWRENCE-2002,SEEGER-2003,QUINONERO-2005,SNELSON-2006,TITSIAS-2009,DEISENROTH-2015}, hyperparameter estimation usually remains a significant challenge whether using a full or sparse method.

Recognizing that the variances \(\sigma^2\) and \(\sigma_0^2\) play a crucial role (e.g., in model fit, regularization, and overall interpretation of the data), it is generally desirable to first estimate \((\sigma^2,\sigma_0^2)\) for a given hyperparameter set \(\vect{\theta}\).
%By \emph{profiling out} the estimated variances \((\sigma^2,\sigma_0^2)\) from the model selection criterion, other parameters \(\vect{\theta}\) can be estimated more effectively.
Optimization of \(\vect{\theta}\) can then be achieved by \emph{profiling out} the estimated variances \((\sigma^2,\sigma_0^2)\) from the model selection criterion.

Estimating \(\sigma^2\) and \(\sigma_0^2\) in \eqref{eq:covariance-function} is analogous to variance estimation in linear mixed models of hierarchical data. Among the earlier attempts to find covariance parameters of mixed models, \citet[\S 4 and \S 5]{HARTLEY-1967} applied the steepest ascent method to maximize likelihood using the derivatives of the likelihood function with respect to both the variance of the residual error and the ratio of the two variances as a new parameter. \citet{PATTERSON-1971} maximized likelihood over a set of selected contrast, which is known as restricted MLE. Another notable technique is the Minimum Norm Quadratic Unbiased Estimation (MINQUE) by \cite{RAO-1971a,RAO-1971b,RAO-1972,RAO-1979} for the estimation of covariance components, which is computationally less expensive than MLE but more restrictive. In another approach, \citet{LINDSTORM-1988} applied the expectation-maximization method to estimate the covariance parameter of mixed covariance models. Reviews of classical estimation methods and variance analyses are described, for instance, in \cite{HARVILLE-1977}, \citet{KITANIDIS-1987} and \citet{SEARLE-1971,SEARLE-1995}. 

While any of the above methods can be applied for variance estimation, we show herein that the variance estimation problem can be effectively simplified to a single hyperparameter estimation by a suitable reformulation of the MLE problem, and this significantly reduces the computational cost \emph{and} improves the convergence of the estimation procedure. %Namely, we develop a computational method to effectively estimate the variance parameters of \eqref{eq:covariance-function} based on maximizing the marginal likelihood function, MLF, for a linear Gaussian process regression. 
Our developments can be summarized by the following main contributions:

% In regards to the local noise, an analysis of identifiability and consistency of the noise over sample size has been studied recently by \citet{TANG-2019}. Also, an asymptotic theory for the noise estimator was given by \cite{KIM-2010}.

% \subsection{Our Contribution}
%\paragraph{Our Contribution.}

% \begin{itemize}[leftmargin=6em,wide=0pt]
\begin{itemize}

    % \item In \Cref{prop:der-L}, we derive the derivatives of the marginal likelihood function (MLF) with respect to an arbitrary parameter---later assumed to be \((\sigma^2,\sigma_0^2)\). Conventionally, the model parameters are estimated from the resulting set of coupled algebraic equations using nonlinear iterative numerical procedures. However, this is computationally expensive (as we will show). To avoid nonlinear iterative numerical optimization,  in \Cref{thm:eta} we reduce the formulation of the set of derivatives for the two parameters to a univariate equation that depends only on the ratio of the noise variance over the error variance as a new parameter. This enables the problem to be simplified from a multi-dimensional optimization to a univariate root-finding problem.

    \item We consider a linear model for the mean \(\mu\). The mean function has largely been ignored in the literature as it leads to lengthy formulations for the derivatives of the marginal likelihood function (MLF) and subsequent analysis. By defining suitable variables for our model, we will show in \Cref{prop:der-L} that the derivatives of the MLF with respect to an arbitrary hyperparameter ---later assumed to be \((\sigma^2,\sigma_0^2)\)--- can be represented in a tractable formulation. 

    \item In a conventional gradient-based approach, the model hyperparameters are estimated from the equations describing the derivatives of the MLF using iterative numerical schemes. However, this is computationally expensive (as we will show). In \Cref{thm:eta} we reduce the formulation of the set of derivatives for the two hyperparameters to a univariate equation that depends only on the ratio of the noise variance over the error variance as a new hyperparameter. This enables the problem to be vastly simplified from a multi-dimensional optimization to a univariate root-finding problem.

    %\item[\Cref{prop:sign-indefinite}.] We investigate the properties of the likelihood function. Namely, we show that the quadratic forms of both the first and second-order derivatives are sign-indefinite.

    \item In \Cref{prop:der-bound}, we derive bounds for the derivatives of the MLF, and demonstrate how these can assist the numerical optimization. Moreover, in \Cref{prop:asymptote}, we derive an asymptotic approximation of the derivative of the MLF, which can be used to provide a rough estimate of hyperparameters to initialize the root-finding procedure.
\end{itemize}

We demonstrate the computational advantage of our method through some examples, and we accomplish the following results:

\begin{itemize}

    % \item \emph{Scalability and performance.} We show that our method scales by \(\mathcal{O}(n^{2.5})\) and \(\mathcal{O}(n^2)\) respectively for dense and sparse correlation matrices, and reduces the computational cost by orders of magnitude compared to traditional parameter optimization.
    \item \emph{Performance.} Compared to the traditional hyperparameter optimization with similar scalability  ---\(\mathcal{O}(n^{2.5})\) and \(\mathcal{O}(n^2)\) respectively for dense and sparse correlation matrices--- the computational cost of our method is reduced by orders of magnitude. 
    
    % We show that our method scales by \(\mathcal{O}(n^{2.5})\) and \(\mathcal{O}(n^2)\) respectively for dense and sparse correlation matrices. Compared to the traditional hyperparameter optimization with the similar scalability, the computational cost of our method is reduced by orders of magnitude.
    
    \item \emph{Robustness.} In contrast to traditional hyperparameter optimization, we demonstrate the insensitivity of the presented method to the initial hyperparameter guess, and that the optimal solution can be found by a local search with significantly fewer iterations.
\end{itemize}

%\paragraph{Outline.}
The main methodology is presented in \S \ref{sec:complete-estimate} where we derive derivatives of the MLF and reduce the dimension of the space of the hyperparameter search. In \S \ref{sec:properties}, we derive properties of the MLF and its derivatives---namely bounds and an asymptotic relation. An implementation of our algorithm is given in \S\ref{sec:efficient}. We present numerical experiments in \S\ref{sec:example} to demonstrate the efficiency and flexibility of the method. A summary of the work is given in \S \ref{sec:conclusion}. 
%\paragraph{Code availability.}
A Python program to reproduce the results in this paper can be found at \url{https://github.com/ameli/glearn}.
%The results of this manuscript can be reproduced by a Python program available at \url{https://github.com/ameli/gaussian-process-param-estimation}.

% ======================
% Generalized Regression
% ======================

% \section{Spatial Linear Regression} \label{sec:SLR}
% \section{General Linear Model with Noise} \label{sec:SLR}
\section{Gaussian Process, a Background} \label{sec:SLR}

In \S \ref{sec:model}, we more fully describe the components of our Gaussian process model. In \S \ref{sec:prelim}, we introduce the problem of parameter estimation and present the least squares solution to this estimation problem in the simplified cases where the error variance or noise variance is known.

% =================
% Linear Regression
% =================

\subsection{Model Description} \label{sec:model}

Let \(z: \mathcal{D} \to \mathbb{R}\) be a stochastic function in the open and bounded domain \(\mathcal{D} \in \mathbb{R}^d\). We assume \(\vect{x} \in \mathcal{D}\) represents spatial position, but \(\mathcal{D}\) can be viewed more generally. In practice, only discrete observations \(z_i = z(\vect{x}_i)\), \(i = 1,\dots,n\), of the function \(z(\vect{x})\) are available, which we stack into the column vector\footnote{We use boldface lowercase letters for vectors, boldface upper case letters for matrices, and normal face letters for scalars, including the components of vectors and matrices, such as \(z_i\) and \(K_{ij}\) respectively for the components of the vector \(\vect{z}\) and the matrix \(\tens{K}\). Also, \((\cdot)^\intercal\) denotes the transpose.} \(\vect{z} = [z_1,\dots,z_n]^{\intercal}\). Moreover, we assume each observation, \(z_i\), contains measurement error. %Thus, \(z_i\) can be considered a realization of the stochastic function \(z(\vect{x}_i)\).

A standard model for a non-stationary random process \(z\) is to utilize intrinsic random functions (IRFs) \citep{MATHERON-1973} by
\begin{equation}
    z(\vect{x}) = \mu(\vect{x}) + \delta(\vect{x}) + \epsilon(\vect{x}), \label{eq:model}
\end{equation}
where the deterministic mean function \(\mu(\vect{x})\) is the model trend (or drift). The residual error of the model, \(\delta(\vect{x})\), is a zero-mean intrinsically stationary stochastic process and is characterized by the spatial correlation of data. In contrast to \(\delta(\vect{x})\), the stochastic function \(\epsilon(\vect{x})\) represents the uncertainty of each observation at a given point. Note, \(\epsilon\) has a discrete realization, whereas \(\delta\) has a continuous sample path.  

The three terms \(\mu\), \(\delta\), and \(\epsilon\) in \eqref{eq:model} require further explanation:
\begin{enumerate}[wide]
    \item \emph{Function \(\mu\):} IRFs utilize a parametric linear model for the mean \(\mu\) that consists of a linear combination of admissible deterministic trend functions \(\mu(\vect{x}) = \vect{\basis}^{\intercal}(\vect{x}) \vect{\beta}\) where \(\vect{\basis} = [\basis_1,\dots,\basis_m]^{\intercal}: \mathcal{D} \to \mathbb{R}^{m}\) is a vector of \(m\) prescribed basis functions with \(m < n\). Often the basis functions are chosen to map the input data to a desirable low-dimensional feature space. Preferred basis functions are translation-invariant and orthogonal, which includes exponential, trigonometric, and polynomial functions (see \eg \citet[p. 308]{WACKERNAGEL-2013}). The basis functions at a given location \(\vect{x}_i\) can be used to form a design matrix \(\tens{\X} \in \mathbb{R}^{n \times m}\) with components \(\X_{ij} \coloneqq \basis_j(\vect{x}_i)\). We assume \(\tens{\X}\) has full column rank, \ie \(\operatorname{rank}(\tens{\X}) = m\). The vector \(\vect{\mu} = [\mu_1,\dots,\mu_n]^{\intercal}\) with \(\mu_i = \mu(\vect{x}_i)\) is then given by \(\vect{\mu} = \tens{\X} \vect{\beta}\). The parameter \(\vect{\beta} \in \mathbb{R}^m\) is the unknown column vector of regression coefficients of the basis functions to be estimated from the data.

    \item \emph{Function \(\delta\):}  The function \(\delta\) can be considered the residual error between the underlying process \(z\) and the prescribed mean function \(\mu\). Note that \(\delta\) is second-order stationary. Here, we assume a zero-mean Gaussian process prior for the stochastic function \(\delta\). At the discrete locations \(\vect{x}_i\), \(i=1,\ldots,n\), the random process \(\delta\) is represented by the vector \(\vect{\delta} = [\delta_1,\dots,\delta_n]^{\intercal}\) where \(\delta_i = \delta(\vect{x}_i)\). The covariance of the random vector \(\vect{\delta}\) can be written as \(\sigma^2 \tens{K}\), where \(\sigma^2\) is the variance and is constant over the domain since the process \(\delta\) is second-order stationary. The symmetric and positive-definite matrix \(\tens{K} \in \mathbb{R}^{n \times n}\) is the spatial correlation with components \(K_{ij} = K(\vect{x}_i,\vect{x}_j|\vect{\theta})\), where \(K\) is a prescribed kernel function satisfying Mercer's condition, and may generally depend on some hyperparameters \(\vect{\theta} = (\theta_1,\dots,\theta_p)\) (see \S \ref{sec:example}, in particular \eqref{eq:exp-decay} and \eqref{eq:matern-corr}, for suitable kernel functions). The variance \(\sigma^2\) is generally not known a priori and has to be inferred from the data.

    \item \emph{Function \(\epsilon\):} The stochastic function \(\epsilon\) represents the measurement uncertainty at a given point. We assume the noise has the same variance at all points (homoscedasticity) and moreover is Gaussian, \ie \(\epsilon(\vect{x}) \sim \mathcal{N}(0,\sigma_0^2)\), where \(\sigma_0^2\) may not be known a priori. At the discrete locations \(\vect{x}_i\), the random vector \(\vect{\epsilon} = [\epsilon_1,\dots,\epsilon_n]^{\intercal}\) with \(\epsilon_i = \epsilon(\vect{x}_i)\) has the covariance matrix \( \sigma_0^2 \tens{I}\) where \(\tens{I}\) here is the \(n \times n\) identity matrix.
\end{enumerate}

\begin{remark}
    Often, \(\sigma_0^2\) is referred to as the nugget as introduced by \citet{MATHERON-1962} in spatial statistics (see also \citet[Ch. 2, \S 2.3]{WACKERNAGEL-2013} or \cite[p. 130]{CRESSIE-1993}). The nugget quantifies the micro-scale variability that cannot be distinguished from the data uncertainty. In this context, \(\sigma^2\) quantifies the macro-scale variability of the data. 
\end{remark}

Overall, the semiparametric Gaussian process \eqref{eq:model} represented for the training data \(\vect{z} \sim \mathcal{N}(\vect{\mu},\tens{\Sigma})\) is given by
\begin{equation*}
    \vect{z} = \tens{\X} \vect{\beta} + \vect{\delta} + \vect{\epsilon},
\end{equation*}
with the total covariance of both random vectors \(\vect{\delta}\) and \(\vect{\epsilon}\) as
\begin{equation}
    \tens{\Sigma}_{\sigma^2,\sigma_0^2} = \sigma^2 \tens{K} + \sigma_0^2 \tens{I}, \label{eq:Sigma}
\end{equation}
which is the matrix form of the covariance function \eqref{eq:covariance-function} on training data points. The error variance \(\sigma^2\) and noise variance \(\sigma_0^2\) are treated as hyperparameters and indicated by subscripts on \(\tens{\Sigma}\). Unless needed, we often omit subscript notation for brevity. Note, when the residual of the model is assumed to be spatially uncorrelated, \ie \(\tens{K} = \tens{I}\), the variance of the error and noise are indistinguishable.

% ==========
% Estimation
% ==========

\subsection{Parameter Estimation, Preliminaries} \label{sec:prelim}

We review the generalized least squares and marginal maximum likelihood (MML) estimation of the parameter \(\vect{\beta}\) and hyperparameters \((\sigma^2,\sigma_0^2)\) assuming either of \(\sigma^2\), or \(\sigma_0^2\), or both are known. We estimate a fully unknown set of hyperparameters in \S \ref{sec:complete-estimate}. Also, we assume \((\sigma^2,\sigma_0^2)\) are the only covariance hyperparameters as estimating covariance with more complex parametrization is not the focus of this paper. However, in \S \ref{sec:optimal-parameters}, we provide a numerical example with additional covariance hyperparameters \(\vect{\theta}\) that enable flexibility regarding the choice of the correlation kernel function. 

% --------------------------------------------
% Generalized Least Squares Estimation of Beta
% --------------------------------------------

\subsubsection{Generalized Least Squares Estimation of \texorpdfstring{\(\vect{\beta}\)}{beta}}

The conditional probability density function \(p(\vect{z}|\vect{\beta},\sigma^2,\sigma_0^2)\) of the data \(\vect{z} \sim \mathcal{N}(\tens{\X} \vect{\beta},\tens{\Sigma})\), given the parameters \((\vect{\beta},\sigma^2,\sigma_0^2)\), is the likelihood function given by the multivariate normal distribution,
\begin{equation}
    p(\vect{z}|\vect{\beta},\sigma^2,\sigma_0^2) = \frac{1}{\sqrt{(2 \pi)^n}} | \tens{\Sigma} |^{-\frac{1}{2}} \exp \left( -\frac{1}{2} (\vect{z} - \tens{\X} \vect{\beta})^{\intercal} \tens{\Sigma}^{-1} (\vect{z} - \tens{\X}\vect{\beta}) \right), \label{eq:likelihood}
\end{equation}
where \(|\tens{\Sigma}| > 0\) is the determinant of \(\tens{\Sigma}\).

%A point estimate of optimal parameters can be obtained, for instance, by maximum a posterior (MAP) method if a prior knowledge of parameters is known. The posterior distribution of parameters, \(p(\vect{\beta},\sigma^2,\sigma_0^2|\vect{z})\), is given by the Bayes rule as,
%\begin{equation}
%    p(\vect{\beta},\sigma^2,\sigma_0^2|\vect{z}) \propto p(\vect{z}|\vect{\beta},\sigma^2,\sigma_0^2) p(\vect{\beta},\sigma^2,\sigma_0^2), \label{eq:posterior-parameters}
%\end{equation}
%where \(p(\vect{\beta},\sigma^2,\sigma_0^2)\) is the prior distribution of the parameters. It is often reasonable to assume \(p(\vect{\beta},\sigma^2,\sigma_0^2) = p(\vect{\beta}|\sigma^2,\sigma_0^2) p(\sigma^2,\sigma_0^2)\). For simplicity in this work, we assume a uniform prior for \((\sigma^2,\sigma_0^2)\). In a simplified case, when \(\sigma^2\) and \(\sigma_0^2\) are known, by assuming a uniform improper prior \(p(\vect{\beta}|\sigma^2,\sigma_0^2) = 1\), an estimate of the weights \(\vect{\beta}\) is obtained. This is identical to maximizing the likelihood \eqref{eq:likelihood} by solving \( \partial(\log p(\vect{z}|\vect{\beta},\sigma^2,\sigma_0^2)) / \partial \vect{\beta} = \vect{0}\) for \(\vect{\beta}\) to obtain

In a simplified case, when \(\sigma^2\) and \(\sigma_0^2\) are known, the likelihood \eqref{eq:likelihood} is maximized by solving \( \partial(\log p(\vect{z}|\vect{\beta},\sigma^2,\sigma_0^2)) / \partial \vect{\beta} = \vect{0}\) for \(\vect{\beta}\) to obtain
\begin{equation}
    \hat{\vect{\beta}} = \left( \tens{\X}^{\intercal} \tens{\Sigma}^{-1} \tens{\X} \right)^{-1} \tens{\X}^{\intercal} \tens{\Sigma}^{-1} \vect{z}. \label{eq:beta-gls}
\end{equation}
We use the hat symbol to denote an estimation of a quantity. The solution \(\hat{\vect{\beta}}\) in the above is known by the Aitken's theorem for the generalized least squares estimation of \(\vect{\beta}\) (see \eg \citep[Ch. 13, \S 5]{MAGNUS-2019} and \citep[\S 2.3]{KARIYA-2004}). In a simplified case when the variance \(\sigma^2\) is assumed to be zero, \ie \(\tens{\Sigma} = \sigma_0^2 \tens{I}\), the solution \eqref{eq:beta-gls} reduces to the ordinary least squares estimation of \(\vect{\beta}\) for uncorrelated errors based on the assumptions of the Gauss-Markov theorem (see \eg \citep[Ch 13, \S 3]{MAGNUS-2019}).

% ------------------------------------------------------
% Marginal Likelihood and Estimation of Sigma or Sigma_0
% ------------------------------------------------------

\subsubsection{Marginal Likelihood and Estimation of \texorpdfstring{\(\sigma^2\)}{Error Variance} or \texorpdfstring{\(\sigma_0^2\)}{Noise Variances}} \label{sec:MLE-OR}

To estimate \(\sigma^2\) (when \(\sigma_0^2\) is known) or \(\sigma_0^2\) (when \(\sigma^2\) is known), we maximize the marginal likelihood function where \(\vect{\beta}\) is marginalized out. It is straightforward to derive the marginal likelihood of a Gaussian process (see \eg \citet[p. 247]{MCCULLAGH-1989} or \citet[p. 287]{SEBER-2012}). However, it is beneficial to re-derive the marginal likelihood via a projection operator that will be used in our later development.

We rewrite the argument inside the exponential function in \eqref{eq:likelihood} as,
\begin{equation*}
    (\vect{z} - \tens{\X} \vect{\beta})^{\intercal} \tens{\Sigma}^{-1}(\vect{z} - \tens{\X} \vect{\beta}) = \| \vect{z} - \tens{\X} \vect{\beta} \|^2_{\tens{\Sigma}^{-1}},
\end{equation*}
which is the Mahalanobis norm of the mean deviation \(\vect{z} - \vect{\mu}\) with respect to the metric tensor \(\tens{\Sigma}^{-1}\). Define
\begin{equation}
    \tens{P} \coloneqq \tens{I} - \tens{\X} \left( \tens{\X}^{\intercal} \tens{\Sigma}^{-1} \tens{\X} \right)^{-1} \tens{\X}^{\intercal} \tens{\Sigma}^{-1}, \label{eq:P}
\end{equation}
which is an orthogonal projection matrix with respect to the inner product induced by \(\tens{\Sigma}^{-1}\). The right null space (kernel) of \(\tens{P}\) is spanned by the columns of \(\tens{\X}\) since \(\tens{P} \tens{\X} = \tens{0}\). The left null space (cokernel) of \(\tens{P}\) is spanned by the columns of \(\tens{\X}^{\intercal} \tens{\Sigma}^{-1}\) since \(\tens{\X}^{\intercal} \tens{\Sigma}^{-1} \tens{P} = \tens{0}\). Additionally, define the symmetric matrix,
\begin{equation}
    \tens{M} \coloneqq \tens{\Sigma}^{-1} \tens{P}, \label{eq:M}
\end{equation}
where \(\operatorname{ker}(\tens{M}) = \operatorname{span}(\tens{\X})\) and \(\operatorname{coker}(\tens{M}) = \operatorname{span}(\tens{\X}^{\intercal})\).

\begin{claim}[Orthogonal Decomposition of the Mean Deviation] \label[claim]{claim:decomposition}
    It holds that
    \begin{equation}
        \| \vect{z} - \tens{\X} \vect{\beta} \|^2_{\tens{\Sigma}^{-1}} = \| \vect{z} \|^2_{\tens{M}} + \| \vect{\beta} - \hat{\vect{\beta}} \|^2_{\tens{\X}^{\intercal} \tens{\Sigma}^{-1} \tens{\X}}, \label{eq:pythagorean}
    \end{equation}
    where \(\hat{\vect{\beta}}\) is the generalized least squares estimation of \(\vect{\beta}\) given in \eqref{eq:beta-gls}.
\end{claim}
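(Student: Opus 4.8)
The plan is to prove \eqref{eq:pythagorean} by completing the square: I would split the mean deviation into the generalized-least-squares residual plus a term lying in the column space of $\tens{\X}$, and then show that these two pieces are orthogonal with respect to the $\tens{\Sigma}^{-1}$ inner product. Concretely, I would first add and subtract $\tens{\X}\hat{\vect{\beta}}$ to write $\vect{z} - \tens{\X}\vect{\beta} = (\vect{z} - \tens{\X}\hat{\vect{\beta}}) + \tens{\X}(\hat{\vect{\beta}} - \vect{\beta})$, noting that the first summand equals $\tens{P}\vect{z}$ directly from the definitions \eqref{eq:P} and \eqref{eq:beta-gls}. Expanding the Mahalanobis norm $\| \cdot \|^2_{\tens{\Sigma}^{-1}}$ of this sum then produces three terms: a pure residual term, a pure $\tens{\X}(\hat{\vect{\beta}}-\vect{\beta})$ term, and a cross term.

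The cross term is $(\vect{z}-\tens{\X}\hat{\vect{\beta}})^{\intercal} \tens{\Sigma}^{-1}\tens{\X}(\hat{\vect{\beta}}-\vect{\beta})$, and I expect it to vanish because $\tens{\X}^{\intercal} \tens{\Sigma}^{-1}(\vect{z}-\tens{\X}\hat{\vect{\beta}}) = \tens{\X}^{\intercal} \tens{\Sigma}^{-1}\tens{P}\vect{z} = \vect{0}$; this is precisely the cokernel property $\tens{\X}^{\intercal}\tens{\Sigma}^{-1}\tens{P} = \tens{0}$ already recorded after \eqref{eq:P}, equivalently the normal equation satisfied by $\hat{\vect{\beta}}$. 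The pure $\tens{\X}(\hat{\vect{\beta}}-\vect{\beta})$ term rearranges immediately into $(\vect{\beta}-\hat{\vect{\beta}})^{\intercal} \tens{\X}^{\intercal} \tens{\Sigma}^{-1}\tens{\X}(\vect{\beta}-\hat{\vect{\beta}}) = \|\vect{\beta}-\hat{\vect{\beta}}\|^2_{\tens{\X}^{\intercal}\tens{\Sigma}^{-1}\tens{\X}}$, which is the second term on the right of \eqref{eq:pythagorean}.

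It then remains to identify the residual term $(\tens{P}\vect{z})^{\intercal} \tens{\Sigma}^{-1}(\tens{P}\vect{z}) = \vect{z}^{\intercal} \tens{P}^{\intercal} \tens{\Sigma}^{-1}\tens{P}\vect{z}$ with $\|\vect{z}\|^2_{\tens{M}} = \vect{z}^{\intercal} \tens{\Sigma}^{-1}\tens{P}\vect{z}$, and this is the one step—indeed the only real obstacle—that uses the structural properties of $\tens{P}$ rather than just the definition of $\hat{\vect{\beta}}$. I would establish $\tens{P}^{\intercal} \tens{\Sigma}^{-1}\tens{P} = \tens{\Sigma}^{-1}\tens{P} = \tens{M}$ by combining the symmetry of $\tens{M} = \tens{\Sigma}^{-1}\tens{P}$, which gives $\tens{P}^{\intercal} \tens{\Sigma}^{-1} = \tens{\Sigma}^{-1}\tens{P}$, with the idempotency $\tens{P}^2 = \tens{P}$, which follows from $\tens{P}\tens{\X} = \tens{0}$. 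Chaining these, $\tens{P}^{\intercal} \tens{\Sigma}^{-1}\tens{P} = (\tens{\Sigma}^{-1}\tens{P})\tens{P} = \tens{\Sigma}^{-1}\tens{P}^2 = \tens{\Sigma}^{-1}\tens{P} = \tens{M}$, so the residual term collapses to $\|\vect{z}\|^2_{\tens{M}}$ and the three pieces assemble into \eqref{eq:pythagorean}. Since the symmetry of $\tens{M}$ and $\tens{P}\tens{\X}=\tens{0}$ are already asserted in the text, these verifications are short, and I would present them inline rather than as separate lemmas; everything else is bookkeeping.
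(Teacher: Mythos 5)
Your proposal is correct and follows essentially the same route as the paper: the two summands in your add-and-subtract decomposition are exactly the paper's $\tens{P}(\vect{z}-\tens{\X}\vect{\beta})$ and $(\tens{I}-\tens{P})(\vect{z}-\tens{\X}\vect{\beta})$, and the key identities you use ($\tens{P}\tens{\X}=\tens{0}$, idempotency, and $\tens{P}^{\intercal}\tens{\Sigma}^{-1}\tens{P}=\tens{\Sigma}^{-1}\tens{P}=\tens{M}$) are the same ones the paper invokes. The only difference is presentational: the paper asserts the orthogonality abstractly as a Pythagorean relation for the complementary projections, whereas you verify it by expanding the square and killing the cross term via the normal equations.
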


\begin{proof}
    Since \(\tens{I} - \tens{P}\) is the complement projection operator of \(\tens{P}\), we can write the orthogonality relation \(\tens{P}(\vect{z} - \tens{\X} \vect{\beta}) \perp_{\tens{\Sigma}^{-1}} (\tens{I} - \tens{P}) (\vect{z} - \tens{\X}\vect{\beta}) \) where \(\perp_{\tens{\Sigma}^{-1}}\) denotes orthogonal with respect to the inner product induced by the metric tensor \(\tens{\Sigma}^{-1}\). As a result, the Pythagorean relationship,
    \begin{equation}
        \| \vect{z} - \tens{\X} \vect{\beta} \|^2_{\tens{\Sigma}^{-1}} = \| \tens{P} (\vect{z} - \tens{\X} \vect{\beta}) \|^2_{\tens{\Sigma}^{-1}} + \| (\tens{I} - \tens{P}) (\vect{z} - \tens{\X} \vect{\beta} ) \|^2_{\tens{\Sigma}^{-1}}, \label{eq:pythagorean-2}
    \end{equation}
    holds. Recall that \(\tens{P} \tens{\X} = \tens{0}\). Thus, \( \| \tens{P}(\vect{z} - \tens{\X} \vect{\beta}) \|^2_{\tens{\Sigma}^{-1}} =  \| \tens{P} \vect{z} \|^2_{\tens{\Sigma}^{-1}} = \vect{z}^{\intercal} (\tens{P}^{\intercal} \tens{\Sigma}^{-1} \tens{P}) \vect{z}\). Realize from \eqref{eq:P} that \(\tens{P}^{\intercal} = \tens{\Sigma}^{-1} \tens{P} \tens{\Sigma}\), so, \(\tens{P}^{\intercal} \tens{\Sigma}^{-1}\tens{P} = \tens{\Sigma}^{-1}\tens{P}^2\). But, a projection operator is idempotent, \ie \(\tens{P}^2 = \tens{P}\) (see \eg \citep[\S 12.3]{GRAYBILL-1983}), simplifying \(\tens{P}^{\intercal} \tens{\Sigma}^{-1} \tens{P} = \tens{\Sigma}^{-1} \tens{P} = \tens{M}\), and the first term on the right side of \eqref{eq:pythagorean-2} becomes \( \| \tens{P}(\vect{z} - \tens{\X} \vect{\beta}) \|^2_{\tens{\Sigma}^{-1}} = \| \vect{z} \|^2_{\tens{M}}\). As for the second right term of \eqref{eq:pythagorean-2}, we have \( (\tens{I} - \tens{P})(\vect{z} - \tens{\X} \vect{\beta}) = (\tens{I} - \tens{P}) \vect{z} - \tens{\X} \vect{\beta} \). From \eqref{eq:beta-gls} and \eqref{eq:P} we have \((\tens{I} - \tens{P})\vect{z} = \tens{\X} \hat{\vect{\beta}}\). Thus, the second right term of \eqref{eq:pythagorean-2} becomes \(\| -\tens{\X}(\vect{\beta} - \hat{\vect{\beta}}) \|^2_{\tens{\Sigma}^{-1}}\) which yields the second right term of \eqref{eq:pythagorean}.
\end{proof}

By the orthogonal decomposition in \Cref{claim:decomposition}, the likelihood function \eqref{eq:likelihood} becomes
\begin{equation*}
    p(\vect{z}|\vect{\beta},\sigma^2,\sigma_0^2) = \frac{1}{\sqrt{(2 \pi)^n}} |\tens{\Sigma}|^{-\frac{1}{2}} \exp \left( -\frac{1}{2} \| \vect{z} \|^2_{\tens{M}} \right) \exp \left( -\frac{1}{2} \| \vect{\beta} - \hat{\vect{\beta}} \|^2_{\tens{\X}^{\intercal} \tens{\Sigma}^{-1} \tens{\X}} \right).
\end{equation*}
Integrating the above (see \eg \cite[Theorem 10.6.1]{GRAYBILL-1983}) in the domain \(\vect{\beta} \in \mathbb{R}^{m} \) yields the marginal likelihood,
\begin{equation}
    p(\vect{z}|\sigma^2,\sigma_0^2) = \frac{1}{\sqrt{(2 \pi)^{n-m}}} |\tens{\Sigma}|^{-\frac{1}{2}} |\tens{\X}^{\intercal} \tens{\Sigma}^{-1} \tens{\X}|^{-\frac{1}{2}} \exp \left( -\frac{1}{2} \| \vect{z} \|^2_{\tens{M}} \right). \label{eq:marginal-likelihood}
\end{equation}

Finding the maximum of the above likelihood function when either of \(\sigma^2\) or \(\sigma_0^2\) is zero is straightforward. Namely, when \(\sigma_0^2 = 0\), and hence, \(\tens{\Sigma} = \sigma^2 \tens{K}\) and \(\tens{P} = \tens{I} - \tens{\X} (\tens{\X}^{\intercal} \tens{K}^{-1} \tens{\X})^{-1} \tens{\X}^{\intercal} \tens{K}^{-1}\), it can be shown (see \eg \cite[Theorem 3.3]{SEBER-2012}) that the above marginal likelihood attains its maximum at
\begin{equation}
    \hat{\sigma}^2 = \frac{1}{n-m} \| \vect{z} - \tens{\X} \hat{\vect{\beta}} \|^2_{\tens{K}^{-1}} = \frac{1}{n-m} \| \vect{z} \|^2_{\tens{K}^{-1} \tens{P}}.
    \label{eq:variance-trivial}
\end{equation}
The right equality in the above can be verified by setting \(\vect{\beta} = \hat{\vect{\beta}}\) in \Cref{claim:decomposition}. Conversely, if \(\sigma^2 = 0\), and hence \(\tens{\Sigma} = \sigma_0^2 \tens{I}\) and \(\tens{P} = \tens{I} - \tens{\X} (\tens{\X}^{\intercal} \tens{\X})^{-1} \tens{\X}^{\intercal}\), an estimate is given by
\begin{equation}
    \hat{\sigma}_0^2 = \frac{1}{n-m} \| \vect{z} - \tens{\X} \hat{\vect{\beta}} \|^2 = \frac{1}{n-m} \| \vect{z} \|^2_{\tens{P}}.
    \label{eq:nugget-trivial}
\end{equation}
The solutions \eqref{eq:variance-trivial} and \eqref{eq:nugget-trivial} can also be derived by the restricted maximum likelihood method (ReML) of \cite{PATTERSON-1971} using the distribution of contrasts (see also \citet[p. 170]{STEIN-1999}). ReML takes into account the loss of degrees of freedom by the model (here, \(m\)), which eliminates the bias of the solution. Because of this, the solutions \eqref{eq:variance-trivial} and \eqref{eq:nugget-trivial} are unbiased. We also note that if we had started with maximizing the likelihood function (see \cite[Equation 14]{HARTLEY-1967}) instead of the \emph{marginal} likelihood function, the solutions \eqref{eq:variance-trivial} and \eqref{eq:nugget-trivial} would become biased.

% ==========================================================
% Estimation of Variance and Nugget From Marginal Likelihood
% ==========================================================

\section{MML Estimation of Error and Noise Variances}  \label{sec:complete-estimate}

When \(\sigma \sigma_0 \neq 0\), an analytical solution for \((\hat{\sigma},\hat{\sigma}_0)\) with maximum marginal likelihood estimation \eqref{eq:marginal-likelihood} is not known. Usually, a nonlinear numerical optimization is performed to find the hyperparameters. In this section, we derive an analytical approach to reduce the space of the hyperparameters so that the hyperparameter optimization reduces to a much simpler univariate root-finding problem.

% -------------------------------------------
% Derivatives of Marginal Likelihood Function
% -------------------------------------------

\subsection{Derivatives of the Marginal Likelihood Function} \label{sec:derivatives}

To find optimal hyperparameters, we maximize the logarithm of the marginal likelihood function \eqref{eq:marginal-likelihood}. Define
\begin{equation*}
    \ell \coloneqq \log p(\vect{z}|\sigma^2,\sigma_0^2),
\end{equation*}
which is,
\begin{equation}
    \ell = -\frac{(n-m)}{2} \log (2 \pi) - \frac{1}{2} \log |\tens{\Sigma}| - \frac{1}{2} \log |\tens{\X}^{\intercal} \tens{\Sigma}^{-1} \tens{\X}| - \frac{1}{2} \| \vect{z} \|^2_{\tens{M}}. \label{eq:log-marginal-likelihood}
\end{equation}
To find the maximum of \(\ell\) with respect to a hyperparameter, in \Cref{prop:der-L} below, we derive analytic expressions for the first and higher-order derivatives of \(\ell\). However, in practice, only up to the second-order derivative is needed to maximize \(\ell\).  For generality, we consider derivatives with respect to an arbitrary hyperparameter, called \(\theta\). Here, \(\theta\) will usually denote either \(\sigma^2\) or \(\sigma_0^2\), or their combination as presented in \S \ref{sec:estimate}. However, \(\theta\) can represent any desired hyperparameters. 

\begin{lemma}[Derivative of \(\tens{M}\)] \label[lemma]{lem:der-M}
    Let \(\theta\) denote a hyperparameter of \(\tens{\Sigma}\) and accordingly \(\tens{M}\). Then,
    \begin{subequations}
    \begin{equation}
        \dot{\tens{M}} = - \tens{M} \dot{\tens{\Sigma}} \tens{M}, \label{eq:M-dot}
    \end{equation}
    where \(\dot{\tens{M}} = \partial \tens{M} / \partial \theta\) and \(\dot{\tens{\Sigma}} = \partial \tens{\Sigma} / \partial \theta\). Furthermore, if \(\tens{\Sigma}\) is a linear function of the hyperparameter \(\theta\) (such as for \(\sigma^2\) and \(\sigma_0^2\) in \eqref{eq:Sigma}), then the \(k\)\textsuperscript{th} order derivative of \(\tens{M}\) is,
    \begin{equation}
        \frac{\partial^k \tens{M}}{\partial \theta^k} = (-1)^k k! \tens{M} (\dot{\tens{\Sigma}} \tens{M})^k. \label{eq:der-k-M}
    \end{equation}
    \end{subequations}
\end{lemma}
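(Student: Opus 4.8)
My plan is to first record two structural identities for $\tens{M}$ and then obtain both formulas by differentiating them. From $\tens{M} = \tens{\Sigma}^{-1}\tens{P}$ together with the idempotency $\tens{P}^2 = \tens{P}$ used in the proof of \Cref{claim:decomposition}, I get the reflexive-inverse identity $\tens{M}\tens{\Sigma}\tens{M} = \tens{\Sigma}^{-1}\tens{P}\tens{\Sigma}\tens{\Sigma}^{-1}\tens{P} = \tens{\Sigma}^{-1}\tens{P}^2 = \tens{M}$. I also note $\tens{\Sigma}\tens{M} = \tens{\Sigma}\tens{\Sigma}^{-1}\tens{P} = \tens{P}$, and, by symmetry of $\tens{M}$ and $\tens{\Sigma}$, $\tens{M}\tens{\Sigma} = (\tens{\Sigma}\tens{M})^{\intercal} = \tens{P}^{\intercal}$. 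Finally, since $\operatorname{ker}(\tens{M}) = \operatorname{span}(\tens{\X})$ gives $\tens{M}\tens{\X} = \tens{0}$ and $\tens{\X}^{\intercal}\tens{M} = \tens{0}$, and $\tens{\X}$ does not depend on $\theta$, differentiating these relations yields $\dot{\tens{M}}\tens{\X} = \tens{0}$ and $\tens{\X}^{\intercal}\dot{\tens{M}} = \tens{0}$.

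To establish \eqref{eq:M-dot}, I differentiate the identity $\tens{M}\tens{\Sigma}\tens{M} = \tens{M}$ by the product rule, obtaining $\dot{\tens{M}}\tens{\Sigma}\tens{M} + \tens{M}\dot{\tens{\Sigma}}\tens{M} + \tens{M}\tens{\Sigma}\dot{\tens{M}} = \dot{\tens{M}}$. Substituting $\tens{\Sigma}\tens{M} = \tens{P}$ and $\tens{M}\tens{\Sigma} = \tens{P}^{\intercal}$ turns the outer terms into $\dot{\tens{M}}\tens{P}$ and $\tens{P}^{\intercal}\dot{\tens{M}}$. The crucial simplification comes from the explicit form of $\tens{P}$ in \eqref{eq:P}: its correction term has $\tens{\X}$ as its leftmost factor and $\tens{\X}^{\intercal}$ as the rightmost factor of $\tens{P}^{\intercal}$, so the identities $\dot{\tens{M}}\tens{\X} = \tens{0}$ and $\tens{\X}^{\intercal}\dot{\tens{M}} = \tens{0}$ annihilate those corrections and give $\dot{\tens{M}}\tens{P} = \dot{\tens{M}}$ and $\tens{P}^{\intercal}\dot{\tens{M}} = \dot{\tens{M}}$. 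The differentiated identity then collapses to $2\dot{\tens{M}} + \tens{M}\dot{\tens{\Sigma}}\tens{M} = \dot{\tens{M}}$, which rearranges immediately to $\dot{\tens{M}} = -\tens{M}\dot{\tens{\Sigma}}\tens{M}$.

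For the $k$th-order formula \eqref{eq:der-k-M} I proceed by induction, with the base case $k=1$ being \eqref{eq:M-dot}. The linearity of $\tens{\Sigma}$ in $\theta$ enters precisely here: it makes $\dot{\tens{\Sigma}}$ a constant matrix, so all higher derivatives of $\tens{\Sigma}$ vanish and differentiation acts only on the $\tens{M}$ factors. Assuming $\partial^k\tens{M}/\partial\theta^k = (-1)^k k!\,\tens{M}(\dot{\tens{\Sigma}}\tens{M})^k$, I differentiate the alternating product $\tens{M}(\dot{\tens{\Sigma}}\tens{M})^k = \tens{M}\dot{\tens{\Sigma}}\tens{M}\cdots\dot{\tens{\Sigma}}\tens{M}$, which contains exactly $k+1$ copies of $\tens{M}$. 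By the product rule each copy in turn is replaced by $\dot{\tens{M}} = -\tens{M}\dot{\tens{\Sigma}}\tens{M}$; since this substitution merely splices an extra $\dot{\tens{\Sigma}}\tens{M}$ (with a sign) into the alternating chain, every one of the $k+1$ resulting terms equals $-\tens{M}(\dot{\tens{\Sigma}}\tens{M})^{k+1}$. Summing gives $-(k+1)\tens{M}(\dot{\tens{\Sigma}}\tens{M})^{k+1}$, and multiplying by the prefactor $(-1)^k k!$ produces $(-1)^{k+1}(k+1)!\,\tens{M}(\dot{\tens{\Sigma}}\tens{M})^{k+1}$, closing the induction.

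The step I expect to require the most care is the collapse $\dot{\tens{M}}\tens{P} = \dot{\tens{M}}$: it hinges on observing that $\dot{\tens{M}}$ inherits the same null-space annihilation as $\tens{M}$, which is what lets the differentiated identity close on $\dot{\tens{M}}$ rather than leaving an underdetermined relation. An alternative that avoids this observation is to differentiate the explicit expansion $\tens{M} = \tens{\Sigma}^{-1} - \tens{\Sigma}^{-1}\tens{\X}(\tens{\X}^{\intercal}\tens{\Sigma}^{-1}\tens{\X})^{-1}\tens{\X}^{\intercal}\tens{\Sigma}^{-1}$ term by term; writing $\tens{Q} \coloneqq \tens{\Sigma}^{-1}\tens{\X}(\tens{\X}^{\intercal}\tens{\Sigma}^{-1}\tens{\X})^{-1}\tens{\X}^{\intercal}\tens{\Sigma}^{-1}$, the four resulting pieces reassemble as $-(\tens{\Sigma}^{-1} - \tens{Q})\dot{\tens{\Sigma}}(\tens{\Sigma}^{-1} - \tens{Q}) = -\tens{M}\dot{\tens{\Sigma}}\tens{M}$, though this route is considerably more calculational.
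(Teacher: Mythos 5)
Your proof is correct, but it reaches \eqref{eq:M-dot} by a genuinely different route than the paper. The paper differentiates the explicit four-term expansion \(\tens{M} = \tens{\Sigma}^{-1} - \tens{\Sigma}^{-1}\tens{\X}(\tens{\X}^{\intercal}\tens{\Sigma}^{-1}\tens{\X})^{-1}\tens{\X}^{\intercal}\tens{\Sigma}^{-1}\) term by term using \(\partial(\tens{A}^{-1})/\partial\theta = -\tens{A}^{-1}\dot{\tens{A}}\tens{A}^{-1}\) and then refactors the resulting four pieces into \(-\tens{M}\dot{\tens{\Sigma}}\tens{M}\) --- exactly the ``considerably more calculational'' alternative you sketch at the end. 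You instead differentiate the structural identity \(\tens{M}\tens{\Sigma}\tens{M} = \tens{M}\) and close the resulting relation using \(\tens{\Sigma}\tens{M} = \tens{P}\), \(\tens{M}\tens{\Sigma} = \tens{P}^{\intercal}\) (which needs the symmetry of \(\tens{M}\), asserted in \eqref{eq:M} and visible from the expansion), and the annihilation identities \(\dot{\tens{M}}\tens{\X} = \tens{0}\), \(\tens{\X}^{\intercal}\dot{\tens{M}} = \tens{0}\) obtained by differentiating \(\tens{M}\tens{\X} = \tens{0}\). Each step checks out: the collapse \(\dot{\tens{M}}\tens{P} = \dot{\tens{M}}\) and \(\tens{P}^{\intercal}\dot{\tens{M}} = \dot{\tens{M}}\) follows because the correction term of \(\tens{P}\) has \(\tens{\X}\) as its leftmost factor (and \(\tens{\X}^{\intercal}\) as the rightmost factor of \(\tens{P}^{\intercal}\)), so the differentiated identity reduces to \(2\dot{\tens{M}} + \tens{M}\dot{\tens{\Sigma}}\tens{M} = \dot{\tens{M}}\) as you claim. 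What your approach buys is a coordinate-free argument that exposes \emph{why} the derivative has this clean form --- \(\tens{M}\) is a reflexive generalized inverse of \(\tens{\Sigma}\) whose kernel and cokernel are fixed --- at the cost of needing the extra observation that \(\dot{\tens{M}}\) inherits the null-space annihilation; the paper's computation is longer but entirely mechanical and needs no such observation. Your induction for \eqref{eq:der-k-M} is the same argument the paper leaves implicit (it only remarks that linearity makes \(\ddot{\tens{\Sigma}}\) vanish), and your count of \(k+1\) identical terms, each equal to \(-\tens{M}(\dot{\tens{\Sigma}}\tens{M})^{k+1}\), is correct.
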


\begin{proof}
    From \eqref{eq:P} and \eqref{eq:M} we have \(\tens{M} = \tens{\Sigma}^{-1} - \tens{\Sigma}^{-1} \tens{\X} (\tens{\X}^{\intercal} \tens{\Sigma}^{-1} \tens{\X})^{-1} \tens{\X}^{\intercal} \tens{\Sigma}^{-1}\). To take the derivative of the inverse of a matrix, such as \(\tens{\Sigma}^{-1}\) or \((\tens{\X}^{\intercal} \tens{\Sigma}^{-1} \tens{\X})^{-1}\), we use the identity \(\partial (\tens{A}^{-1}) / \partial \theta = - \tens{A}^{-1} (\partial \tens{A} / \partial \theta) \tens{A}^{-1}\), which can be shown by taking the derivative of \(\tens{A} \tens{A}^{-1}  = \tens{I}\). Thus,
    \begin{align*}
        \dot{\tens{M}} = &-\tens{\Sigma}^{-1} \dot{\tens{\Sigma}} \tens{\Sigma}^{-1} + (\tens{\Sigma}^{-1} \dot{\tens{\Sigma}} \tens{\Sigma}^{-1}) \tens{\X} (\tens{\X}^{\intercal} \tens{\Sigma}^{-1} \tens{\X})^{-1} \tens{\X}^{\intercal} \tens{\Sigma}^{-1} \\
        & - \tens{\Sigma}^{-1} \tens{\X} (\tens{\X}^{\intercal} \tens{\Sigma}^{-1} \tens{\X})^{-1} \tens{\X}^{\intercal} (\tens{\Sigma}^{-1} \dot{\tens{\Sigma}} \tens{\Sigma}^{-1}) \tens{\X} (\tens{\X}^{\intercal} \tens{\Sigma}^{-1} \tens{\X})^{-1} \tens{\X}^{\intercal} \tens{\Sigma}^{-1} \\
        & + \tens{\Sigma}^{-1} \tens{\X} (\tens{\X}^{\intercal} \tens{\Sigma}^{-1} \tens{\X})^{-1} \tens{\X}^{\intercal} (\tens{\Sigma}^{-1} \dot{\tens{\Sigma}} \tens{\Sigma}^{-1}).
    \end{align*}
    The above terms can be factored into the product,
    \begin{equation*}
        \dot{\tens{M}} = - \underbrace{\left( \tens{\Sigma}^{-1} - \tens{\Sigma}^{-1} \tens{\X} (\tens{\X}^{\intercal} \tens{\Sigma}^{-1} \tens{\X})^{-1} \tens{\X}^{\intercal} \tens{\Sigma}^{-1} \right)}_{\tens{M}} \dot{\tens{\Sigma}} \underbrace{\left( \tens{\Sigma}^{-1} - \tens{\Sigma}^{-1} \tens{\X} (\tens{\X}^{\intercal} \tens{\Sigma}^{-1} \tens{\X})^{-1} \tens{\X}^{\intercal} \tens{\Sigma}^{-1} \right)}_{\tens{M}},
    \end{equation*}
    which concludes \eqref{eq:M-dot}. The relation \eqref{eq:der-k-M} can be shown by induction and using the fact that if \(\tens{\Sigma}\) is linear in \(\theta\), the second derivative, \(\ddot{\tens{\Sigma}}\), vanishes.
\end{proof}

\begin{proposition}[Derivatives of Log Marginal Likelihood Function] \label[proposition]{prop:der-L}
    Let \(\theta\) denote a hyperparameter of the matrix \(\tens{\Sigma}\) and accordingly the log marginal likelihood function \(\ell\) in \eqref{eq:log-marginal-likelihood}. Then,
    \begin{subequations}
    \begin{equation}
        \frac{\partial \ell}{\partial \theta} = -\frac{1}{2} \trace(\dot{\tens{\Sigma}} \tens{M}) + \frac{1}{2} \vect{z}^{\intercal} \left( \tens{M} \dot{\tens{\Sigma}} \tens{M} \right) \vect{z}, \label{eq:der-L}
    \end{equation}
    where \(\dot{\tens{\Sigma}} = \partial \tens{\Sigma} / \partial \theta\). Furthermore, if \(\tens{\Sigma}\) is a linear function of the hyperparameter \(\theta\), then, the \(k\)\textsuperscript{th} order derivative of \(\ell\) is,
    \begin{equation}
        \frac{\partial^k \ell}{\partial \theta^k} = \frac{1}{2}(-1)^k (k-1)! \left( \trace\Big( (\dot{\tens{\Sigma}} \tens{M})^k \Big) - k\vect{z}^{\intercal} \Big( \tens{M} (\dot{\tens{\Sigma}} \tens{M})^k \Big) \vect{z} \right).
        \label{eq:der-k-L}
    \end{equation}
    \end{subequations}
\end{proposition}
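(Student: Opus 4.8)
The plan is to differentiate $\ell$ in \eqref{eq:log-marginal-likelihood} term by term, combining standard matrix-calculus identities with \Cref{lem:der-M}. The constant $-\tfrac{n-m}{2}\log(2\pi)$ contributes nothing. For the two log-determinant terms I would use Jacobi's formula $\partial_\theta \log|\tens{A}| = \trace(\tens{A}^{-1}\dot{\tens{A}})$, and for the quadratic form $-\tfrac{1}{2}\|\vect{z}\|^2_{\tens{M}} = -\tfrac{1}{2}\vect{z}^{\intercal}\tens{M}\vect{z}$ I would invoke \eqref{eq:M-dot} directly, so that the heavy lifting on the projection-weighted matrix $\tens{M}$ is already packaged in the lemma.

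For the first determinant, $\partial_\theta\bigl(-\tfrac{1}{2}\log|\tens{\Sigma}|\bigr) = -\tfrac{1}{2}\trace(\tens{\Sigma}^{-1}\dot{\tens{\Sigma}})$. For the second, set $\tens{A} = \tens{\X}^{\intercal}\tens{\Sigma}^{-1}\tens{\X}$, so that the inverse-derivative identity gives $\dot{\tens{A}} = -\tens{\X}^{\intercal}\tens{\Sigma}^{-1}\dot{\tens{\Sigma}}\tens{\Sigma}^{-1}\tens{\X}$, and Jacobi's formula together with the cyclic property of the trace yields $\partial_\theta\bigl(-\tfrac{1}{2}\log|\tens{A}|\bigr) = +\tfrac{1}{2}\trace\bigl(\tens{\Sigma}^{-1}\tens{\X}(\tens{\X}^{\intercal}\tens{\Sigma}^{-1}\tens{\X})^{-1}\tens{\X}^{\intercal}\tens{\Sigma}^{-1}\dot{\tens{\Sigma}}\bigr)$. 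The decisive step is to recognize that the resulting bracketed operator $\tens{\Sigma}^{-1} - \tens{\Sigma}^{-1}\tens{\X}(\tens{\X}^{\intercal}\tens{\Sigma}^{-1}\tens{\X})^{-1}\tens{\X}^{\intercal}\tens{\Sigma}^{-1}$ is precisely $\tens{M}$, as expanded at the start of the proof of \Cref{lem:der-M}, so the two determinant contributions collapse to $-\tfrac{1}{2}\trace(\tens{M}\dot{\tens{\Sigma}}) = -\tfrac{1}{2}\trace(\dot{\tens{\Sigma}}\tens{M})$. For the quadratic term, \eqref{eq:M-dot} gives $-\tfrac{1}{2}\vect{z}^{\intercal}\dot{\tens{M}}\vect{z} = \tfrac{1}{2}\vect{z}^{\intercal}(\tens{M}\dot{\tens{\Sigma}}\tens{M})\vect{z}$, and adding the pieces produces \eqref{eq:der-L}.

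For the higher-order formula \eqref{eq:der-k-L}, I would differentiate \eqref{eq:der-L} a further $k-1$ times, now using linearity of $\tens{\Sigma}$ in $\theta$ so that $\dot{\tens{\Sigma}}$ is constant and $\ddot{\tens{\Sigma}} = \tens{0}$. In the trace term only $\tens{M}$ varies, so applying \eqref{eq:der-k-M} at order $k-1$ and using cyclicity gives $\tfrac{1}{2}(-1)^k(k-1)!\,\trace\bigl((\dot{\tens{\Sigma}}\tens{M})^k\bigr)$. For the quadratic term I would rewrite $\tfrac{1}{2}\vect{z}^{\intercal}(\tens{M}\dot{\tens{\Sigma}}\tens{M})\vect{z} = -\tfrac{1}{2}\vect{z}^{\intercal}\dot{\tens{M}}\vect{z}$ via \eqref{eq:M-dot}, so that taking $k-1$ further derivatives amounts to applying \eqref{eq:der-k-M} at order $k$, giving $-\tfrac{1}{2}(-1)^k k!\,\vect{z}^{\intercal}\tens{M}(\dot{\tens{\Sigma}}\tens{M})^k\vect{z}$. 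Factoring out $\tfrac{1}{2}(-1)^k(k-1)!$ and writing $k! = k\,(k-1)!$ then recovers \eqref{eq:der-k-L} exactly.

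The main obstacle is the recombination in the first derivative: recognizing that the two separately computed log-determinant contributions assemble into the single trace $-\tfrac{1}{2}\trace(\dot{\tens{\Sigma}}\tens{M})$. This hinges on propagating the chain rule correctly through $\tens{\Sigma}^{-1}$ inside $\tens{\X}^{\intercal}\tens{\Sigma}^{-1}\tens{\X}$ and on using cyclic invariance of the trace to line the factors up with the explicit expansion of $\tens{M}$. Once that identification is in hand, the extension to order $k$ is essentially bookkeeping, since both building blocks $-\tfrac{1}{2}\trace(\dot{\tens{\Sigma}}\tens{M})$ and $-\tfrac{1}{2}\vect{z}^{\intercal}\dot{\tens{M}}\vect{z}$ are governed directly by \eqref{eq:der-k-M}; the only subtlety is the factor of $k$ distinguishing the $(k-1)$st derivative of $\tens{M}$ appearing in the trace from the $k$th derivative of $\tens{M}$ appearing in the quadratic form.
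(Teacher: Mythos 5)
Your proposal is correct and follows essentially the same route as the paper: Jacobi's formula for both log-determinants, the inverse-derivative identity propagated through \(\tens{\X}^{\intercal}\tens{\Sigma}^{-1}\tens{\X}\), cyclic invariance of the trace to recombine the two determinant contributions into \(-\tfrac{1}{2}\trace(\dot{\tens{\Sigma}}\tens{M})\), and \Cref{lem:der-M} for the quadratic form. Your treatment of the \(k\)th-order case, applying \eqref{eq:der-k-M} at orders \(k-1\) and \(k\) to the two pieces of the first derivative, is just a slightly more explicit rendering of the induction the paper invokes, and the factor-of-\(k\) bookkeeping checks out.
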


\begin{proof}
    To take the derivative of the determinants in \eqref{eq:log-marginal-likelihood}, we use the Jacobi formula, \ie \(\partial |\tens{A}| / \partial \theta = |\tens{A}| \trace(\tens{A}^{-1} \partial \tens{A}/\partial \theta)\). Thus, the derivative of \eqref{eq:log-marginal-likelihood} is,
    \begin{equation*}
        \frac{\partial \ell}{\partial \theta} = -\frac{1}{2} \trace(\tens{\Sigma}^{-1} \dot{\tens{\Sigma}}) + \frac{1}{2} \trace\left( (\tens{\X}^{\intercal} \tens{\Sigma}^{-1} \tens{\X})^{-1} \tens{\X}^{\intercal} (\tens{\Sigma}^{-1} \dot{\tens{\Sigma}} \tens{\Sigma}^{-1}) \tens{\X} \right) + \frac{1}{2} \vect{z}^{\intercal} \left(\tens{M} \dot{\tens{\Sigma}} \tens{M} \right) \vect{z}.
    \end{equation*}
    For the second term on the right, we used \(\partial (\tens{\Sigma}^{-1}) / \partial \theta = -\tens{\Sigma}^{-1} \dot{\tens{\Sigma}} \tens{\Sigma}^{-1}\). For the third term on the right, \Cref{lem:der-M} for \(\dot{\tens{M}}\) was applied. For both the first and second terms on the right, we can use the cyclic property of trace to write \(\trace(\tens{\Sigma}^{-1} \dot{\tens{\Sigma}}) = \trace(\dot{\tens{\Sigma}} \tens{\Sigma}^{-1})\) and
    \begin{equation*}
        \trace\left( (\tens{\X}^{\intercal} \tens{\Sigma}^{-1} \tens{\X})^{-1} \tens{\X}^{\intercal} (\tens{\Sigma}^{-1} \dot{\tens{\Sigma}} \tens{\Sigma}^{-1}) \tens{\X} \right) = \trace \left( \dot{\tens{\Sigma}} \tens{\Sigma}^{-1} \tens{\X} (\tens{\X}^{\intercal} \tens{\Sigma}^{-1} \tens{\X})^{-1} \tens{\X}^{\intercal} \tens{\Sigma}^{-1} \right).
    \end{equation*}
    We then can arrive at
    \begin{equation*}
        \frac{\partial \ell}{\partial \theta} = -\frac{1}{2} \trace\bigg( \dot{\tens{\Sigma}} \underbrace{(\tens{\Sigma}^{-1} - \tens{\Sigma}^{-1} \tens{\X}(\tens{\X}^{\intercal} \tens{\Sigma}^{-1} \tens{\X})^{-1} \tens{\X}^{\intercal} \tens{\Sigma}^{-1})}_{\tens{M}} \bigg) + \frac{1}{2} \vect{z}^{\intercal} \left( \tens{M} \dot{\tens{\Sigma}} \tens{M} \right) \vect{z},
    \end{equation*}
    which concludes \eqref{eq:der-L}. Similarly, \eqref{eq:der-k-L} can be verified by induction, applying \Cref{lem:der-M} and \(\partial^2 \tens{\Sigma} / \partial \theta^2 = \tens{0}\).
\end{proof}

\begin{remark}[Absence of Basis Functions]
By assuming the trivial mean function \ie \(\mu = 0\), we have \(\tens{\X} = \tens{0}\) so \(\tens{P} = \tens{I}\), and the matrix \(\tens{M}\) simplifies to the precision matrix \(\tens{\Sigma}^{-1}\). As such, the derivative of \(\ell\) in \Cref{prop:der-L} is greatly simplified to a commonly known form in the literature (see \eg \citet[\S 6.1]{MACKAY-1998}, \citep[Equation 5.9]{RASMUSSEN-2006}, or \citep[Equation 15.23]{MURPHY-2012}).
\end{remark}

An important virtue of \Cref{lem:der-M} and \Cref{prop:der-L} is that the derivatives of \(\tens{M}\) and \(\ell\) are tractably represented by \(\tens{M}\) (as opposed to the lengthy expressions by only \(\tens{\Sigma}\) and \(\tens{\X}\)). This is an important result as it facilitates our forthcoming developments.

% ---------------------------------------
% Estimation of Error Variance and Nugget
% ---------------------------------------

\subsection{Estimation of Error and Noise Variances} \label{sec:estimate}

Here we aim to estimate the set of hyperparameters \((\sigma^2,\sigma_0^2)\) assuming \(\sigma \sigma_0 \neq 0\). %5In such a case, the derivatives of \(\ell\) with respect to \(\sigma^2\) or \(\sigma_0^2\) are not simple enough to solve for the parameters explicitly. 
We show that this can be achieved by reformulating the problem to estimate the hyperparameters \((\sigma^2,\sigma_0^2/\sigma^2)\) instead. Define the ratio between the noise and error variances by,
\begin{equation}
    \eta \coloneqq \frac{\sigma_0^2}{\sigma^2}.
\end{equation}
Then, the covariance matrix \eqref{eq:Sigma} is represented by the new hyperparameters as
\begin{equation}
    \tens{\Sigma}_{\sigma^2,\eta} = \sigma^2 \tens{K}_{\eta}, \qquad \text{where} \qquad \tens{K}_{\eta} \coloneqq \tens{K} + \eta \tens{I}. \label{eq:Sigma-K}
\end{equation}
Accordingly, we inherit the same sub-index notation of hyperparameters for the matrices \(\tens{M}_{\sigma^2,\eta}\) and \(\tens{P}_{\sigma^2,\eta}\), since both of these matrices depend on \(\tens{\Sigma}_{\sigma^2,\eta}\). Note that \(\tens{P}_{\sigma^2,\eta}\) is independent of the error variance as \(\sigma^2\) cancels out in its formulation, so, we only need to write \(\tens{P}_{\eta}\).

We proceed as follows. First, in \Cref{thm:sigma}, we assume \(\eta\) is given, and we set \(\theta = \sigma^2\) to find the global maximum of \(\ell(\sigma^2,\eta)\), which we denote \(\hat{\sigma}^2\). Such a solution to the error variance depends on \(\eta\), which we indicate by \(\hat{\sigma}^2(\eta)\). Second, in \Cref{thm:eta}, we set \(\theta = \eta\) to find the local maximum of the \emph{profiled} function \(\ell(\hat{\sigma}^2(\eta),\eta)\), which we denote \(\hat{\eta}\). Once \(\hat{\eta}\) and \(\hat{\sigma}^2(\hat{\eta})\) are known, an estimate of the noise variance is obtained by \(\hat{\sigma}_0^2 = \hat{\eta} \hat{\sigma}^2(\hat{\eta})\).

\begin{remark} \label[remark]{rem:z-rangeX}
%\begin{assumption} \label{rem:z-rangeX}
    In the following theorems, we assume \(\vect{z} \notin \range(\tens{\X})\), \ie the observed data, \(\vect{z}\), is not a linear combination of the basis functions \(\vect{\basis}_i\). If \(\vect{z} \in \range(\tens{\X})\), since \(\tens{\X}\) is the kernel of the projection matrix \(\tens{P}_{\eta}\), we have \(\tens{P}_{\eta} \vect{z} = \vect{0}\), which yields \(\tens{M}_{\sigma^2,\eta} \vect{z} = \tens{0}\) and \(\| \vect{z} \|_{\tens{M}} = 0\). In such a case, \(\ell\) is independent of \(\vect{z}\) and becomes unbounded with a logarithmic singularity at \(\sigma = 0\) (see \eqref{eq:log-marginal-likelihood}), which  leads to the trivial solution \(\hat{\sigma} = 0\), as expected. Thus, for non-trivial problems, we henceforth assume \(\vect{z} \notin \range(\tens{\X})\).
%\end{assumption}
\end{remark}

\begin{theorem}[Estimation of Variance] \label[theorem]{thm:sigma}
    Suppose \(\vect{z} \notin \range(\tens{\X})\). For a given \(\eta\), the log marginal likelihood \(\ell_{\eta}(\sigma^2) \coloneqq \ell(\sigma^2,\eta)\) has a strict global maximum at
    \begin{subequations}
    \begin{equation}
        \hat{\sigma}^2(\eta) = \frac{1}{n-m} \| \vect{z} \|^2_{\tens{M}_{1,\eta}}, \label{eq:sigma-hat}
    \end{equation}
    where
    \begin{equation}
        \tens{M}_{1,\eta} = \tens{K}_{\eta}^{-1} - \tens{K}_{\eta}^{-1} \tens{\X} (\tens{\X}^{\intercal} \tens{K}_{\eta}^{-1} \tens{\X})^{-1} \tens{\X}^{\intercal} \tens{K}_{\eta}^{-1}. \label{eq:M1}
    \end{equation}
    \end{subequations}
\end{theorem}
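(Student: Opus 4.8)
The plan is to set $\theta = \sigma^2$ and apply the first-derivative formula \eqref{eq:der-L} of \Cref{prop:der-L}. The essential preliminary observation is a scaling relation. Since $\tens{\Sigma}_{\sigma^2,\eta} = \sigma^2 \tens{K}_{\eta}$ by \eqref{eq:Sigma-K}, we have $\tens{\Sigma}^{-1} = \sigma^{-2}\tens{K}_{\eta}^{-1}$, and substituting this into the definition of $\tens{M}$ shows that every $\sigma^2$ cancels except an overall scalar, giving $\tens{M}_{\sigma^2,\eta} = \sigma^{-2}\tens{M}_{1,\eta}$, with $\tens{M}_{1,\eta}$ exactly as in \eqref{eq:M1} (i.e.\ the value of $\tens{M}$ at $\sigma^2 = 1$). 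Noting further that $\dot{\tens{\Sigma}} = \partial \tens{\Sigma}/\partial \sigma^2 = \tens{K}_{\eta}$, the derivative \eqref{eq:der-L} becomes
\[
    \frac{\partial \ell}{\partial \sigma^2} = -\frac{1}{2\sigma^2}\trace\!\left(\tens{K}_{\eta}\tens{M}_{1,\eta}\right) + \frac{1}{2\sigma^4}\vect{z}^{\intercal}\!\left(\tens{M}_{1,\eta}\tens{K}_{\eta}\tens{M}_{1,\eta}\right)\vect{z}.
\]

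Next I would simplify the two matrix expressions using $\tens{M} = \tens{\Sigma}^{-1}\tens{P}$ from \eqref{eq:M}, which at $\sigma^2 = 1$ reads $\tens{M}_{1,\eta} = \tens{K}_{\eta}^{-1}\tens{P}_{\eta}$. Immediately $\tens{K}_{\eta}\tens{M}_{1,\eta} = \tens{P}_{\eta}$; since $\tens{P}_{\eta}$ is an idempotent projection whose range is complementary to the $m$-dimensional $\range(\tens{\X})$, its trace equals its rank $n-m$, so the first term reduces to $\sigma^{-2}(n-m)$. For the quadratic form, idempotency $\tens{P}_{\eta}^2 = \tens{P}_{\eta}$ yields $\tens{M}_{1,\eta}\tens{K}_{\eta}\tens{M}_{1,\eta} = \tens{M}_{1,\eta}\tens{P}_{\eta} = \tens{K}_{\eta}^{-1}\tens{P}_{\eta}^2 = \tens{K}_{\eta}^{-1}\tens{P}_{\eta} = \tens{M}_{1,\eta}$, so it collapses to $\|\vect{z}\|^2_{\tens{M}_{1,\eta}}$. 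Hence the derivative takes the elementary form
\[
    \frac{\partial \ell}{\partial \sigma^2} = \frac{1}{2\sigma^4}\left(\|\vect{z}\|^2_{\tens{M}_{1,\eta}} - (n-m)\sigma^2\right).
\]

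Setting this to zero gives the unique stationary point $\hat{\sigma}^2(\eta) = \tfrac{1}{n-m}\|\vect{z}\|^2_{\tens{M}_{1,\eta}}$, matching \eqref{eq:sigma-hat}. To confirm it is a strict global maximum on $\sigma^2 \in (0,\infty)$, I would invoke the hypothesis $\vect{z}\notin\range(\tens{\X})$ (see \Cref{rem:z-rangeX}) to conclude $c \coloneqq \|\vect{z}\|^2_{\tens{M}_{1,\eta}} > 0$: the matrix $\tens{M}_{1,\eta}$ is symmetric positive semidefinite (it is congruent, via $\tens{K}_{\eta}^{-1/2}$, to the orthogonal projection onto $\range(\tens{K}_{\eta}^{-1/2}\tens{\X})^{\perp}$) with null space exactly $\range(\tens{\X})$. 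The bracket $c - (n-m)\sigma^2$ is then strictly positive for $\sigma^2 < \hat{\sigma}^2$ and strictly negative for $\sigma^2 > \hat{\sigma}^2$, so $\ell_{\eta}$ is strictly increasing then strictly decreasing, giving a strict global maximum. Equivalently, feeding the scaling relations into \eqref{eq:log-marginal-likelihood} produces the closed form $\ell_{\eta}(\sigma^2) = -\tfrac{n-m}{2}\log\sigma^2 - \tfrac{c}{2\sigma^2} + (\text{terms independent of } \sigma^2)$, which tends to $-\infty$ as $\sigma^2\to 0^+$ and as $\sigma^2\to\infty$ while having a single interior critical point, confirming the claim.

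The main obstacle is not the calculus but recognizing and proving the two structural identities $\tens{K}_{\eta}\tens{M}_{1,\eta} = \tens{P}_{\eta}$ and $\tens{M}_{1,\eta}\tens{K}_{\eta}\tens{M}_{1,\eta} = \tens{M}_{1,\eta}$, together with the scaling $\tens{M}_{\sigma^2,\eta} = \sigma^{-2}\tens{M}_{1,\eta}$; these are precisely what reduce the derivative to a rational function of $\sigma^2$ with a transparent sign pattern. A secondary point requiring care is establishing $c > 0$ from $\vect{z}\notin\range(\tens{\X})$, since this is what keeps the stationary point strictly inside $(0,\infty)$ and excludes the degenerate logarithmic singularity at $\sigma = 0$ described in \Cref{rem:z-rangeX}.
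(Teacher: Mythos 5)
Your proposal is correct and follows essentially the same route as the paper: set \(\theta = \sigma^2\) in \Cref{prop:der-L}, exploit the scaling \(\tens{M}_{\sigma^2,\eta} = \sigma^{-2}\tens{M}_{1,\eta}\) and the idempotency of \(\tens{P}_{\eta}\) to collapse the derivative to \(\tfrac{1}{2\sigma^4}\bigl(\|\vect{z}\|^2_{\tens{M}_{1,\eta}} - (n-m)\sigma^2\bigr)\), and solve for the unique root. The only (minor, and if anything slightly more careful) difference is that you establish globality by a sign/monotonicity argument and the explicit closed form of \(\ell_{\eta}\), whereas the paper verifies a single critical point via the general \(k\)\textsuperscript{th}-derivative formula and checks that the second derivative is negative there.
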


\begin{proof}
    Set \(\theta = \sigma^2\) in \Cref{prop:der-L}, so, here, the derivatives are with respect to \(\sigma^2\). We have, \(\dot{\tens{\Sigma}}_{\sigma^2,\eta} = \tens{K}_{\eta}\). Recall from \eqref{eq:M} that \(\tens{M} = \tens{\Sigma}^{-1} \tens{P}\) and write \(\tens{M}_{\sigma^2,\eta} = \sigma^{-2} \tens{K}_{\eta}^{-1} \tens{P}_{\eta}\). We compute each of the terms in the \(k\)\textsuperscript{th} derivative of \(\ell\) given in \eqref{eq:der-k-L}. 
    
    First,
    \begin{align}
        \trace\Big( (\dot{\tens{\Sigma}} \tens{M})^k \Big) &= \sigma^{-2k} \trace(\tens{P}_\eta^k) \notag \\
        &= \sigma^{-2k} \trace(\tens{P}_\eta) \notag \\
        &= \sigma^{-2k} \trace(\tens{I}_{n \times n} - \tens{\X}(\tens{\X}^{\intercal} \tens{K}_{\eta} \tens{\X})^{-1} \tens{\X}^{\intercal} \tens{K}_{\eta}^{-1}) \notag \\
        &= \sigma^{-2k} \left( \trace(\tens{I}_{n \times n}) - \trace((\tens{\X}^{\intercal} \tens{K}_\eta^{-1} \tens{\X})^{-1} (\tens{\X}^{\intercal} \tens{K}_{\eta}^{-1} \tens{\X})) \right) \notag \\
        &= \sigma^{-2k} \left( \trace(\tens{I}_{n \times n}) - \trace(\tens{I}_{m \times m}) \right) \notag \\
        &= \sigma^{-2k} (n - m). \label{eq:dsm}
    \end{align}
    On the second line in the above, we used the idempotent property of the projection matrix \(\tens{P}_{\eta}\), that is \(\tens{P}_{\eta}^k = \tens{P}_{\eta}\). In the fourth line, we used the cyclic property of the trace operator. \(\tens{I}_{n \times n}\) and \(\tens{I}_{m \times m}\) are identity matrices of size \(n \times n\) and \(m \times m\), respectively.

    Second,
    \begin{align}
        \tens{M} (\dot{\tens{\Sigma}} \tens{M})^k &= (\sigma^{-2}\tens{K}_{\eta}^{-1} \tens{P}_{\eta}) ( \tens{K}_{\eta} \sigma^{-2} \tens{K}_{\eta}^{-1} \tens{P}_{\eta})^{k} \notag \\
        &= \sigma^{-2(1+k)} \tens{K}_{\eta}^{-1} \tens{P}_{\eta}^{k+1} \notag \\
        &= \sigma^{-2(1+k)}\tens{K}_{\eta}^{-1} \tens{P}_{\eta} \notag \\
        &= \sigma^{-2(1+k)} \tens{M}_{1,\eta}. \label{eq:msm}
    \end{align}
    On the third line in the above, we again used the idempotent property of the projection matrix \(\tens{P}_{\eta}\). Applying \eqref{eq:dsm} and \eqref{eq:msm} in the partial derivative of \(\ell\) in \eqref{eq:der-k-L} yields
    \begin{equation}
        \frac{\partial^k \ell_{\eta}(\sigma^2)}{\partial (\sigma^2)^k} = \frac{(-1)^k}{2 \sigma^{2k}} (k-1)! \left( (n-m) - k \sigma^{-2} \vect{z}^{\intercal} \tens{M}_{1,\eta} \vect{z}  \right). \label{eq:der2-L-sigma2}
    \end{equation}
    Since by the hypothesis \(\tens{M}_{1,\eta} \vect{z} \neq \vect{0}\), the above relation has a single root. When \(k = 1\), the root at \(\hat{\sigma}^2\) is given by \eqref{eq:sigma-hat}. Moreover since 
    \begin{equation*}
        \left. \frac{\partial^k \ell_{\eta}(\sigma^2)}{\partial (\sigma^2)^k} \right|_{\hat{\sigma}^2}  = -\frac{(-1)^k}{2 \hat{\sigma}^{2k}} (k-1) (k-1)! (n-m),
    \end{equation*}
    \(\ell_{\eta}(\hat{\sigma}^2)\) is a maximum because the second derivative \(k = 2\) is negative.
\end{proof}

\begin{remark}[Behavior at \(\eta = 0\) and \(\eta \to \infty\)]
    When \(\eta\) vanishes, and thus \(\hat{\sigma}_0^2 = 0\), the solution to \(\hat{\sigma}^2\) in \Cref{thm:sigma} falls back to the trivial solution known before in \eqref{eq:variance-trivial}. Conversely, when \(\eta \to \infty\), we have \(\tens{K}_{\eta} \to \eta \tens{I}\), \(\tens{P}_{\eta} \to \tens{I} - \tens{\X}(\tens{\X}^{\intercal} \tens{\X})^{-1} \tens{\X}^{-1}\), and \(\tens{M}_{1,\eta}\ \to \eta^{-1} \tens{P}\). By \Cref{thm:sigma}, \(\hat{\sigma} \to 0\), however, \(\hat{\sigma}_0^2 = \eta \hat{\sigma}^2 \to \| \vect{z} \|^2_{\tens{P}} / (n-m)\), which is known before by \eqref{eq:nugget-trivial}.
\end{remark}

We calculate the first and second derivatives of \(\ell\) with respect to \(\eta\) in \Cref{prop:der-eta}. With a slight alternative formulation, these derivatives are also represented in \Cref{thm:eta} to estimate \(\eta\).

\begin{proposition}[Derivatives of \(\ell\) with respect to \(\eta\)] \label[proposition]{prop:der-eta}
    Let \(\ell_{\hat{\sigma}^2(\eta)}(\eta) \coloneqq \ell(\hat{\sigma}^2(\eta),\eta)\) denote the profile log marginal likelihood that is locally maximized over \(\sigma^2\) at \(\hat{\sigma}^2(\eta)\) by \Cref{thm:sigma}. Then, the first and second total derivatives of \(\ell_{\hat{\sigma}^2(\eta)}(\eta)\) are
    \begin{subequations}
        \begin{equation}
            \frac{\mathrm{d} \ell_{\hat{\sigma}^2(\eta)}(\eta)}{\mathrm{d} \eta} = 
            -\frac{1}{2} \left( \trace(\tens{M}_{1,\eta}) - \frac{1}{\hat{\sigma}^{2}(\eta)} \vect{z}^{\intercal} \tens{M}_{1,\eta}^2 \vect{z} \right), \label{eq:der-ell-eta}
        \end{equation}
        and
        \begin{equation}
            \frac{\mathrm{d}^2 \ell_{\hat{\sigma}^2(\eta)}(\eta)}{\mathrm{d} \eta^2} = 
            \frac{1}{2} \left( \trace(\tens{M}_{1,\eta}^2) 
            -\frac{2}{\hat{\sigma}^2(\eta)} \vect{z}^{\intercal} \tens{M}_{1,\eta}^3 \vect{z}
            +\frac{1}{(n-m)(\hat{\sigma}^2(\eta))^2} \left( \vect{z}^{\intercal} \tens{M}_{1,\eta}^2 \vect{z} \right)^2
            \right),
            \label{eq:der2-ell-eta}
        \end{equation}
    \end{subequations}
    where \(\hat{\sigma}^2(\eta)\) is given by \eqref{eq:sigma-hat}.
\end{proposition}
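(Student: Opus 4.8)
The plan is to exploit the first-order optimality of $\hat{\sigma}^2(\eta)$ to collapse the first total derivative to a single partial derivative (an envelope argument), and then to differentiate that already-profiled expression once more, using the fact that $\tens{M}_{1,\eta}$ is itself the $\tens{M}$-matrix associated with the covariance $\tens{K}_{\eta} = \tens{K} + \eta \tens{I}$, so that \Cref{lem:der-M} applies to it directly.

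First I would establish \eqref{eq:der-ell-eta}. By the chain rule, the total derivative of the profile likelihood is
\begin{equation*}
    \frac{\mathrm{d} \ell_{\hat{\sigma}^2(\eta)}(\eta)}{\mathrm{d}\eta} = \left.\frac{\partial \ell}{\partial \sigma^2}\right|_{\hat{\sigma}^2(\eta)} \frac{\mathrm{d} \hat{\sigma}^2}{\mathrm{d}\eta} + \left.\frac{\partial \ell}{\partial \eta}\right|_{\hat{\sigma}^2(\eta)}.
\end{equation*}
By \Cref{thm:sigma}, $\hat{\sigma}^2(\eta)$ is a stationary point of $\sigma^2 \mapsto \ell(\sigma^2,\eta)$, so the first term vanishes and only $\partial \ell / \partial \eta$ survives. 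To evaluate this partial derivative I would apply \Cref{prop:der-L} with $\theta = \eta$, for which $\dot{\tens{\Sigma}}_{\sigma^2,\eta} = \partial(\sigma^2 \tens{K}_{\eta})/\partial\eta = \sigma^2 \tens{I}$. Using $\tens{M}_{\sigma^2,\eta} = \sigma^{-2}\tens{M}_{1,\eta}$ from the proof of \Cref{thm:sigma}, the trace and quadratic terms in \eqref{eq:der-L} become $\sigma^{-2}\trace(\tens{M}_{1,\eta})$ and $\sigma^{-4}\vect{z}^\intercal \tens{M}_{1,\eta}^2 \vect{z}$ respectively; evaluating at $\sigma^2 = \hat{\sigma}^2(\eta)$ yields \eqref{eq:der-ell-eta}.

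For the second derivative I would simply differentiate \eqref{eq:der-ell-eta}, now a function of $\eta$ alone, with respect to $\eta$. The crucial ingredient is that $\tens{M}_{1,\eta}$ in \eqref{eq:M1} has exactly the form of $\tens{M}$ in \eqref{eq:M} with $\tens{\Sigma}$ replaced by $\tens{K}_{\eta}$; since $\tens{K}_{\eta}$ is linear in $\eta$ with $\partial \tens{K}_{\eta}/\partial\eta = \tens{I}$, \Cref{lem:der-M} gives $\partial \tens{M}_{1,\eta}/\partial\eta = -\tens{M}_{1,\eta}^2$. From this I obtain the three derivatives needed: $\mathrm{d}\trace(\tens{M}_{1,\eta})/\mathrm{d}\eta = -\trace(\tens{M}_{1,\eta}^2)$; $\mathrm{d}(\vect{z}^\intercal \tens{M}_{1,\eta}^2 \vect{z})/\mathrm{d}\eta = -2\vect{z}^\intercal \tens{M}_{1,\eta}^3 \vect{z}$; and, differentiating \eqref{eq:sigma-hat}, $\mathrm{d}\hat{\sigma}^2/\mathrm{d}\eta = -\frac{1}{n-m}\vect{z}^\intercal \tens{M}_{1,\eta}^2 \vect{z}$, whence $\mathrm{d}(1/\hat{\sigma}^2)/\mathrm{d}\eta = \frac{1}{(n-m)(\hat{\sigma}^2)^2}\vect{z}^\intercal\tens{M}_{1,\eta}^2\vect{z}$. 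Substituting these into the $\eta$-derivative of \eqref{eq:der-ell-eta} via the product rule and collecting terms produces \eqref{eq:der2-ell-eta}.

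The main thing to get right is that the envelope simplification applies \emph{only} to the first derivative: when forming the second derivative one must retain the implicit $\eta$-dependence of $\hat{\sigma}^2(\eta)$, which is precisely what generates the quadratic term $(\vect{z}^\intercal \tens{M}_{1,\eta}^2 \vect{z})^2$ in \eqref{eq:der2-ell-eta}. Differentiating the already-profiled expression \eqref{eq:der-ell-eta} directly---rather than returning to a two-variable Hessian computation---is the cleanest way to track this dependence, reducing the whole calculation to the single matrix identity $\dot{\tens{M}}_{1,\eta} = -\tens{M}_{1,\eta}^2$ together with elementary scalar bookkeeping.
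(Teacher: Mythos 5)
Your proof is correct, and the first-derivative part coincides with the paper's argument: both invoke the stationarity of \(\ell\) in \(\sigma^2\) at \(\hat{\sigma}^2(\eta)\) to kill the chain-rule term and then evaluate \(\partial \ell/\partial\eta\) via \Cref{prop:der-L} with \(\dot{\tens{\Sigma}} = \sigma^2\tens{I}\) and \(\tens{M}_{\sigma^2,\eta} = \sigma^{-2}\tens{M}_{1,\eta}\). Where you diverge is the second derivative. The paper returns to the two-variable expansion of \(\mathrm{d}^2\ell/\mathrm{d}\eta^2\), separately computing \(\partial^2\ell/\partial\eta^2\), \(\partial^2\ell/\partial(\sigma^2)^2\), the mixed partial \(\partial^2\ell/\partial\sigma^2\partial\eta\), and \(\mathrm{d}\hat{\sigma}^2/\mathrm{d}\eta\), and assembling them through the second-order chain rule (with the \(\mathrm{d}^2\hat{\sigma}^2/\mathrm{d}\eta^2\) term annihilated by stationarity). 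You instead differentiate the already-profiled scalar expression \eqref{eq:der-ell-eta} directly in \(\eta\), which is legitimate because that expression \emph{is} the first total derivative as a function of \(\eta\) alone, including the implicit dependence through \(\hat{\sigma}^2(\eta)\). Your route needs only \(\dot{\tens{M}}_{1,\eta} = -\tens{M}_{1,\eta}^2\) (from \Cref{lem:der-M} applied to \(\tens{K}_\eta\)) and \(\mathrm{d}\hat{\sigma}^2/\mathrm{d}\eta = -\tfrac{1}{n-m}\vect{z}^\intercal\tens{M}_{1,\eta}^2\vect{z}\), and it bypasses the Hessian bookkeeping entirely; I checked that the three resulting terms reproduce \eqref{eq:der2-ell-eta} exactly (the paper's mixed-partial and \(\partial^2\ell/\partial(\sigma^2)^2\) contributions combine into your single \((\vect{z}^\intercal\tens{M}_{1,\eta}^2\vect{z})^2\) term). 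The trade-off is that the paper's expansion exposes the individual second partials, which it reuses elsewhere (e.g., \eqref{eq:der2-L-sigma2} in verifying that \(\hat{\sigma}^2\) is a maximum), whereas your envelope-then-differentiate scheme is shorter and makes it transparent why the implicit \(\eta\)-dependence of \(\hat{\sigma}^2\) is exactly what produces the quadratic term.
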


\begin{proof}
    The total derivatives of \(\ell_{\hat{\sigma}^2(\eta)}(\eta)\) with respect to \(\eta\) are
    \begin{subequations}
    \begin{equation}
        \frac{\mathrm{d} \ell_{\hat{\sigma}^2(\eta)}(\eta)}{\mathrm{d} \eta} = \left. \frac{\partial \ell(\sigma^2,\eta)}{\partial \eta} \right|_{\hat{\sigma}^2(\eta)} + \frac{\mathrm{d} \hat{\sigma}^2(\eta)}{\mathrm{d} \eta} \left. \frac{\partial \ell(\sigma^2,\eta)}{\partial \sigma^2} \right|_{\hat{\sigma}^2(\eta)}, \label{eq:total-d-l}
    \end{equation}
    and
    \begin{align}
        \frac{\mathrm{d}^2 \ell_{\hat{\sigma}^2(\eta)}(\eta)}{\mathrm{d} \eta^2} =& \left. \frac{\partial^2 \ell(\sigma^2,\eta)}{\partial \eta^2} \right|_{\hat{\sigma}^2(\eta)} + \left. 2 \frac{\mathrm{d} \hat{\sigma}^2(\eta)}{\mathrm{d} \eta} \frac{\partial^2 \ell(\sigma^2,\eta)}{\partial \sigma^2 \partial \eta} \right|_{\hat{\sigma}^2(\eta)}
        + \left. \left( \frac{\mathrm{d} \hat{\sigma}^2(\eta)}{\mathrm{d} \eta} \right)^2 \frac{\partial^2 \ell(\sigma^2,\eta)}{\partial (\sigma^2)^2} \right|_{\hat{\sigma}^2(\eta)} \notag \\
        & + \left. \frac{\mathrm{d}^2 \hat{\sigma}^2(\eta)}{\mathrm{d} \eta^2} \frac{\partial \ell(\sigma^2,\eta)}{\partial \sigma^2} \right|_{\hat{\sigma}^2(\eta)}. \label{eq:total-dd-l}
    \end{align}
    \end{subequations}
    By the definition of \(\hat{\sigma}^2(\eta)\) in \Cref{thm:sigma}, \(\partial \ell(\sigma^2,\eta) / \partial \sigma^2 = 0\) at \(\sigma^2 = \hat{\sigma}^2(\eta)\), so the last term in \eqref{eq:total-d-l} and \eqref{eq:total-dd-l} vanishes. To find the partial derivatives with respect to \(\eta\) in the above, we apply \Cref{prop:der-L} by setting \(\theta = \eta\). We have \(\dot{\tens{\Sigma}}_{\hat{\sigma}^2(\eta),\eta} = \hat{\sigma}^2(\eta) \tens{I}\), where, here, dot denotes the derivative with respect to \(\eta\). Also, recall that \(\tens{M}_{\hat{\sigma}^2(\eta),\eta} = \hat{\sigma}^{-2}(\eta) \tens{M}_{1,\eta}\). Thus, the two terms in the partial derivatives of \(\ell\) in \eqref{eq:der-L} and \eqref{eq:der-k-L} are,
    \begin{subequations} \label{eq:terms-der-k-L-eta}
    \begin{equation}
        \left(\dot{\tens{\Sigma}}_{\hat{\sigma}^2(\eta),\eta} \tens{M}_{\hat{\sigma}^2(\eta),\eta}\right)^k = \tens{M}_{1,\eta}^k,
    \end{equation}
    and
    \begin{equation}
        \tens{M}_{\hat{\sigma}^2(\eta),\eta} \left(\dot{\tens{\Sigma}}_{\hat{\sigma}^2(\eta),\eta} \tens{M}_{\hat{\sigma}^2(\eta),\eta}\right)^k = \hat{\sigma}^{-2}(\eta) \tens{M}_{1,\eta}^{k+1}.
    \end{equation}
    \end{subequations}
    By applying the above terms at \(k = 1\) in \eqref{eq:der-L}, the first partial derivative of \(\ell\) becomes,
    \begin{equation}
        \left. \frac{\partial \ell(\sigma^2, \eta)}{\partial \eta} \right|_{\hat{\sigma}^2(\eta)} = 
        -\frac{1}{2} \left( \trace(\tens{M}_{1,\eta}) - \hat{\sigma}^{-2}(\eta) \vect{z}^{\intercal} \tens{M}_{1,\eta}^2 \vect{z} \right). \label{eq:der-L-eta-alt}
    \end{equation}
    The above is also the total derivative in \eqref{eq:total-d-l}, and concludes \eqref{eq:der-ell-eta}.

    To find the second partial derivative with respect to \(\eta\), set \(k = 2\) in \eqref{eq:terms-der-k-L-eta} and apply into \eqref{eq:der-k-L} to obtain
    \begin{equation}
        \left. \frac{\partial^2 \ell(\sigma^2,\eta)}{\partial \eta^2} \right|_{\hat{\sigma}^2(\eta)} = \frac{1}{2} \left( \trace(\tens{M}_{1,\eta}^2) - 2\hat{\sigma}^{-2}(\eta) \vect{z}^{\intercal} \tens{M}_{1,\eta}^3 \vect{z} \right). 
        \label{eq:partial1-dd-l}
    \end{equation}
    The second partial derivative with respect to \(\sigma^2\) is directly obtained from \eqref{eq:der2-L-sigma2} and substituting \eqref{eq:sigma-hat} as
    \begin{equation}
        % \left. \frac{\partial^2 \ell(\sigma^2,\eta)}{\partial (\sigma^2)^2} \right|_{\hat{\sigma}^2(\eta)} = \frac{1}{2 (\hat{\sigma}^2(\eta))^2} \left( (n-m) - \frac{2}{\hat{\sigma}^2(\eta)} \vect{z}^{\intercal} \tens{M}_{1,\eta} \vect{z} \right).
        % \label{eq:partial2-dd-l}
        \left. \frac{\partial^2 \ell(\sigma^2,\eta)}{\partial (\sigma^2)^2} \right|_{\hat{\sigma}^2(\eta)} = -\frac{n-m}{2 (\hat{\sigma}^2(\eta))^2}.
            \label{eq:partial2-dd-l}
    \end{equation}
    The mixed second derivative is also obtained by taking the derivative of \eqref{eq:der-L-eta-alt} with respect to \(\hat{\sigma}^2\) as
    \begin{equation}
        \left. \frac{\partial^2 \ell(\sigma^2,\eta)}{\partial \sigma^2 \partial \eta} \right|_{\hat{\sigma}^2(\eta)} = -\frac{1}{2 (\hat{\sigma}^2(\eta))^2} \vect{z}^{\intercal} \tens{M}_{1,\eta}^2 \vect{z}.
        \label{eq:mixed-dd-l}
    \end{equation}
    Also, by \Cref{lem:der-M} for \(\theta = \eta\), we have \(\dot{\tens{M}}_{1,\eta} = -\tens{M}_{1,\eta}^2\). Thus, from \eqref{eq:sigma-hat}, we obtain,
    \begin{equation}
        \frac{\mathrm{d} \hat{\sigma}^2(\eta)}{\mathrm{d} \eta} = -\frac{1}{n-m} \vect{z}^{\intercal} \tens{M}_{1,\eta}^2 \vect{z}. \label{eq:sigma-hat-der}
    \end{equation}
    By substituting \eqref{eq:partial1-dd-l}, \eqref{eq:partial2-dd-l}, \eqref{eq:mixed-dd-l}, and \eqref{eq:sigma-hat-der} into \eqref{eq:total-dd-l} and upon rearrangement, the second total derivative in \eqref{eq:der2-ell-eta} is obtained.
\end{proof}

\begin{theorem}[Estimation of \(\eta\)] \label[theorem]{thm:eta}
    Suppose \(\vect{z} \notin \range(\tens{\X})\). Let \(\ell_{\hat{\sigma}^2(\eta)}(\eta)\) be defined as in \Cref{prop:der-eta}. If \(\hat{\eta}\) satisfies
    \begin{subequations}
        \begin{equation}
            \vect{z}^{\intercal} \tens{G}_{\hat{\eta}} \vect{z} = 0,
            \qquad \text{subject to}
            \qquad
            \vect{z}^{\intercal} \tens{H}_{\hat{\eta}} \vect{z} < 0,
            \label{eq:der-L-eta}
        \end{equation}
        where
        \begin{align}
            \tens{G}_{\eta} &\coloneqq \frac{\trace(\tens{M}_{1,\eta})}{n-m} \tens{M}_{1,\eta} - \tens{M}_{1,\eta}^2, \label{eq:G} \\
            \tens{H}_{\eta} &\coloneqq \left( \frac{\trace(\tens{M}_{1,\eta}^2)}{n-m} + \left( \frac{\trace(\tens{M}_{1,\eta})}{n-m} \right)^2 \right) \tens{M}_{1,\eta} - 2 \tens{M}_{1,\eta}^3, \label{eq:H}
        \end{align}
    \end{subequations}
    then, \(\hat{\eta}\) is a local maximum of \(\ell_{\hat{\sigma}^2(\eta)}(\eta)\).
\end{theorem}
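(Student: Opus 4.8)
The plan is to establish the claim by the elementary second-derivative test: I will show that the two hypotheses on $\hat{\eta}$ are precisely equivalent to $\mathrm{d}\ell_{\hat{\sigma}^2(\eta)}(\eta)/\mathrm{d}\eta = 0$ and $\mathrm{d}^2\ell_{\hat{\sigma}^2(\eta)}(\eta)/\mathrm{d}\eta^2 < 0$ at $\eta = \hat{\eta}$, which together certify a local maximum. The recurring ingredient throughout is the strict positivity of $\hat{\sigma}^2(\eta)$. Indeed, $\tens{M}_{1,\eta} = \tens{P}_{\eta}^{\intercal} \tens{K}_{\eta}^{-1} \tens{P}_{\eta}$ is symmetric and positive semidefinite (exactly as in the proof of \Cref{claim:decomposition}) with kernel $\range(\tens{\X})$, so the standing assumption $\vect{z} \notin \range(\tens{\X})$ forces $\vect{z}^{\intercal} \tens{M}_{1,\eta} \vect{z} > 0$, and hence $\hat{\sigma}^2(\eta) = \tfrac{1}{n-m}\, \vect{z}^{\intercal} \tens{M}_{1,\eta} \vect{z} > 0$ by \eqref{eq:sigma-hat}. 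This positivity is what lets me clear denominators without affecting any sign or equality.

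First I would address the first-order condition. Factoring the positive scalar $\hat{\sigma}^{-2}(\eta)$ out of \eqref{eq:der-ell-eta} and using $\hat{\sigma}^2(\eta) = \tfrac{1}{n-m}\,\vect{z}^{\intercal}\tens{M}_{1,\eta}\vect{z}$ in the trace term gives
\[
\frac{\mathrm{d} \ell_{\hat{\sigma}^2(\eta)}(\eta)}{\mathrm{d}\eta} = -\frac{1}{2\hat{\sigma}^2(\eta)}\left( \frac{\trace(\tens{M}_{1,\eta})}{n-m}\, \vect{z}^{\intercal}\tens{M}_{1,\eta}\vect{z} - \vect{z}^{\intercal}\tens{M}_{1,\eta}^2\vect{z} \right) = -\frac{1}{2\hat{\sigma}^2(\eta)}\, \vect{z}^{\intercal}\tens{G}_{\eta}\vect{z},
\]
by the definition \eqref{eq:G} of $\tens{G}_{\eta}$. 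Since $\hat{\sigma}^2(\eta) > 0$, the first hypothesis $\vect{z}^{\intercal}\tens{G}_{\hat\eta}\vect{z} = 0$ is equivalent to the vanishing of the first derivative at $\hat\eta$, which is the stationarity half of the test.

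Next I would handle the second-order condition, which is the part requiring the most care. At a stationary point, $\vect{z}^{\intercal}\tens{G}_{\hat\eta}\vect{z} = 0$ rearranges to $\vect{z}^{\intercal}\tens{M}_{1,\hat\eta}^2\vect{z} = \hat{\sigma}^2(\hat\eta)\,\trace(\tens{M}_{1,\hat\eta})$. I would substitute this identity into the last term of \eqref{eq:der2-ell-eta}, collapsing $(\vect{z}^{\intercal}\tens{M}_{1,\hat\eta}^2\vect{z})^2$ into $\hat{\sigma}^4(\hat\eta)\,(\trace(\tens{M}_{1,\hat\eta}))^2$, so that this summand simplifies to $(\trace(\tens{M}_{1,\hat\eta}))^2/(n-m)$. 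Multiplying the resulting second derivative by $2\hat{\sigma}^2(\hat\eta)$ and again replacing $\vect{z}^{\intercal}\tens{M}_{1,\hat\eta}\vect{z} = (n-m)\hat{\sigma}^2(\hat\eta)$ in the definition \eqref{eq:H} of $\tens{H}_{\eta}$, I expect the two expressions to match term by term, yielding
\[
\left.\frac{\mathrm{d}^2\ell_{\hat{\sigma}^2(\eta)}(\eta)}{\mathrm{d}\eta^2}\right|_{\hat\eta} = \frac{1}{2\hat{\sigma}^2(\hat\eta)}\, \vect{z}^{\intercal}\tens{H}_{\hat\eta}\vect{z}.
\]
The main obstacle is precisely this bookkeeping: verifying that the three-term expression in \eqref{eq:der2-ell-eta}, after the stationarity substitution, factors cleanly as $\tfrac{1}{2}\hat{\sigma}^{-2}(\hat\eta)\,\vect{z}^{\intercal}\tens{H}_{\hat\eta}\vect{z}$ with no leftover terms.

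Finally, since $\hat{\sigma}^2(\hat\eta) > 0$, the second hypothesis $\vect{z}^{\intercal}\tens{H}_{\hat\eta}\vect{z} < 0$ is equivalent to $\mathrm{d}^2\ell_{\hat{\sigma}^2(\eta)}(\eta)/\mathrm{d}\eta^2 < 0$ at $\hat\eta$. Together with the vanishing first derivative established above, and the smoothness of $\ell_{\hat{\sigma}^2(\eta)}(\eta)$ in $\eta$ (since $\tens{K}_{\eta} = \tens{K} + \eta\tens{I}$ depends smoothly on $\eta$), the second-derivative test certifies that $\hat\eta$ is a strict local maximum of the profile likelihood, completing the argument.
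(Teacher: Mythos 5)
Your proposal is correct and follows essentially the same route as the paper's proof: both clear the positive factor \(\hat{\sigma}^2(\eta)\) from the first total derivative to identify stationarity with \(\vect{z}^{\intercal}\tens{G}_{\hat{\eta}}\vect{z}=0\), then substitute the stationarity identity \(\vect{z}^{\intercal}\tens{M}_{1,\hat{\eta}}^2\vect{z} = \hat{\sigma}^2(\hat{\eta})\trace(\tens{M}_{1,\hat{\eta}})\) into the second total derivative to obtain \(2\hat{\sigma}^2(\hat{\eta})\,\mathrm{d}^2\ell/\mathrm{d}\eta^2|_{\hat{\eta}} = \vect{z}^{\intercal}\tens{H}_{\hat{\eta}}\vect{z}\). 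Your bookkeeping does check out term by term, and your explicit justification that \(\hat{\sigma}^2(\eta)>0\) via the positive semidefiniteness of \(\tens{M}_{1,\eta}\) is a slightly more careful version of the paper's one-line remark.
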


\begin{proof}
    We look for a point \(\hat{\eta}\) at which the first derivative vanishes and the second derivative is negative. By multiplying the first total derivative in \Cref{prop:der-eta} by \(\hat{\sigma}^2(\eta)\) and substitute \(\hat{\sigma}^2(\eta)\) from \Cref{thm:sigma} we obtain 
    \begin{equation} \label{eq:der1-G}
        -2 \hat{\sigma}^2(\eta) \frac{\mathrm{d} \ell_{\hat{\sigma}^{2}(\eta)}(\eta)}{\mathrm{d} \eta} = \vect{z}^{\intercal} \tens{G}_{\eta} \vect{z}.
    \end{equation}
    But \(\hat{\sigma}^2(\eta) \neq 0\) since we assumed \(\vect{z} \notin \range(\tens{\X})\). Thus, the first derivative vanishes at some point \(\hat{\eta}\) whenever \(\vect{z}^{\intercal} \tens{G}_{\hat{\eta}} \vect{z}\) vanishes.

    We obtain the second condition as follows. At \(\hat{\eta}\), the zero first-derivative implies,
    \begin{equation}
        \vect{z}^{\intercal} \tens{M}_{1,\hat{\eta}}^2 \vect{z} = \frac{\trace(\tens{M}_{1,\hat{\eta}})}{n-m} \vect{z}^{\intercal} \tens{M}_{1,\hat{\eta}} \vect{z}. \label{eq:zM2z}
    \end{equation}
    We multiply the second total derivative in \Cref{prop:der-eta} by \(\hat{\sigma}^2(\hat{\eta})\), substitute \(\hat{\sigma}^2(\hat{\eta})\) from \Cref{thm:sigma}, also substitute \(\vect{z}^{\intercal} \tens{M}_{1,\hat{\eta}}^2 \vect{z}\) therein using \eqref{eq:zM2z}, to obtain,
    \begin{equation} \label{eq:der2-H} 
        2 \hat{\sigma}^2(\hat{\eta}) \left. \frac{\mathrm{d}^2 \ell_{\hat{\sigma}^2(\eta)}(\eta)}{\mathrm{d} \eta^2} \right|_{\hat{\eta}} = \vect{z}^{\intercal} \tens{H}_{\hat{\eta}} \vect{z}.
    \end{equation}
    Thus, the second derivative is negative whenever \(\vect{z}^{\intercal} \tens{H}_{\hat{\eta}} \vect{z} < 0\).
\end{proof}

A practical algorithm using \Cref{thm:sigma} and \Cref{thm:eta} is as follows. Let \(c \ll 1 \ll C\) be two reasonably chosen thresholds for \(\hat{\eta}\). Once an estimate of \(\hat{\eta}\) is found by \Cref{thm:eta}, three cases can hold. Firstly, if \(\hat{\eta} < c\), the noise variance is considered negligible compared to the error variance; set \(\sigma_0^2 = 0\) and calculate \(\sigma^2\) from \eqref{eq:variance-trivial}. Secondly, if \(\hat{\eta} > C\), the error variance is considered negligible compared to the noise variance; set \(\sigma^2 = 0\) and calculate \(\sigma_0^2\) from \eqref{eq:nugget-trivial}. Finally, if \(c \leq \hat{\eta} \leq C\), compute \(\tens{M}_{1,\hat{\eta}}\) and use \Cref{thm:sigma} to find \(\hat{\sigma}^2\); the noise variance is then found by \(\hat{\sigma}_0^2 = \hat{\eta} \hat{\sigma}^2\).

The computational advantage of finding hyperparameters using the above method is significant compared to maximizing \(\ell(\sigma^2,\sigma_0^2)\) directly in the space of two hyperparameters \((\sigma^2,\sigma_0^2)\). This is because both relations in \eqref{eq:der-L-eta} are independent of \(\hat{\sigma}^2\), thus, \(\hat{\eta}\) can be exclusively found. In other words, the dimension of the space of hyperparameter search is reduced.

% --------------------------------
% Bounds and Asymptotic Properties
% --------------------------------

\section{Bounds and Asymptotic Properties} \label{sec:properties}

Solving \eqref{eq:der-L-eta} generally only guarantees that \(\hat{\eta}\) is a local maximum of the likelihood function. We of course seek the global maximum. In \Cref{prop:sign-indefinite} below we show that \(\tens{G}_{\eta}\) and \(\tens{H}_{\eta}\) are generally sign-indefinite, which implies there could exist an arbitrary number of local maxima depending on the data set. However, we show in this section that deriving bounds for \(\ell\) and its derivatives, and the asymptotic behavior of its derivative, can be useful for locating the global maximum \(\hat{\eta}\).

%It is worthwhile to investigate a few properties of \(\ell_{\hat{\sigma}^2(\eta)}(\eta)\) and its derivatives before attempting to find its maxima numerically. We motivate this section by the following proposition.

\begin{proposition}[Sign-Indefiniteness of the Derivatives of \(\ell\)] \label[proposition]{prop:sign-indefinite}
    Matrices \(\tens{G}_{\eta}\) and \(\tens{H}_{\eta}\) are either unconditionally sign-indefinite, or identically zero.
\end{proposition}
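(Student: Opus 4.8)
The plan is to exploit the fact that both $\tens{G}_{\eta}$ and $\tens{H}_{\eta}$ are \emph{matrix polynomials} in the single symmetric matrix $\tens{M}_{1,\eta}$, so they are simultaneously diagonalizable with it and their sign behavior is read off from the spectrum of $\tens{M}_{1,\eta}$ alone. First I would record the spectral structure of $\tens{M}_{1,\eta}$: writing $\tens{M}_{1,\eta} = \tens{P}_{\eta}^{\intercal} \tens{K}_{\eta}^{-1} \tens{P}_{\eta}$ (the $\sigma^2 = 1$ specialization of the identity $\tens{P}^{\intercal} \tens{\Sigma}^{-1} \tens{P} = \tens{M}$ established in the proof of \Cref{claim:decomposition}), this matrix is symmetric positive semidefinite, and since $\operatorname{rank}(\tens{P}_{\eta}) = \trace(\tens{P}_{\eta}) = n-m$ as computed in \eqref{eq:dsm}, it has exactly $n-m$ strictly positive eigenvalues $\lambda_1,\dots,\lambda_{n-m}$ together with $m$ zero eigenvalues supported on $\range(\tens{\X})$.

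Next I would diagonalize. In the eigenbasis of $\tens{M}_{1,\eta}$, set $\bar{\lambda} \coloneqq \trace(\tens{M}_{1,\eta})/(n-m)$ and note that $\bar{\lambda}$ is exactly the arithmetic mean of the \emph{nonzero} eigenvalues, since the normalization $n-m$ equals their count. The eigenvalues of $\tens{G}_{\eta} = \bar{\lambda}\,\tens{M}_{1,\eta} - \tens{M}_{1,\eta}^2$ are then $g_i = \lambda_i(\bar{\lambda} - \lambda_i)$, vanishing on the $m$-dimensional kernel and, on the complementary eigenspaces, carrying the sign of $\bar{\lambda} - \lambda_i$. Because $\bar{\lambda}$ is the mean of the $\lambda_i$, if these are not all equal then $\min_i \lambda_i < \bar{\lambda} < \max_i \lambda_i$, so $\tens{G}_{\eta}$ has a strictly positive eigenvalue (at $\lambda_{\min}$) and a strictly negative one (at $\lambda_{\max}$), i.e.\ it is sign-indefinite; and if all $\lambda_i$ coincide with $\bar{\lambda}$ then every $g_i = 0$ and $\tens{G}_{\eta} = \tens{0}$. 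This is precisely the stated dichotomy for $\tens{G}_{\eta}$.

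For $\tens{H}_{\eta}$ the argument is parallel but with a quadratic shift. Setting $\overline{\lambda^2} \coloneqq \trace(\tens{M}_{1,\eta}^2)/(n-m)$ and $c \coloneqq \overline{\lambda^2} + \bar{\lambda}^2$, the eigenvalues are $h_i = \lambda_i(c - 2\lambda_i^2)$, so the sign of $h_i$ matches that of $c/2 - \lambda_i^2$. The crux is the elementary inequality $\bar{\lambda}^2 \le \overline{\lambda^2}$ (Cauchy--Schwarz / Jensen applied to the nonzero eigenvalues), with equality precisely when all $\lambda_i$ are equal. It yields the two-sided sandwich $\bar{\lambda}^2 \le c/2 \le \overline{\lambda^2}$, whence $\lambda_{\min}^2 \le \bar{\lambda}^2 \le c/2$ and $\lambda_{\max}^2 \ge \overline{\lambda^2} \ge c/2$; when the $\lambda_i$ are not all equal the outer inequalities $\lambda_{\min}^2 < \bar{\lambda}^2$ and $\lambda_{\max}^2 > \overline{\lambda^2}$ are strict, giving $h_{\min} > 0$ and $h_{\max} < 0$, so $\tens{H}_{\eta}$ is sign-indefinite, while equal $\lambda_i$ force $h_i \equiv 0$ and $\tens{H}_{\eta} = \tens{0}$.

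The main obstacle, and the only place demanding care, is the quantitative comparison in the $\tens{H}_{\eta}$ case: one must verify that the data-independent coefficient $c$ always lies strictly between $2\lambda_{\min}^2$ and $2\lambda_{\max}^2$ whenever the spectrum is nondegenerate, which is exactly what the bound $\bar{\lambda}^2 \le c/2 \le \overline{\lambda^2}$ supplies. Everything else---the reduction to eigenvalues, the vanishing on $\range(\tens{\X})$, and the collapse to $\tens{0}$ in the degenerate case---is routine once the spectral picture of $\tens{M}_{1,\eta}$ is in place. I would close by noting that \emph{unconditionally} here means the indefiniteness is an intrinsic property of the matrix (both signs of eigenvalues are present) and hence independent of the data, so the quadratic forms $\vect{z}^{\intercal} \tens{G}_{\eta} \vect{z}$ and $\vect{z}^{\intercal} \tens{H}_{\eta} \vect{z}$ genuinely change sign as $\vect{z}$ varies over $\mathbb{R}^n$.
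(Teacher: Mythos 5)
Your proposal is correct and follows essentially the same route as the paper: diagonalize, note that $\tens{M}_{1,\eta}$ has exactly $m$ zero eigenvalues (on $\range(\tens{\X})$) and $n-m$ positive ones, read off the eigenvalues of $\tens{G}_{\eta}$ and $\tens{H}_{\eta}$ as polynomial images of those of $\tens{M}_{1,\eta}$, and use the fact that the mean of the nonzero eigenvalues (respectively of their squares) lies strictly between the extremes unless the spectrum is constant. The only cosmetic difference is that for $\tens{H}_{\eta}$ you route the sign comparison through the Cauchy--Schwarz bound $\bar{\lambda}^2 \le \overline{\lambda^2}$, whereas the paper compares $\overline{\psi}$ and $\overline{\psi^2}$ directly to the smallest and largest nonzero eigenvalues; both give the same strict inequalities.
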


\begin{proof}
    We will show the eigenvalues of \(\tens{G}_{\eta}\) and \(\tens{H}_{\eta}\) are either always sign-indefinite for any \(\eta\), or all zero.

    \begin{enumerate}[leftmargin=*,align=left,label*=\emph{Step (\roman*).},ref=step (\roman*),wide]
        \item\label{item:gh1} Let \(\eigM_i\) denote the eigenvalues of \(\tens{M}_{1,\eta}\) arranged in ascending order. Recall that \(\tens{M}_{1,\eta} = \tens{K}_{\eta}^{-1} \tens{P}_{\eta}\), where \(\tens{K}_{\eta}^{-1}\) is full rank and positive-definite. Also, recall that \(\tens{P}_{\eta}\) is a projection matrix with the kernel space \(\range(\tens{\X})\) with the dimension \(m\). Hence, \(m\) eigenvalues of \(\tens{P}_{\eta}\), and accordingly, the first \(m\) eigenvalues of \(\tens{M}_{1,\eta}\), are zero, \ie \(\eigM_1 = \dots = \eigM_m = 0\), whilst the rest of the eigenvalues of \(\tens{M}_{1,\eta}\) are positive.
        \item Define,
    \begin{subequations}
    \begin{align}
        \overbar{\eigM} &\coloneqq \frac{\trace(\tens{M}_{1,\eta})}{n-m} = \frac{1}{n-m} \sum_{k=m+1}^n \eigM_k, \label{eq:mu-bar-1} \\
        \overbar{\eigM^2} &\coloneqq \frac{\trace(\tens{M}_{1,\eta}^2)}{n-m} = \frac{1}{n-m}\sum_{k=m+1}^n \eigM_k^2, \label{eq:mu-bar-2}
    \end{align}
    \end{subequations}
    which they represent the mean and square mean of the non-zero eigenvalues of \(\tens{M}_{1,\eta}\). It holds,
    \begin{equation}
        \eigM_{m+1} \leq \overbar{\eigM} \leq \eigM_n,
        \qquad \text{and} \qquad
        \eigM_{m+1}^2 \leq \overbar{\eigM^2} \leq \eigM_n^2.
        \label{eq:mu-bar-ineq}
    \end{equation}
    The equalities in the above hold only if \(\eigM_{m+1} = \dots = \eigM_n\).

    \item Let \(\gamma_i\) and \(\vartheta_i\) denote the eigenvalues of \(\tens{G}_{\eta}\) and \(\tens{H}_{\eta}\), respectively. From the spectral decomposition of symmetric matrices \(\tens{M}_{1,\eta}\), \(\tens{G}_{\eta}\), and \(\tens{H}_{\eta}\) in the relations \eqref{eq:G} and \eqref{eq:H}, we have,
    \begin{subequations}
    \begin{align}
        \gamma_i &= \eigM_i \left(\overbar{\eigM} - \eigM_i \right), \\
        \vartheta_i &= \eigM_i \left( \overbar{\eigM^2} + \overbar{\eigM}^2 - 2\eigM_i^2 \right).
    \end{align}
    \end{subequations}
    Similar to \(\eigM_i\), we have \(\gamma_1 = \dots = \gamma_m = 0\) and \(\vartheta_1 = \dots = \vartheta_m = 0\). When \(\eigM_{m+1} = \dots = \eigM_n\), all \(\gamma_i\) and \(\vartheta_i\), and hence \(\tens{G}_{\eta}\) and \(\tens{H}_{\eta}\), become trivially zero. In the non-trivial case, because of the inequalities in \eqref{eq:mu-bar-ineq}, both \(\gamma_i\) and \(\vartheta_i\) contain positive and negative values at \(i > m\), and concludes the sign-indefiniteness of matrices.
    \end{enumerate}
\end{proof}

We exclude the trivial case when \(\tens{G}_{\eta}\) and \(\tens{H}_{\eta}\) are identically zero as \(\ell\) becomes constant. By \Cref{prop:sign-indefinite}, the quadratic forms in \eqref{eq:der-L-eta} always remain sign-indefinite, and any prior knowledge on the local maxima of \(\ell\) cannot be inferred without knowledge of the data \(\vect{z}\). Therefore, it is not known a priori  if there exists one or multiple local minima, or what is a reasonable range of \(\eta\) to locate the root(s) of the derivative of \(\ell\), if any exists. Next we derive formulas for the bounds of \(\ell\), its derivatives, and the asymptotic behavior of its derivative, as these relations will be useful for determining an initial guess the extremum of \(\ell\).

% ---------------------------------
% Bounds of ell and its Derivatives
% ---------------------------------

\subsection{Bounds of \texorpdfstring{\(\ell\)}{L} and its Derivatives} \label{sec:bounds}

\begin{proposition}[Bounds on the Derivatives of \(\ell\)] \label[proposition]{prop:der-bound}
    Suppose \(\vect{z} \notin \range(\tens{\X})\). Let \(\lambda_1\) and \(\lambda_n\) respectively denote the smallest and the largest eigenvalues of the correlation matrix \(\tens{K}\). Then,
    \begin{subequations} \label{eq:der-bounds}
    \begin{equation}
        \left| \frac{\mathrm{d} \ell_{\hat{\sigma}^2(\eta)}(\eta)}{\mathrm{d} \eta} \right| \leq \frac{n-m}{2} \left( \frac{1}{\lambda_1 + \eta} - \frac{1}{\lambda_n + \eta} \right), \label{eq:ldot-bound}
    \end{equation}
    and
    \begin{equation}
        \left| \frac{\mathrm{d}^2 \ell_{\hat{\sigma}^2(\eta)}(\eta)}{\mathrm{d} \eta^2} \right| \leq (n-m) \left( \frac{1}{(\lambda_1 + \eta)^2} - \frac{1}{(\lambda_n + \eta)^2} \right). \label{eq:lddot-bound}
    \end{equation}
\end{subequations}
\end{proposition}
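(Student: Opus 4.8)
The plan is to begin from the closed-form expressions for the first and second total derivatives of \(\ell_{\hat{\sigma}^2(\eta)}(\eta)\) in \Cref{prop:der-eta}, eliminate \(\hat{\sigma}^2(\eta)\) using its value \eqref{eq:sigma-hat}, and re-express every term as an average of powers of the eigenvalues \(\eigM_i\) of the symmetric matrix \(\tens{M}_{1,\eta}\). Expanding \(\vect{z} = \sum_i c_i \vect{u}_i\) in an orthonormal eigenbasis with \(\tens{M}_{1,\eta} \vect{u}_i = \eigM_i \vect{u}_i\) gives \(\vect{z}^{\intercal} \tens{M}_{1,\eta}^k \vect{z} = \sum_i c_i^2 \eigM_i^k\), so that the ratios \(R_1 \coloneqq \vect{z}^{\intercal} \tens{M}_{1,\eta}^2 \vect{z} / \vect{z}^{\intercal} \tens{M}_{1,\eta} \vect{z}\) and \(R_2 \coloneqq \vect{z}^{\intercal} \tens{M}_{1,\eta}^3 \vect{z} / \vect{z}^{\intercal} \tens{M}_{1,\eta} \vect{z}\) are weighted averages, with nonnegative weights \(c_i^2 \eigM_i\), of \(\eigM_i\) and \(\eigM_i^2\) respectively, while \(\overbar{\eigM}\) and \(\overbar{\eigM^2}\) of \eqref{eq:mu-bar-1}--\eqref{eq:mu-bar-2} are the corresponding uniform averages. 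The hypothesis \(\vect{z} \notin \range(\tens{\X})\) ensures the common denominator \(\vect{z}^{\intercal} \tens{M}_{1,\eta} \vect{z} = \| \vect{z} \|^2_{\tens{M}_{1,\eta}} > 0\), so every ratio is well defined.

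The crux --- and the step I expect to require the most care --- is a spectral localization of the nonzero eigenvalues of \(\tens{M}_{1,\eta}\). I would use the symmetric factorization \(\tens{M}_{1,\eta} = \tens{K}_{\eta}^{-\frac{1}{2}} \tens{Q}_{\eta} \tens{K}_{\eta}^{-\frac{1}{2}}\), where \(\tens{Q}_{\eta} \coloneqq \tens{I} - \tens{W} (\tens{W}^{\intercal} \tens{W})^{-1} \tens{W}^{\intercal}\) with \(\tens{W} \coloneqq \tens{K}_{\eta}^{-\frac{1}{2}} \tens{\X}\) is a genuine orthogonal projection of rank \(n-m\). Because \(\tens{Q}_{\eta}\) is idempotent I can write \(\tens{M}_{1,\eta} = (\tens{K}_{\eta}^{-\frac{1}{2}} \tens{Q}_{\eta})(\tens{Q}_{\eta} \tens{K}_{\eta}^{-\frac{1}{2}})\), and since \(AB\) and \(BA\) share the same nonzero eigenvalues, those of \(\tens{M}_{1,\eta}\) coincide with those of the compression \(\tens{Q}_{\eta} \tens{K}_{\eta}^{-1} \tens{Q}_{\eta}\); for \(\vect{v} \in \range(\tens{Q}_{\eta})\) its Rayleigh quotient reduces to \(\vect{v}^{\intercal} \tens{K}_{\eta}^{-1} \vect{v} / \vect{v}^{\intercal} \vect{v}\), which is confined between the extreme eigenvalues of \(\tens{K}_{\eta}^{-1}\). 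Since \(\tens{K}_{\eta} = \tens{K} + \eta \tens{I}\) has eigenvalues \(\lambda_i + \eta\), this yields the key bound \((\lambda_n + \eta)^{-1} \leq \eigM_i \leq (\lambda_1 + \eta)^{-1}\) for \(i = m+1,\dots,n\), and hence places each of \(\overbar{\eigM}\), \(R_1\) in \([(\lambda_n+\eta)^{-1}, (\lambda_1+\eta)^{-1}]\) and each of \(\overbar{\eigM^2}\), \(R_2\), \(R_1^2\) in \([(\lambda_n+\eta)^{-2}, (\lambda_1+\eta)^{-2}]\).

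For the first derivative, substituting \eqref{eq:sigma-hat} into \eqref{eq:der-ell-eta} collapses it to \(\mathrm{d} \ell_{\hat{\sigma}^2(\eta)}(\eta) / \mathrm{d} \eta = \tfrac{n-m}{2} (R_1 - \overbar{\eigM})\). As \(R_1\) and \(\overbar{\eigM}\) both lie in \([(\lambda_n+\eta)^{-1}, (\lambda_1+\eta)^{-1}]\), their difference is bounded in magnitude by the length of that interval, which is exactly the right side of \eqref{eq:ldot-bound}.

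For the second derivative, the same substitution turns \eqref{eq:der2-ell-eta} into \(\mathrm{d}^2 \ell_{\hat{\sigma}^2(\eta)}(\eta) / \mathrm{d} \eta^2 = \tfrac{n-m}{2} ( \overbar{\eigM^2} - 2 R_2 + R_1^2 )\). Bounding the positive terms \(\overbar{\eigM^2}\) and \(R_1^2\) above by \((\lambda_1+\eta)^{-2}\) and the subtracted term \(2R_2\) below by \(2(\lambda_n+\eta)^{-2}\) (and symmetrically for the reverse inequality) gives \(| \overbar{\eigM^2} - 2R_2 + R_1^2 | \leq 2( (\lambda_1+\eta)^{-2} - (\lambda_n+\eta)^{-2} )\), which after multiplication by \(\tfrac{n-m}{2}\) is precisely \eqref{eq:lddot-bound}. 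The only mildly delicate point is that \(\overbar{\eigM^2}\) is a uniform average whereas \(R_2\) is a weighted one, so they cannot be merged into a single variance-type quantity; nevertheless the crude term-by-term interval estimate already reproduces the stated constant exactly.
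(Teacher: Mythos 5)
Your proof is correct and follows essentially the same route as the paper's: substitute \(\hat{\sigma}^2(\eta)\) into the derivatives of \Cref{prop:der-eta}, recognize the quadratic-form ratios as Rayleigh-type weighted averages of the nonzero eigenvalues of \(\tens{M}_{1,\eta}\), localize those eigenvalues in \([(\lambda_n+\eta)^{-1},(\lambda_1+\eta)^{-1}]\), and bound each term by the endpoints of that interval. Your symmetric factorization \(\tens{M}_{1,\eta}=\tens{K}_{\eta}^{-1/2}\tens{Q}_{\eta}\tens{K}_{\eta}^{-1/2}\) together with the \(\tens{A}\tens{B}\) versus \(\tens{B}\tens{A}\) spectrum argument gives a more explicit justification of that localization than the paper's brief appeal to the Courant--Fischer principle, but it establishes the same fact and yields the identical constants.
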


\begin{proof}
    See \S \ref{prf:der-bound}.
\end{proof}

The bounds on the derivatives in \Cref{prop:der-bound} are not sharp, and may not be directly employed for practical use. However, they hint to the fast decay of the variations of \(\ell\) at large values of \(\eta\). Also, they indicate the role of the largest and smallest eigenvalues of \(\tens{K}\) based on the bounds in relations \eqref{eq:der-bounds}. Namely, at \(\eta \gg \mathcal{O}(\lambda_n)\), the variation of \(\ell\) is relatively insignificant. On the other end, the bound on derivatives at \(\eta \ll \lambda_1\) are almost constant, although the derivatives of \(\ell\) may not follow their bound closely. Nonetheless, in practice, we may roughly expect the derivative of \(\ell\) varies less than an order of magnitude at \(\eta \ll \lambda_1\). This coarse analysis suggests an eigenvalue-dependent range, such as \(\eta \in [\mathcal{O}(\lambda_1),\mathcal{O}(\lambda_n)]\), in which we can search for the global maxima of \(\ell\).

Another implication of \Cref{prop:der-bound} can be deduced as follows.

\begin{corollary}[Bounds on \(\ell\)] \label[corollary]{cor:l-bound}
    Suppose \(\vect{z} \notin \range(\tens{\X})\). Let \(\lambda_1\) and \(\lambda_n\) denote the smallest and the largest eigenvalues of the correlation matrix \(\tens{K}\). Then,
    \begin{equation}
        \left| \ell_{\hat{\sigma}^2(\eta)}(\eta) - \ell_{\hat{\sigma}^2(\eta')}(\eta') \right| \leq \frac{n-m}{2} \log \left( \frac{\lambda_1+\eta}{\lambda_n + \eta} \; \frac{\lambda_n + \eta'}{\lambda_1 + \eta'} \right). \label{eq:l-bound}
    \end{equation}
\end{corollary}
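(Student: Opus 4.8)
The plan is to obtain the bound on $\ell$ by integrating the bound on its first derivative established in \Cref{prop:der-bound}. Since the left-hand side of \eqref{eq:l-bound} is symmetric under interchanging $\eta$ and $\eta'$, I would first assume without loss of generality that $\eta \geq \eta'$. This ordering is consistent with the stated right-hand side being nonnegative: its logarithmic argument equals $(\lambda_1+\eta)(\lambda_n+\eta')/\bigl((\lambda_n+\eta)(\lambda_1+\eta')\bigr)$, and a short expansion shows the numerator minus denominator equals $(\eta-\eta')(\lambda_n-\lambda_1) \geq 0$, so the argument is $\geq 1$ precisely when $\eta \geq \eta'$ because $\lambda_n \geq \lambda_1$.

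Next, because $\tens{K}_{\eta} = \tens{K} + \eta \tens{I}$ is positive-definite for every $\eta \geq 0$ and $\hat{\sigma}^2(\eta) > 0$ under the standing assumption $\vect{z} \notin \range(\tens{\X})$, the profile function $\ell_{\hat{\sigma}^2(\eta)}(\eta)$ is continuously differentiable in $\eta$, with derivative given by \eqref{eq:der-ell-eta}. The fundamental theorem of calculus then yields
\[
    \ell_{\hat{\sigma}^2(\eta)}(\eta) - \ell_{\hat{\sigma}^2(\eta')}(\eta') = \int_{\eta'}^{\eta} \frac{\mathrm{d} \ell_{\hat{\sigma}^2(t)}(t)}{\mathrm{d} t}\,\mathrm{d} t,
\]
so that, by the triangle inequality for integrals together with \Cref{prop:der-bound},
\[
    \left| \ell_{\hat{\sigma}^2(\eta)}(\eta) - \ell_{\hat{\sigma}^2(\eta')}(\eta') \right| \leq \int_{\eta'}^{\eta} \frac{n-m}{2}\left( \frac{1}{\lambda_1+t} - \frac{1}{\lambda_n+t} \right) \mathrm{d} t,
\]
where the integrand is nonnegative on $[\eta',\eta]$ since $\lambda_1 \leq \lambda_n$.

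Finally, I would evaluate the elementary integral termwise, using $\int_{\eta'}^{\eta}(\lambda+t)^{-1}\,\mathrm{d} t = \log\frac{\lambda+\eta}{\lambda+\eta'}$ for $\lambda \in \{\lambda_1,\lambda_n\}$, giving the upper bound $\frac{n-m}{2}\bigl(\log\frac{\lambda_1+\eta}{\lambda_1+\eta'} - \log\frac{\lambda_n+\eta}{\lambda_n+\eta'}\bigr)$. Combining the two logarithms into one and regrouping the four factors then reproduces exactly the argument $\frac{\lambda_1+\eta}{\lambda_n+\eta}\,\frac{\lambda_n+\eta'}{\lambda_1+\eta'}$ appearing in \eqref{eq:l-bound}, which completes the estimate.

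I expect no serious obstacle here: the corollary is essentially a clean consequence of \Cref{prop:der-bound} through the fundamental theorem of calculus. The only points that require care are the orientation and sign bookkeeping---choosing $\eta \geq \eta'$ so that both the integral of the derivative bound and the stated logarithm are nonnegative---and verifying the differentiability and integrability needed to invoke the fundamental theorem of calculus, both of which follow from the positive-definiteness of $\tens{K}_{\eta}$ and the positivity of $\hat{\sigma}^2(\eta)$ on the relevant range of $\eta$.
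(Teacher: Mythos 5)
Your proposal is correct and follows exactly the paper's route: the paper's entire proof is "Taking the definite integral of \eqref{eq:ldot-bound} concludes \eqref{eq:l-bound}," and you have simply filled in the sign bookkeeping and the elementary integration that this one-liner leaves implicit. Your intermediate checks (nonnegativity of the logarithmic argument when \(\eta \geq \eta'\), and the termwise evaluation of the integral) are all accurate.
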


\begin{proof}
    Taking the definite integral of \eqref{eq:ldot-bound} concludes \eqref{eq:l-bound}.
\end{proof}

\begin{remark}
    %The bound in \eqref{eq:l-bound} can also be derived directly from \eqref{eq:log-marginal-likelihood}.
    While the bound of \(\ell\) can be obtained by taking the integral of the bound on its derivative (as done in \Cref{cor:l-bound}), the opposite cannot be inferred. That is, taking the derivative of the bound of \(\ell\) in \eqref{eq:l-bound} does not imply the bound of its first derivative in \eqref{eq:ldot-bound}.
\end{remark}

The bound of \(\ell\) in \Cref{cor:l-bound} may be employed to narrow the range of \(\eta\) in searching the global maxima. For example, if \(\ell\) at \(\eta' = 0\) is known, by another evaluation of \(\ell\) at a second location \(\eta''\), we can find \(0 \leq c \leq \eta''\) from the monotonically increasing bound in \eqref{eq:l-bound} so that \(\ell_{\hat{\sigma}^2(\eta)}(\eta) < \ell_{\hat{\sigma}^2(\eta'')}(\eta'') \) for all \(\eta < c\), and hence, narrow the search to \(\eta > c\) thereafter.

% ----------------------------------------
% Asymptote of the First Derivative of ell
% ----------------------------------------

\subsection{Asymptote of the First Derivative of \texorpdfstring{\(\ell\)}{L}} \label{sec:asymptote}

As argued above, it is reasonable to search for the maxima of \(\ell\) in \(\eta \in [\mathcal{O}(\lambda_1),\mathcal{O}(\lambda_n)]\). However, outside this range will be left unexplored. Fortunately, we can check in advance the existence of local maxima in \(\eta \gg \lambda_n\), and roughly at \(\eta \in [\mathcal{O}(\lambda_n),\infty)\), using an asymptotic approximation of \(\ell\) at large values of \(\eta\). The numerical evaluation of the approximated formulation is inexpensive and can be employed readily before the main numerical optimization. Namely, we will utilize an asymptotic relation for the first total derivative of \(\ell\), which is derived in \Cref{prop:asymptote}. 

\begin{lemma}[Asymptote of \(\tens{M}_{1,\eta}\)] \label[lemma]{lem:M-asym}
    Let \(\lambda_n\) denote the largest eigenvalue of \(\tens{K}\). Then, at \(\eta \gg \lambda_n\),
    \begin{equation}
        \tens{M}_{1,\eta} = \frac{1}{\eta} \tens{Q} \left( \tens{I} - \frac{1}{\eta} \tens{N} + \frac{1}{\eta^2} \tens{N}^2 \right) + \mathcal{O}\left(\eta^{-4} \lambda_n^3 \right), \label{eq:M-asymptote}
    \end{equation}
    where \(\tens{I}\) is the \(n \times n\) identity matrix, \(\tens{N} \coloneqq \tens{K} \tens{Q}\), and,
    \begin{equation}
        \tens{Q} \coloneqq \tens{I} - \tens{\X} \left( \tens{\X}^{\intercal} \tens{\X} \right)^{-1} \tens{\X}^{\intercal}. \label{eq:Q}
    \end{equation}
\end{lemma}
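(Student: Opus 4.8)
The plan is to avoid expanding all three occurrences of $\tens{K}_{\eta}^{-1}$ in \eqref{eq:M1} separately, and instead first reduce $\tens{M}_{1,\eta}$ to a single projected resolvent equation and then expand that. First I would record the structural identities that follow from \eqref{eq:M1} and \eqref{eq:Q}: since $\tens{M}_{1,\eta} = \tens{K}_{\eta}^{-1}\tens{P}_{\eta}$ with $\tens{P}_{\eta}\tens{\X} = \tens{0}$, and since $\tens{M}_{1,\eta}$ is symmetric, we have $\tens{M}_{1,\eta}\tens{\X} = \tens{0}$ and $\tens{\X}^{\intercal}\tens{M}_{1,\eta} = \tens{0}$. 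Because $\tens{Q}$ is the orthogonal projector onto $\range(\tens{\X})^{\perp}$ (so $\tens{Q}\tens{\X} = \tens{0}$ and $\tens{Q}^2 = \tens{Q} = \tens{Q}^{\intercal}$), these identities give $\tens{Q}\tens{M}_{1,\eta} = \tens{M}_{1,\eta}\tens{Q} = \tens{M}_{1,\eta}$.

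Next I would multiply the defining relation $\tens{K}_{\eta}\tens{M}_{1,\eta} = \tens{P}_{\eta}$ on the left by $\tens{Q}$. Since $\tens{Q}\tens{\X} = \tens{0}$ forces $\tens{Q}\tens{P}_{\eta} = \tens{Q}$, and using $\tens{M}_{1,\eta} = \tens{Q}\tens{M}_{1,\eta}$ together with $\tens{Q}\tens{K}_{\eta}\tens{Q} = \tens{Q}\tens{K}\tens{Q} + \eta\tens{Q}$, this collapses to the single equation $(\eta\tens{Q} + \tens{Q}\tens{K}\tens{Q})\,\tens{M}_{1,\eta} = \tens{Q}$. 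Restricted to the subspace $V \coloneqq \range(\tens{Q})$ of dimension $n-m$, the operator $\tens{Q}\tens{K}\tens{Q}$ is symmetric with norm at most $\lambda_n$, because $\vect{v}^{\intercal}\tens{Q}\tens{K}\tens{Q}\vect{v} = \vect{v}^{\intercal}\tens{K}\vect{v} \le \lambda_n\|\vect{v}\|^2$ for $\vect{v} \in V$; hence for $\eta > \lambda_n$ the operator $\eta\,\mathrm{Id}_V + \tens{Q}\tens{K}\tens{Q}$ is invertible, and its inverse extended by zero on $V^{\perp}$ is the unique solution supported on $V$, so it equals $\tens{M}_{1,\eta}$.

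Solving by a Neumann series then gives $\tens{M}_{1,\eta} = \tfrac{1}{\eta}\sum_{j \ge 0}(-\eta^{-1})^{j}(\tens{Q}\tens{K}\tens{Q})^{j}\tens{Q}$. Using $\tens{Q}^2 = \tens{Q}$ repeatedly, each term telescopes as $(\tens{Q}\tens{K}\tens{Q})^{j}\tens{Q} = \tens{Q}(\tens{K}\tens{Q})^{j} = \tens{Q}\tens{N}^{j}$ with $\tens{N} = \tens{K}\tens{Q}$, so the $j = 0,1,2$ terms produce precisely $\tfrac{1}{\eta}\tens{Q}(\tens{I} - \eta^{-1}\tens{N} + \eta^{-2}\tens{N}^2)$, matching \eqref{eq:M-asymptote}. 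Finally I would bound the tail: since $\|(\tens{Q}\tens{K}\tens{Q})^{j}\tens{Q}\|$ equals the norm of $(\tens{Q}\tens{K}\tens{Q})^{j}$ on $V$ and is therefore at most $\lambda_n^{j}$, the remainder from $j \ge 3$ has norm at most $\tfrac{1}{\eta}\sum_{j \ge 3}(\lambda_n/\eta)^{j} = \lambda_n^3\eta^{-4}/(1 - \lambda_n/\eta) = \mathcal{O}(\eta^{-4}\lambda_n^3)$ as $\eta \gg \lambda_n$, which is the claimed error order.

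The main obstacle is not the expansion itself but justifying the reduction rigorously: one must verify that the projected equation $(\eta\tens{Q} + \tens{Q}\tens{K}\tens{Q})\tens{M}_{1,\eta} = \tens{Q}$ together with $\tens{M}_{1,\eta} = \tens{Q}\tens{M}_{1,\eta}\tens{Q}$ characterizes $\tens{M}_{1,\eta}$ \emph{uniquely} (so that the Neumann solution really is $\tens{M}_{1,\eta}$ and not merely a solution of a necessary condition), and to keep the operator-norm estimates on $\tens{Q}\tens{K}\tens{Q}$ sharp enough to produce the $\lambda_n^3$ scaling in the remainder. An alternative, more laborious route is to substitute the Neumann series for $\tens{K}_{\eta}^{-1}$ directly into \eqref{eq:M1}, expand $(\tens{\X}^{\intercal}\tens{K}_{\eta}^{-1}\tens{\X})^{-1}$ as a second Neumann series, and collect powers of $\eta^{-1}$; this works but requires showing that the many cross terms recombine into the projector form, which is exactly the bookkeeping the reduction above sidesteps.
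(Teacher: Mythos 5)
Your proposal is correct, and it takes a genuinely different route from the paper. The paper's proof is precisely the ``alternative, more laborious route'' you describe at the end: it substitutes the Neumann series of \(\tens{K}_{\eta}^{-1}\) into \eqref{eq:M1}, expands \((\tens{\X}^{\intercal} \tens{K}_{\eta}^{-1} \tens{\X})^{-1}\) as a second truncated Neumann series, assembles \(\tens{P}_{\eta}\), multiplies by \(\tens{K}_{\eta}^{-1}\) again, and then recombines the many cross terms into the projector form using \(\tens{Q} = \tens{I} - \tens{Q}_{\perp}\) --- four pages of bookkeeping that your reduction sidesteps entirely. Your route first establishes \(\tens{Q}\tens{M}_{1,\eta} = \tens{M}_{1,\eta}\tens{Q} = \tens{M}_{1,\eta}\) and \(\tens{Q}\tens{P}_{\eta} = \tens{Q}\), collapses the definition to the single projected resolvent equation \((\eta \tens{Q} + \tens{Q}\tens{K}\tens{Q})\tens{M}_{1,\eta} = \tens{Q}\), and expands once. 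What this buys, beyond brevity, is a genuinely sharper result: you obtain the full convergent series \(\tens{M}_{1,\eta} = \eta^{-1}\sum_{j \ge 0} (-\eta^{-1})^j \tens{Q}\tens{N}^j\) valid for all \(\eta > \lambda_n\), with an explicit geometric tail bound that delivers the \(\mathcal{O}(\eta^{-4}\lambda_n^3)\) remainder with a concrete constant, whereas the paper only tracks error orders through products of truncated series. The one ``obstacle'' you flag --- uniqueness of the solution to the projected equation --- is in fact immediate in finite dimensions: since \(\tens{M}_{1,\eta} = \tens{Q}\tens{M}_{1,\eta}\tens{Q}\) (which follows from the symmetry of \(\tens{M}_{1,\eta}\) and \(\tens{M}_{1,\eta}\tens{\X} = \tens{0}\)), the restriction \(\tens{M}_{1,\eta}|_V\) is a right inverse of the invertible operator \(\eta\,\mathrm{Id}_V + \tens{Q}\tens{K}\tens{Q}|_V\) on the \((n-m)\)-dimensional space \(V = \range(\tens{Q})\), and a right inverse of an invertible operator on a finite-dimensional space is the two-sided inverse; together with the vanishing on \(V^{\perp}\) this pins \(\tens{M}_{1,\eta}\) down completely. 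With that one sentence added, your argument is a complete and arguably cleaner proof of \Cref{lem:M-asym}.
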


\begin{proof}
    See \S \ref{prf:M-asym}.
\end{proof}

\begin{proposition}[Asymptote of the Derivative of \(\ell\)] \label[proposition]{prop:asymptote}
    Suppose \(\vect{z} \notin \range(\tens{\X})\). Let \(\lambda_n\) denote the largest eigenvalue of \(\tens{K}\). Then, at \(\eta \gg \lambda_n\),
    \begin{equation}
        \frac{\mathrm{d} \ell_{\hat{\sigma}^2(\eta)}(\eta)}{\mathrm{d} \eta} = -\frac{n-m}{2} \frac{1}{\eta^2} \left( a_0 + \frac{a_1}{\eta} + \frac{a_2}{\eta^2} + \frac{a_3}{\eta^3} \right) + \mathcal{O}\left(\eta^{-6} \lambda_n^5\right), \label{eq:asymptote}
    \end{equation}
    with the coefficients \(a_i \coloneqq \check{\vect{z}}^{\intercal} \tens{A}_i \check{\vect{z}}\), \(i = 0,\dots,3\), where \(\check{\vect{z}} \coloneqq \vect{z} / \| \vect{z} \|_{\tens{Q}}\), and,
    \begin{subequations} \label{eq:Ai}
    \begin{align}
        \tens{A}_0 &\coloneqq -\tens{Q} \left( \frac{\trace(\tens{N})}{n-m} \tens{I} - \tens{N} \right), \\
        \tens{A}_1 &\coloneqq +\tens{Q} \left( \frac{\trace(\tens{N}^2)}{n-m} \tens{I} + \frac{\trace(\tens{N})}{n-m} \tens{N} - 2 \tens{N}^2 \right), \\
        \tens{A}_2 &\coloneqq -\tens{Q}\left( \frac{\trace(\tens{N}^2)}{n-m} \tens{N} + \frac{\trace(\tens{N})}{n-m} \tens{N}^2 - 2\tens{N}^3 \right), \\
        \tens{A}_3 &\coloneqq +\tens{Q}\left( \frac{\trace(\tens{N}^2)}{n-m} \tens{N}^2 -\tens{N}^4 \right).
    \end{align}
    \end{subequations}
\end{proposition}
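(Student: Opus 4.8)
My plan is to begin from the compact expression for the first derivative that was already isolated inside the proof of \Cref{thm:eta}. Dividing \eqref{eq:der1-G} by $-2\hat{\sigma}^2(\eta)$ and recalling $\hat{\sigma}^2(\eta) = \tfrac{1}{n-m}\vect{z}^{\intercal}\tens{M}_{1,\eta}\vect{z}$ from \Cref{thm:sigma} gives
\[
\frac{\mathrm{d}\ell_{\hat{\sigma}^2(\eta)}(\eta)}{\mathrm{d}\eta} = -\frac{1}{2\hat{\sigma}^2(\eta)}\,\vect{z}^{\intercal}\tens{G}_{\eta}\vect{z}.
\]
This form is convenient because the numerator is a single quadratic form, so the task splits into (i) expanding the matrix $\tens{G}_{\eta}$ for large $\eta$, and (ii) controlling the scalar $\hat{\sigma}^2(\eta)$ in front. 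It is worth noting that $\tens{G}_{\eta}$ already carries one order of cancellation: by \eqref{eq:G} both $\tfrac{\trace(\tens{M}_{1,\eta})}{n-m}\tens{M}_{1,\eta}$ and $\tens{M}_{1,\eta}^2$ equal $\tfrac{1}{\eta^2}\tens{Q}$ to leading order, and these cancel, so $\tens{G}_{\eta}$ starts at order $\eta^{-3}$.

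For the prefactor, \Cref{lem:M-asym} gives $\tens{M}_{1,\eta} = \tfrac{1}{\eta}\tens{Q} + \mathcal{O}(\eta^{-2}\lambda_n)$, hence $\hat{\sigma}^2(\eta) = \tfrac{\|\vect{z}\|_{\tens{Q}}^2}{(n-m)\eta}\bigl(1 + \mathcal{O}(\eta^{-1}\lambda_n)\bigr)$. Substituting the leading value of $\hat{\sigma}^2(\eta)$ and using the normalized vector $\check{\vect{z}} = \vect{z}/\|\vect{z}\|_{\tens{Q}}$, the derivative becomes $-\tfrac{(n-m)\eta}{2}\,\check{\vect{z}}^{\intercal}\tens{G}_{\eta}\check{\vect{z}}$ up to the stated error. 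Writing the expansion of $\tens{G}_{\eta}$ as $\tens{G}_{\eta} = \eta^{-3}\tens{A}_0 + \eta^{-4}\tens{A}_1 + \eta^{-5}\tens{A}_2 + \eta^{-6}\tens{A}_3 + \cdots$, the factor $-\tfrac{(n-m)\eta}{2}$ shifts every power of $\eta^{-1}$ up by one and produces exactly $-\tfrac{n-m}{2}\tfrac{1}{\eta^2}\bigl(a_0 + a_1/\eta + a_2/\eta^2 + a_3/\eta^3\bigr)$ with $a_i = \check{\vect{z}}^{\intercal}\tens{A}_i\check{\vect{z}}$.

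The central computation is then the expansion of $\tens{G}_{\eta}$ itself. I would insert the three-term form $\tens{M}_{1,\eta} = \tfrac{1}{\eta}\tens{Q}(\tens{I} - \tfrac{1}{\eta}\tens{N} + \tfrac{1}{\eta^2}\tens{N}^2)$ of \eqref{eq:M-asymptote} into $\tens{G}_{\eta} = \tfrac{\trace(\tens{M}_{1,\eta})}{n-m}\tens{M}_{1,\eta} - \tens{M}_{1,\eta}^2$ and reduce all products using the idempotency of $\tens{Q}$. Since $\tens{N}\tens{Q} = \tens{N}$, the two identities $(\tens{Q}\tens{N}^i)(\tens{Q}\tens{N}^j) = \tens{Q}\tens{N}^{i+j}$ and $\trace(\tens{Q}\tens{N}^j) = \trace(\tens{N}^j)$ collapse every term onto the single family $\tens{Q}\tens{N}^k$ and make the coefficients $\trace(\tens{N})/(n-m)$ and $\trace(\tens{N}^2)/(n-m)$ appear. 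Collecting the coefficients of $\eta^{-3}$ through $\eta^{-6}$ then reproduces $\tens{A}_0,\dots,\tens{A}_3$ as listed in \eqref{eq:Ai}.

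I expect the genuine difficulty to lie not in the algebra but in certifying the error term $\mathcal{O}(\eta^{-6}\lambda_n^5)$. Two approximations are in play---the truncation of $\tens{M}_{1,\eta}$ at order $\eta^{-3}$ and the replacement of $\hat{\sigma}^2(\eta)$ by its leading asymptotic---and because the $\eta^{-2}$ contribution to $\tens{G}_{\eta}$ cancels, the absolute and relative errors must be tracked through the cancellation with care. The plan here is to bound the operator norms $\|\tens{Q}\tens{N}^k\| = \mathcal{O}(\lambda_n^k)$ and the tail of the Neumann series for $(\tens{I}+\tens{N}/\eta)^{-1}$, uniformly in $\eta \gg \lambda_n$, and to verify that the first omitted coefficient indeed carries the weight $\lambda_n^5$ (consistent with $\tens{A}_3 \sim \tens{N}^4$ together with one further factor of $\tens{N}/\eta$). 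Pinning down this uniform order estimate is the step I anticipate requiring the most care.
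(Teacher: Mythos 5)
Your proposal is correct and follows essentially the same route as the paper: starting from \eqref{eq:der1-G}, approximating the denominator $\vect{z}^{\intercal}\tens{M}_{1,\eta}\vect{z}$ by its leading term $\eta^{-1}\|\vect{z}\|_{\tens{Q}}^2$, normalizing to $\check{\vect{z}}$, and expanding $\tens{G}_{\eta}$ via \Cref{lem:M-asym} using the idempotency of $\tens{Q}$ and the identities $\tens{N}\tens{Q}=\tens{N}$, $\trace(\tens{Q}\tens{N}^j)=\trace(\tens{N}^j)$. Your concern about the error term is well placed but slightly more scrupulous than the paper itself, which concedes in a remark that the $\mathcal{O}(\eta^{-1}\lambda_n)$ relative error of the denominator approximation is deliberately not folded into the stated $\mathcal{O}(\eta^{-6}\lambda_n^5)$, since only the sign changes and roots of the numerator matter in practice.
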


\begin{proof}
    The first total derivative of \(\ell_{\hat{\sigma}^2(\eta)}(\eta)\) from \eqref{eq:der1-G} is
    \begin{equation}
        \frac{\mathrm{d} \ell_{\hat{\sigma}^2(\eta)}(\eta)}{\mathrm{d} \eta} = -\frac{n-m}{2} \frac{\vect{z}^{\intercal} \tens{G} \vect{z}}{\vect{z}^{\intercal} \tens{M}_{1,\eta} \vect{z}},
        \label{eq:d-ell-den-nom}
    \end{equation}
    where \(\tens{G}\) is defined in \Cref{thm:eta}.  We approximate the denominator of \eqref{eq:d-ell-den-nom} by only the first term in the asymptotic relation of \(\tens{M}_{1,\eta}\) in \Cref{lem:M-asym}, \ie
    \begin{equation*}
        \vect{z}^{\intercal} \tens{M}_{1,\eta} \vect{z} \approx \frac{1}{\eta} \vect{z}^{\intercal} \tens{Q} \vect{z}.
    \end{equation*}
    Recall that the quadratic term in the above can also be written as the norm \(\| \vect{z} \|_{\tens{Q}}^2\). By defining \(\check{\vect{z}} = \vect{z} / \| \vect{z} \|_{\tens{Q}}\), we absorb the denominator of \eqref{eq:d-ell-den-nom} into its numerator by
    \begin{equation}
        \frac{\mathrm{d} \ell_{\hat{\sigma}^2(\eta)}(\eta)}{\mathrm{d} \eta} = -\frac{n-m}{2} \eta \check{\vect{z}}^{\intercal} \tens{G} \check{\vect{z}}.
        \label{eq:d-ell-den-nom-2}
    \end{equation} 
    In the following steps, we derive an asymptote for each of the terms involving in \(\tens{G}\) as \(\eta^{-1}\) shrinks.

    \begin{enumerate}[leftmargin=*,align=left,label*=\emph{Step (\roman*).},ref=step (\roman*),wide]
        \item\label{item:dl-asym1} Recall from \eqref{eq:Q} that \(\tens{Q} = \tens{I} - \tens{\X} \left( \tens{\X}^{\intercal} \tens{\X} \right)^{-1} \tens{\X}^{\intercal}\). By the cyclic property of trace operation, we have,
            \begin{align*}
                \trace(\tens{Q}) &= \trace(\tens{I}_{n \times n}) - \trace\left( \left( \tens{\X}^{\intercal} \tens{\X} \right)^{-1} \tens{\X}^{\intercal} \tens{\X} \right) \\
                &= \trace(\tens{I}_{n \times n}) - \trace(\tens{I}_{m \times m}) \\
                &= n-m.
            \end{align*}
            Also, \(\tens{Q}^2 = \tens{Q}\), because the projection matrix \(\tens{Q}\) is idempotent. Hence,
            \begin{equation*}
                \trace(\tens{Q} \tens{N}) = \trace(\tens{Q} \tens{K} \tens{Q}) = \trace(\tens{K} \tens{Q}^2) = \trace(\tens{K} \tens{Q}) = \trace(\tens{N}).
            \end{equation*}
            Similarly, \(\trace(\tens{Q} \tens{N}^2) = \trace(\tens{N}^2)\). Overall, from \Cref{lem:M-asym} we have,
            \begin{equation*}
                \frac{\trace(\tens{M}_{1,\eta})}{n-m} = \frac{1}{\eta} \left( 1 - \frac{1}{\eta} \frac{\trace(\tens{N})}{n-m} + \frac{1}{\eta^2} \frac{\trace(\tens{N}^2)}{n-m} \right) + \mathcal{O}\left( \eta^{-4} \lambda_n^3 \right).
            \end{equation*}
            From \Cref{lem:M-asym} and the above relation, we have,
            \begin{equation*}
                \begin{split}
                    \frac{\trace(\tens{M}_{1,\eta})}{n-m} \tens{M_{1,\eta}} = \frac{1}{\eta^2} \tens{Q} \bigg[ \tens{I} &- \frac{1}{\eta} \left( \frac{\trace(\tens{N})}{n-m} \tens{I} + \tens{N} \right) \\
                    &+ \frac{1}{\eta^2} \left( \frac{\trace(\tens{N}^2)}{n-m} \tens{I} + \frac{\trace(\tens{N})}{n-m} \tens{N} + \tens{N}^2 \right) \\
                    &-\frac{1}{\eta^3} \left( \frac{\trace(\tens{N}^2)}{n-m} \tens{N} + \frac{\trace(\tens{N})}{n-m} \tens{N}^2 \right) \\
                    &+ \frac{1}{\eta^4} \left( \frac{\trace(\tens{N}^2)}{n-m} \tens{N}^2 \right) \bigg] + \mathcal{O}\left( \eta^{-7} \lambda_n^5 \right).
                \end{split}
            \end{equation*}
            Note that, in the above, we have not omitted higher-order terms.
           
        \item\label{item:dl-asym2} Also, \(\tens{M}_{1,\eta}^2\) is calculated from \Cref{lem:M-asym} by,
            \begin{equation*}
                \begin{split}
                    \tens{M}_{1,\eta}^2 = \tens{M}_{1,\eta} \tens{M}_{1,\eta}^{\intercal} &= \frac{1}{\eta^2} \tens{Q} \bigg[ \tens{I} - \frac{1}{\eta} \left( \tens{N} + \tens{N}^{\intercal} \right) + \frac{1}{\eta} \left( \tens{N}^2 + \tens{N} \tens{N}^{\intercal} + (\tens{N}^{\intercal})^2 \right) \\
                &-\frac{1}{\eta^3} \tens{N} \left( \tens{N} + \tens{N}^{\intercal} \right) \tens{N}^{\intercal} + \frac{1}{\eta^4} \tens{N}^2 (\tens{N}^{\intercal})^2 \bigg] \tens{Q} + \mathcal{O}\left( \eta^{-7} \lambda^5 \right).
            \end{split}
            \end{equation*}
            We can simplify the above by writing \(\tens{N} = \tens{K} \tens{Q}\) and applying \(\tens{Q}^2 = \tens{Q}\), which leads to,
            \begin{equation*}
                \tens{M}_{1,\eta}^2 = \frac{1}{\eta^2} \tens{Q} \bigg[
                \tens{I} - \frac{2}{\eta} \tens{N} + \frac{3}{\eta^2} \tens{N}^2 - \frac{2}{\eta^3} \tens{N}^3 + \frac{1}{\eta^4} \tens{N}^4 \bigg] + \mathcal{O}\left( \eta^{-7} \lambda_n^5 \right).
            \end{equation*}

        \item\label{item:dl-asym4} By subtracting the results of \ref{item:dl-asym1} from \ref{item:dl-asym2}, the matrix \(\tens{G}\) in \eqref{eq:d-ell-den-nom-2} becomes
            \begin{equation*}
                \tens{G} = \frac{1}{\eta^3} \tens{Q} \left( \tens{A}_0 + \frac{1}{\eta} \tens{A}_1 + \frac{1}{\eta^2} \tens{A}_2 + \frac{1}{\eta^3} \tens{A}_3 \right) + \mathcal{O}\left( \eta^{-7} \lambda_n^5 \right),
            \end{equation*}
            with \(\tens{A}_i\), \(i = 0,\dots,3\), defined in \eqref{eq:Ai}. Combining the above with \eqref{eq:d-ell-den-nom-2} concludes \eqref{eq:asymptote}.
    \end{enumerate}
\end{proof}

A few remarks on \Cref{prop:asymptote} are as follows.

\begin{remark}[Alternative Representation]
    The matrices \(\tens{A}_2\) and \(\tens{A}_3\) in \Cref{prop:asymptote} can also be computed from \(\tens{A}_0\) and \(\tens{A}_1\) by \(\tens{A}_2 = -\tens{A}_1 \tens{N}\) and \(\tens{A}_3 = (\tens{A}_1 + \tens{A}_0 \tens{N}) \tens{N}^2 \).
\end{remark}

\begin{remark}[Accuracy of Asymptote]
    In the proof of \Cref{prop:asymptote}, we kept all higher-order terms in the approximation of the numerator of \eqref{eq:d-ell-den-nom}, whereas, we approximated its denominator only by the first dominant term. Our motivation for such is that, in practice, we are only interested in the change of the sign and the roots of the derivative of \(\ell_{\hat{\sigma}^2(\eta)}(\eta)\), which are determined by its numerator. Also, by a slight miss-representation, the accuracy of the denominator, \ie \(\mathcal{O}\left( \eta^{-1} \lambda_n \right)\), is not incorporated in the overall accuracy of \eqref{eq:asymptote}. That is, the \(\mathcal{O}\left( \eta^{-6} \lambda_n^5 \right)\) in \eqref{eq:asymptote} only represents the accuracy of the numerator of the derivative.
\end{remark}

\begin{remark}[Validity of Asymptote] \label[remark]{rem:valid-asym}
    In practice, the condition \(\eta \gg \lambda_n\) may be loosened to \(\eta > \mathcal{O}(\lambda_n)\), as in the latter range, we can still locate a rough estimate on the roots of the derivative of \(\ell\) (see example in \S \ref{sec:max-LogL}, in particular, \Cref{fig:LogLikelihoodDerivative}).
\end{remark}

\begin{remark}[Case of Large Data]
    In many applications, such as the example we will provide in \S \ref{sec:example}, the size of the data may be significantly larger than the number of basis functions. We can leverage the assumption \(n \gg m\) to simply the asymptotic relation in \Cref{prop:asymptote} as follows. Recall from \S \ref{prf:M-asym} that  \(\tens{Q} = \tens{I} - \tens{Q}_{\perp}\) where \(\tens{Q}_{\perp} = \tens{\X} (\tens{\X}^{\intercal} \tens{\X})^{-1} \tens{\X}^{\intercal} \). Since \(\tens{Q}_{\perp}\) is a symmetric projection matrix of the rank \(m\), it has exactly \(m\) nonzero eigenvalues and all are equal to \(1\). Hence, its Frobenius squared norm is \(\| \tens{Q}_{\perp} \|_F^2 = m\), which is the square of the sum of the \(n^2\) elements of \(\tens{Q}_{\perp}\). When \(n \gg m\), the entries of the matrix \(\tens{Q}_{\perp}\) become significantly small, and we can approximate,
    \begin{equation*}
        \tens{Q} \approx \tens{I}_{n \times n}, \qquad \text{hence,} \qquad \tens{N} \approx \tens{K}.
    \end{equation*}
    Also,
    \begin{align*}
        \trace(\tens{N}) &\approx \trace(\tens{K}) = n,\\
        \trace(\tens{N}^2) &\approx \trace(\tens{K}^2) = \| \tens{K} \|_{F}^2,
    \end{align*}
    since the diagonals of the correlation matrix \(\tens{K}\) are all \(1\). Also, \(\| \tens{K} \|_F\) denotes the Frobenius norm of \(\tens{K}\), which can be computed inexpensively.
\end{remark}

\begin{remark}[Lower Order Asymptote] \label[remark]{rem:lower-asym}
    The asymptotic relation of \Cref{prop:asymptote} is derived based on the second-order approximation of \(\tens{M}_{1,\eta}\) in \Cref{lem:M-asym}. A first-order asymptote of \(\tens{M}_{1,\eta}\), however, omits both \(a_2 \eta^{-2}\) and \(a_3 \eta^{-3}\) terms in \eqref{eq:asymptote}. That is, if one wishes to employ a lower order approximation, both of \(a_2 \eta^{-2}\) and \(a_3 \eta^{-3}\) terms should be dropped, as omitting only the latter term leads to an incorrect simplification.
\end{remark}

The asymptotic relation of \Cref{prop:asymptote} is quite useful in practice. The real roots of the polynomial 
\begin{subequations}
\begin{equation}
    a_0 \eta^3 + a_1 \eta^2 + a_2 \eta + a_3 = 0, \label{eq:root_2}
\end{equation}
(if the second-order approximation is used), or
\begin{equation}
    a_0 \eta + a_1 = 0, \label{eq:root_1}
\end{equation}
\end{subequations}
(if the first-order approximation is used), can readily provide an approximation for the local extrema of \(\ell\) at large values of \(\eta\). Also, we note that the evaluation of \(a_i\) in \Cref{prop:asymptote} is fairly inexpensive. We present the applicability of the asymptotic approximation with our numerical example in \S \ref{sec:max-LogL}. 

% =============================
% Implementation Considerations
% =============================

\section{Implementation Considerations} \label{sec:efficient}

Evaluations of \(\ell\) and its derivative(s) in \Cref{thm:eta} can be expensive, particularly for large data. Below, we overview a few implementation considerations to compute the first quadratic form in \eqref{eq:der-L-eta} (which effectively computes the first derivative of \(\ell\) via \eqref{eq:der1-G}). Numerical implementation of the second quadratic form in \eqref{eq:der-L-eta} (which effectively computes the second derivative of \(\ell\) via \eqref{eq:der2-H}) follows a similar approach, but we omit this discussion for brevity.

% ----------------------------------
% Preliminary Numerical Improvements
% ----------------------------------

\subsection{Preliminary Numerical Improvements} \label{sec:prelim-numerical}

The estimation of \(\hat{\eta}\) by \Cref{thm:eta} requires several numerical evaluations of the quadratic form
\begin{equation*}
    q(\eta) \coloneqq \vect{z}^{\intercal} \left( \frac{\trace(\tens{M}_{1,\eta})}{n-m} \tens{M}_{1,\eta} - \tens{M}^2_{1,\eta} \right) \vect{z},
\end{equation*}
from \eqref{eq:der-L-eta}. Define \(\vect{w} \coloneqq \tens{M}_{1,\eta} \vect{z}\). If \(\vect{w}\) and the trace of \(\tens{M}_{1,\eta}\) are known, evaluating \(q\) is a simple algebraic operation by
\begin{equation*}
    q(\eta) = \frac{\trace(\tens{M}_{1,\eta})}{n-m} \vect{z}^{\intercal} \vect{w} - \| \vect{w} \|^2.
\end{equation*}
The vector \(\vect{w}\) can be obtained as follows. Recall that \(\tens{M}_{1,\eta}\) is defined by \eqref{eq:M1}. Define \(\vect{u} \coloneqq \tens{K}_{\eta}^{-1} \vect{z}\) and \(\tens{Y} \coloneqq \tens{K}_{\eta}^{-1} \tens{\X}\). Thus,
\begin{equation}
    \vect{w} = \vect{u} - \tens{Y} (\tens{\X}^{\intercal}\tens{Y})^{-1} \tens{Y}^{\intercal} \vect{z}. \label{eq:w}
\end{equation}
To circumvent the expensive (\ie \(\mathcal{O}(n^3)\)) numerical evaluation of \(\tens{K}_{\eta}^{-1}\) directly, we solve \(\vect{u}\) and \(\tens{Y}\) from the linear systems \(\tens{K}_{\eta} \vect{u} = \vect{z}\) and \(\tens{K}_{\eta} \tens{Y} = \tens{\X}\), since, linear systems of symmetric and positive-definite matrix \(\tens{K}_{\eta}\) can be solved very efficiently, such as by the conjugate gradient (CG) method with incomplete Cholesky factorization preconditioning (see \citet[\S 6.7 and \S 10.8.2]{SAAD-2003} and \citet[\S 10.2]{GOLUB-1996}. The computational complexity of the inexact CG method depends on matrix-vector multiplication, which requires \(\mathcal{O}(\operatorname{nnz}(\tens{K}))\) operations, where  \(\operatorname{nnz}(\tens{K})\) denotes the number of non-zero elements of \(\tens{K}\). Note that \((\tens{\X}^{\intercal} \tens{Y})\) is an \(m \times m\) matrix where \(m\) is often relatively small. Thus, inverting \(\tens{\X}^{\intercal} \tens{Y}\) directly is inexpensive.

%Note, even in scenarios where one uses a sparse approximation of the Gaussian process (see \eg \citep{QUINONERO-2005}), whereby \(\tens{K}\) is approximated by a low-rank matrix, the procedure discussed here can further improve the \(\mathcal{O}(n n'^2)\) solution process, where \(n'\) is the number of sparse samples drawn from the data.

%To circumvent the expensive (\ie \(\mathcal{O}(n^3)\)) numerical evaluation of \(\tens{K}_{\eta}^{-1}\) directly, we solve \(\vect{u}\) and \(\tens{Y}\) from the linear systems \(\tens{K}_{\eta} \vect{u} = \vect{z}\) and \(\tens{K}_{\eta} \tens{Y} = \tens{\X}\), since, linear systems of symmetric and positive-definite matrix \(\tens{K}_{\eta}\) can be solved very efficiently, such as with the conjugate gradient method with incomplete Cholesky factorization preconditioning (see \citet[\S 6.7 and \S 10.8.2]{SAAD-2003} and \citet[\S 10.2]{GOLUB-1996}. Also, \((\tens{\X}^{\intercal} \tens{Y})\) is an \(m \times m\) matrix where \(m\) is often relatively small. Thus, inverting \(\tens{\X}^{\intercal} \tens{Y}\) directly is inexpensive.

Additionally, the trace of \(\tens{M}_{1,\eta}\) can be obtained by
\begin{equation*}
    \trace(\tens{M}_{1,\eta}) = \trace(\tens{K}_{\eta}^{-1}) - \trace((\tens{\X}^{\intercal} \tens{Y})^{-1} (\tens{Y}^{\intercal} \tens{Y})).
\end{equation*}
For the second term on the right, we have used the cyclic property of the trace operator. Both \((\tens{\X}^{\intercal} \tens{Y})^{-1}\) and \(\tens{Y}^{\intercal} \tens{Y}\) are \(m \times m\) matrices, and calculating the trace of their product is inexpensive. The main challenge is to evaluate \(\trace(\tens{K}_{\eta}^{-1})\), which we discuss in the next section.

% ---------------------------------
% Computing Trace of Matrix Inverse
% ---------------------------------

\subsection{Computing the Trace of Matrix Inverse} \label{sec:pre-computation}

If \(\tens{K}\) is small enough to obtain all its eigenvalues, \(\lambda_i\), computing \(\trace(\tens{K}_{\eta}^{-1})\) for any \(\eta\) is immediate, by
\begin{equation}
    \trace(\tens{K}_{\eta}^{-1}) = \sum_{i = 1}^n \frac{1}{\lambda_i + \eta}. \label{eq:trace-lambda}
\end{equation}
In practice, this method is inefficient for large matrices as obtaining all eigenvalues of a matrix is \(\mathcal{O}(n^3)\) expensive. Another approach utilizes the Cholesky factorization of the symmetric and positive-definite matrix \(\tens{K}_{\eta} = \tens{L}_{\eta} \tens{L}_{\eta}^{\intercal}\), where \(\tens{L}_{\eta}\) is lower triangular. Then,
\begin{equation}
    \trace(\tens{K}_{\eta}^{-1}) = \trace(\tens{L}_{\eta}^{-\intercal} \tens{L}_{\eta}^{-1}) = \trace(\tens{L}_{\eta}^{-1} \tens{L}_{\eta}^{-\intercal}) = \| \tens{L}_{\eta}^{-1} \|^2_{F}, \label{eq:trace-chol}
\end{equation}
where \(\| \cdot \|_F\) is the Frobenius norm. In the second equality in the above, we used the cyclic property of the trace operator. For sparse matrices, there exist efficient methods to compute their Cholesky factorization (see \eg \citet[Ch. 4]{DAVIS-2006}). Also, the inverse of the sparse triangular matrix \(\tens{L}_{\eta}\) can be computed by \(\mathcal{O}(n^2)\) operations \cite[pp. 93-95]{STEWART-1998}.

The trace of the inverse of large and sparse matrices can also be computed by randomized estimators. The simplest of these methods is the stochastic trace estimator of \citet{HUTCHINSON-1990}. Another efficient method is the stochastic Lanczos quadrature algorithm by \citet{BAI-1996,BAI-1997,BAI-1999} and \citet{GOLUB-2009} that incorporates Golub-Kahn-Lanczos bi-diagonalization and Gauss quadrature.

In our application, the trace of \(\tens{K}_{\eta}^{-1}\) should be evaluated repeatedly during the numerical optimization procedure, where, only the hyperparameter \(\eta\) varies in the above-mentioned matrix. For this particular purpose, \cite{AMELI-2020} developed an efficient numerical technique by interpolating \(\eta \mapsto \trace(\tens{K}_{\eta}^{-1})\), and described in brief as follows.

Let \(\tau(\eta) \coloneqq \frac{1}{n} \trace(\tens{K}_{\eta}^{-1})\) and \(\tau_0 \coloneqq \tau(0)\). We pre-compute \(\tau_i \coloneqq \tau(\eta_i)\) for \(p\) interpolant points \(\eta_i\), \(i = 1,\dots,p\), using any of the methods mentioned before. Then, \(\tau(\eta)\) is interpolated by
\begin{equation}
    \frac{1}{\tau(\eta)} \approx \frac{1}{\tau_0} + \sum_{i = 0}^{p} w_i \varphi_i(\eta), \label{eq:approx}
\end{equation}
where \(\varphi_i\) are basis functions defined by
\begin{equation}
    \varphi_i(\eta) = \eta^{\frac{1}{i+1}}, \qquad i = 0,\dots,p, \label{eq:basis}
\end{equation}
and \(w_0 = 1\). The rest of the coefficients \(w_i\), \(i = 1,\dots,p\) are found by solving a linear system of \(p\) equations using a priori known values \(\tau_i\). The interpolation function \eqref{eq:approx} acknowledges the exact value of \(\tau(\eta)\) at \(\eta = 0\) and \(\eta = \eta_i\), and asymptotic to the true value of the function \(\tau(\eta)\) at \(\eta \to \infty\). When \(p = 0\), no interpolation point is introduced and the interpolation function \eqref{eq:approx} is proven to become a sharp upper bound for \(\tau(\eta)\). In practice, only a few interpolant points, \eg \(p < 5\), are sufficient to estimate \(\tau(\eta)\) in a large range of \(\eta\) with remarkable accuracy. However, if one wishes to employ many interpolant points, we suggest using an orthonormalized set of basis functions based on \eqref{eq:basis} to avoid an ill-conditioned system of equations in solving the weights \(w_i\) \citep{AMELI-2020}. 

% =================
% Numerical Example
% =================

\section{Numerical Example} \label{sec:example}

We examine our method through some examples. The data and the linear model are given in \S \ref{sec:data}. In \S \ref{sec:max-LogL}, we provide the details of the root-finding procedure for maximizing the marginal likelihood. In \S \ref{sec:various-noise}, we test our method for various noise levels and orders of the basis functions of the model. The performance and scalability of our method are examined in \S \ref{sec:scalability} for dense (\S \ref{sec:dense}) and sparse (\S \ref{sec:sparse}) correlation matrices. Lastly, in \S \ref{sec:optimal-parameters}, we seek the estimation of a broader set of covariance hyperparameters using the Mat\'{e}rn correlation function.

% --------------
% Data and Model
% --------------

\subsection{Data and Model} \label{sec:data}

We generate sample data and a linear model as follows. Consider the mean function
\begin{equation}
    \mu(\vect{x}) = \sin(\pi x_1) + \sin(\pi x_2), \label{eq:mu-data}
\end{equation}
where \(\vect{x} \coloneqq (x_1,x_2)\) is in the domain \(\mathcal{D} = [0,1]^2\). Noise is introduced to the data by the spatially decorrelated Gaussian process \(\epsilon(\vect{x}) \sim \mathcal{N}(0,\sigma_0^2 )\). We set the noise level to \(\sigma_0 = 0.2\), thus, the noise-to-signal ratio is about \(0.3\). Later, in \S \ref{sec:various-noise}, we will examine various noise levels.

To represent the mean function by a linear model, we use the monomial basis functions of up to the \(q\)\textsuperscript{th} order as,
\begin{equation}
    \vect{\basis}(\vect{x}) = (1,x_1,x_2,x_1^2,x_1 x_2,x_2^2,\dots,x_1^q,x_1^{q-1}x_2, \dots, x_1 x_2^{q-1}, x_2^{q}). \label{eq:poly-basis}
\end{equation}
So, the number of basis functions is \(m = (q+1)(q+2)/2\). We mainly employ second-order basis functions, \ie \(q = 2\), which are suitable to model the mean function \(\mu\) compared to other polynomial orders. But, in \S \ref{sec:various-noise}, we will compare models with different polynomial orders.

We choose the isotropic exponential decay kernel for the spatial correlation function by
\begin{equation}
    K(\vect{x},\vect{x}'|\alpha) = \exp \left(-\frac{\| \vect{x} - \vect{x}' \|_2}{\alpha} \right), \label{eq:exp-decay}
\end{equation}
where \(\alpha\) is the decorrelation scale of the kernel. The above exponential decay kernel represents an Ornstein-Uhlenbeck random process, which is a Gaussian and zeroth-order Markovian process \cite[p. 85]{RASMUSSEN-2006}. We set \(\alpha = 0.1\) throughout the examples. However, in \S \ref{sec:optimal-parameters}, we will find the optimal set of covariance hyperparameters, including the optimal value for \(\alpha\).

To produce discrete data, we sample \(n\) points from \(\mathcal{D}\), which yields the vector of discretized data \(\vect{z}\), the design matrix \(\tens{\X}\), and the correlation matrix \(\tens{K}\). We set \(n = 50^2\) throughout the examples. However, in \S \ref{sec:scalability}, we examine the scalability of our method over various the number of points, \(n\).

% ------------------------------
% Maximizing Marginal Likelihood
% ------------------------------

\subsection{Maximizing Marginal Likelihood} \label{sec:max-LogL}

We aim to estimate \(\sigma\) and \(\sigma_0\) of the data (through \(\eta\)) by maximizing \(\ell_{\hat{\sigma}^2(\eta)}(\eta)\). The first derivative of \(\ell_{\hat{\sigma}^2(\eta)}(\eta)\) is shown by the solid black curve in \Cref{fig:LogLikelihoodDerivative} in the range \(\eta \in [10^{-3},10^3]\). The dashed and dash-dot black curves respectively represent the upper and lower bounds of the derivative of \(\ell_{\hat{\sigma}^2(\eta)}(\eta)\), which are given by \Cref{prop:der-bound}. The embedded frame demonstrates a portion of the diagram within \(4\) orders of magnitude smaller scale on the ordinate, where a root of the derivative is expected at \(\hat{\eta}\). The orange and green solid curves are respectively the first-order and second-order asymptotic approximations given in \Cref{prop:asymptote} (see also \Cref{rem:lower-asym}). They approximate the root respectively at \(\hat{\eta}_1 = 10^{1.98}\) and \(\hat{\eta}_2 = 10^{1.64}\). Although the asymptotes are expected to be valid only for \(\eta \gg \lambda_n = 10^{2.1}\), they nonetheless estimate the root remarkably close to the true value \(\hat{\eta} = 10^{1.30}\) (see \Cref{rem:valid-asym}). The asymptotes indicate there are no more roots at \(\eta > \mathcal{O}(\hat{\eta}_2)\), which, together with the smallest eigenvalue, \(\lambda_1 = 10^{-1.07}\), narrows the search for the roots to the interval \(\eta \in [\mathcal{O}(\lambda_1),\max(\mathcal{O}(\lambda_n),\mathcal{O}(\hat{\eta}_2))] \approx [10^{-2},10^{3}]\).

\begin{figure}[bt!]
    \centering
    \includegraphics[width=0.9\textwidth]{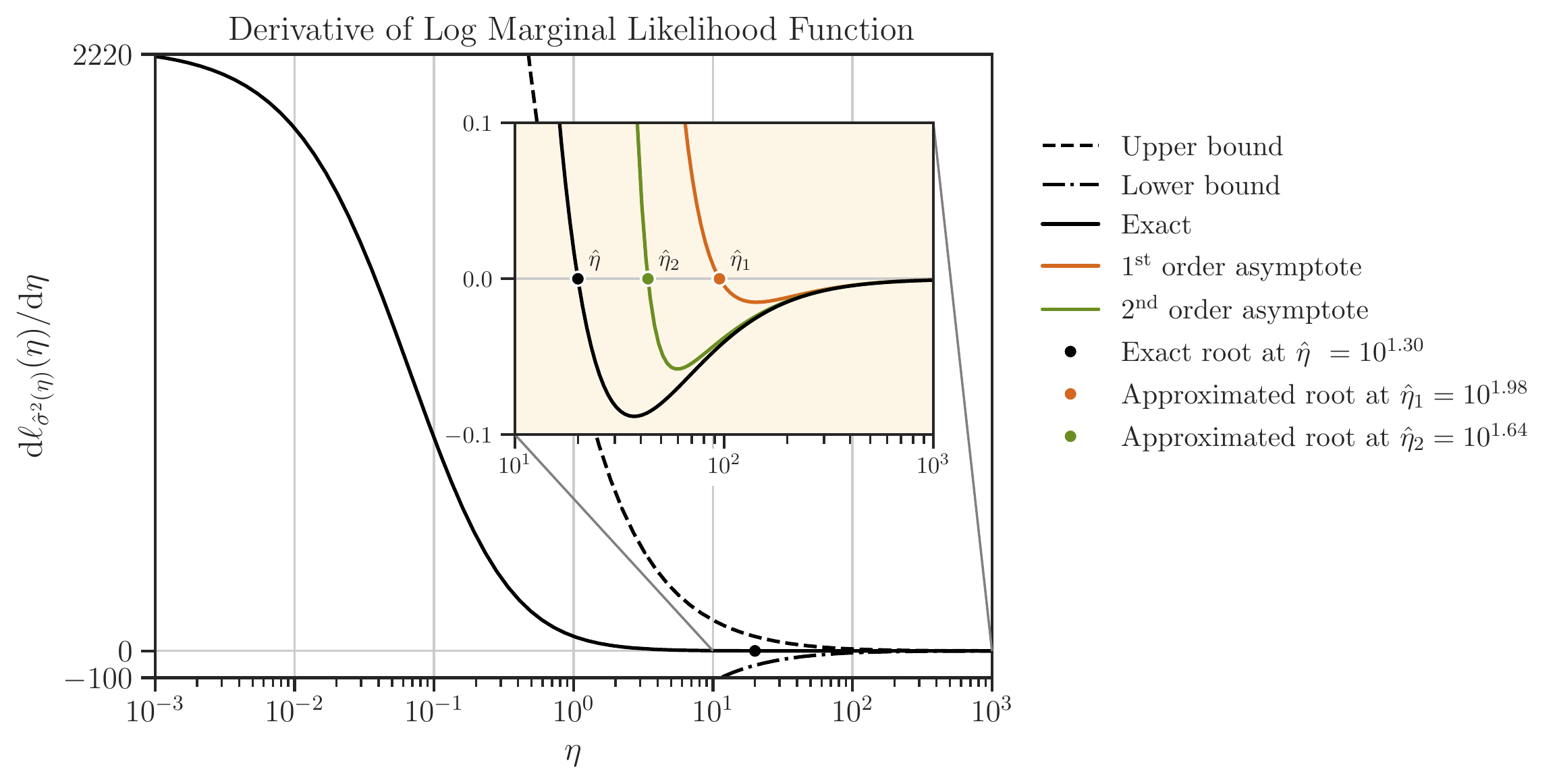}
    \caption{The first total derivative of the log marginal likelihood function \(\ell_{\hat{\sigma}^2(\eta)}(\eta)\). The embedded diagram illustrates the location of the function root in a significantly smaller scale. The colored curves are the asymptotes that approximate the black curve at large values of \(\eta\). The roots are shown by a dot on each curve.}
    \label{fig:LogLikelihoodDerivative}
\end{figure}

To find the root \(\hat{\eta}\) of \(\mathrm{d} \ell_{\hat{\sigma}^2(\eta)}(\eta) / \mathrm{d} \eta\), we opt for a Jacobian-free approach that does not require the second derivative, \(\mathrm{d}^2 \ell_{\hat{\sigma}^2(\eta)}(\eta) / \mathrm{d} \eta^2\). For our purpose, we use the method of \citet{CHANDRUPATLA-1997} which leverages both the robustness of the bisection method and the convergence rate of the inverse quadratic interpolation method. The performance of this method is comparable with the well-known Brent's root-finding algorithm. However, the Chandrupatla's method converges substantially faster on functions that are flat around their root. Further discussion and implementation details of this method can be found in \citet[\S 6.1.7.3]{SCHERER-2010}. Motivated by the multi-scale shape of the log marginal likelihood derivative in \Cref{fig:LogLikelihoodDerivative}, the Chandrupatla's method is desirable for our purpose. (However, we found that efficiency or accuracy was not considerably affected by the root-finding algorithm; \eg the root \(\hat{\eta} = 10^{1.30} \pm 10^{-6}\) in \Cref{fig:LogLikelihoodDerivative} was found with less than \(10\) iterations and \(15 \sim 20\) iterations respectively by the Chandrupatla and Brent methods.)

Based on the calculated \(\hat{\eta}\), we computed \(\hat{\sigma}(\hat{\eta}) = 0.0437\) from \Cref{thm:sigma}, which yields the estimate for the noise variance \(\hat{\sigma}_0 = 0.1958\). With \(2.09 \%\) relative error, the estimate \(\hat{\sigma}_0\) is fairly close to the standard deviation \(\sigma_0 = 0.2\) of the noise that we added to the data, which is a compelling result considering the high noise level compared to the mean function. 

% ------------------------------------------------------
% Comparison of Various Basis Functions and Noise Levels
% ------------------------------------------------------

\subsection{Comparison of Various Basis Functions and Noise Levels} \label{sec:various-noise}

We investigated the effect of various basis functions on noise estimation. \Cref{table:various-basis} shows results from using polynomial basis functions in \eqref{eq:poly-basis} with various orders \(q\), with the input noise \(\sigma_0\) is fixed. The last column shows the relative error of estimating the noise level, \ie \(|\sigma_0 - \hat{\sigma}_0| / \sigma_0\). 

Since the data with the mean function in \eqref{eq:mu-data} has a convex shape, the zeroth and first-order polynomial basis cannot fit the data properly, leading to \(13.8\%\) relative error in the estimation of \(\sigma_0\), which also leads to a non-negligible residual \(\delta\) with \(\sigma = 0.227\). In contrast, the estimation error by second and higher-order basis functions is significantly reduced. However, higher-order basis functions (\ie \(q > 6\)) become multi-collinear, and will generally introduce an undesirable over-fitting \cite[\S 3.3]{GELFAND-2010}.

\begin{table}[htb!]
    \centering
    \begin{small}
        \begin{tabular}{l c c c c c r}
        \toprule
        \multicolumn{3}{c}{Input parameters} & \multicolumn{4}{c}{Results} \\
        \cmidrule(lr){1-3} \cmidrule(lr){4-7}
        Basis function \(\vect{\basis}\) & \(m\) & \(\sigma_0\) & \(\log_{10}(\hat{\eta})\) & \(\hat{\sigma}\) & \(\hat{\sigma}_0\) & Error \\
        \midrule
        Polynomial, \(0\)\textsuperscript{th} order & \(1\)  & \(0.2\) & \(-0.2376\) & \(0.2268\) & \(0.1725\) & \(13.73 \%\) \\
        Polynomial, \(1\)\textsuperscript{st} order & \(3\)  & \(0.2\) & \(-0.2410\) & \(0.2275\) & \(0.1723\) & \(13.80 \%\) \\
        Polynomial, \(2\)\textsuperscript{nd} order & \(6\)  & \(0.2\) & \(+1.3007\) & \(0.0437\) & \(0.1958\) & \(2.09 \%\) \\
        Polynomial, \(3\)\textsuperscript{rd} order & \(10\) & \(0.2\) & \(+1.2527\) & \(0.0462\) & \(0.1955\) & \(2.20 \%\) \\
        Polynomial, \(4\)\textsuperscript{th} order & \(15\) & \(0.2\) & \(+2.4755\) & \(0.0114\) & \(0.1973\) & \(1.33 \%\) \\
        Polynomial, \(5\)\textsuperscript{th} order & \(21\) & \(0.2\) & \(+2.2975\) & \(0.0140\) & \(0.1972\) & \(1.36 \%\) \\
        Trigonometric                               & \(4\)  & \(0.2\) & \(\infty\)  & \(0.0000\) & \(0.1974\) & \(1.28 \%\) \\
        \bottomrule
    \end{tabular}
    \end{small}
    \caption{Comparison of \(\hat{\sigma}\) and \(\hat{\sigma}_0\) versus various basis functions.} \label{table:various-basis}
\end{table}

We note that by increasing the order of the polynomial basis functions, the estimation error of \(\hat{\sigma}_0\) reaches a limit and cannot be improved any further. This is due to the insufficiency of the number of samples \(n\). To verify that the persisting error is not due to the order of the basis functions, we can use  trigonometric basis functions instead, such as
\begin{equation}
    \vect{\basis}(\vect{x}) = \left( \sin(\pi x_1),\cos(\pi x_1),\sin(\pi x_2),\cos(\pi x_2)\right). \label{eq:trig-basis}
\end{equation}
The above basis is the same function type used in the mean of the data, \(\mu(\vect{x})\), in \eqref{eq:mu-data}. Hence, we expect the residual function \(\delta(\vect{x})\) vanish, since, in this case \(\vect{z} \in \range(\tens{\X})\) (see also \Cref{rem:z-rangeX}). As shown in the last row of the table for the trigonometric basis, \(\delta(\vect{x})\) vanishes as \(\hat{\sigma} = 0\). However, the estimate of data noise, \(\hat{\sigma}_0\), still shows \(1.28\%\) error. By increasing the data samples \(n\), the estimation error will be reduced, which we have verified.

%From \Cref{table:various-basis}, we can notice that the results for each two consecutive polynomial orders \(q = 2k+1\) and \(q = 2k\) are comparable. In other words, the accuracy of modeling the mean function \eqref{eq:mu-data} with the polynomial basis function improves after every even polynomial order. Hence, it is economical to only use polynomials of even orders for the particular mean function in our example.

\begin{figure}[bt!]
    \centering
    \includegraphics[width=0.9\textwidth]{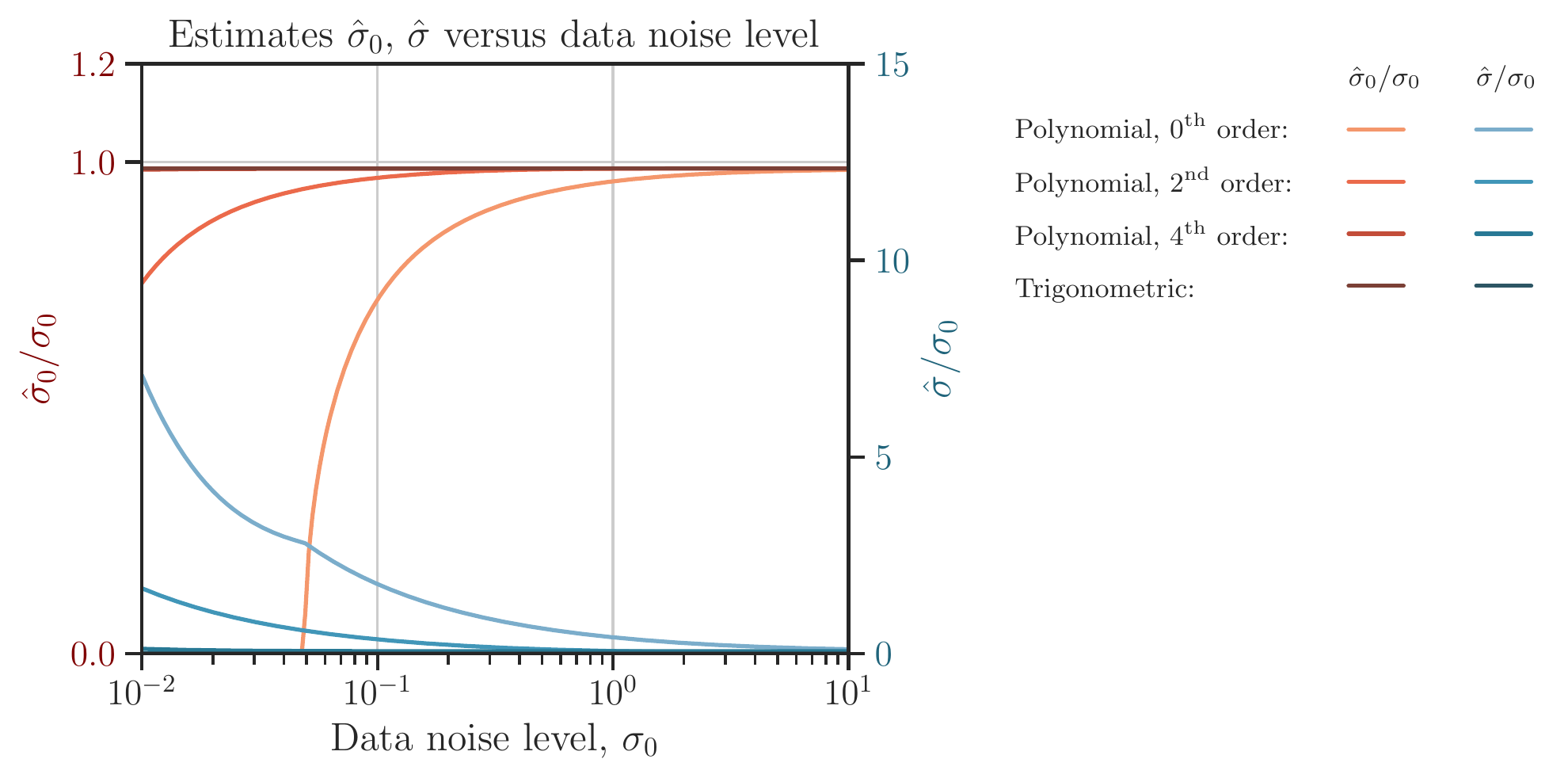}
    \caption{The ratio of estimated noise, \(\hat{\sigma}_0\), (left axis), and the estimated error variance, \(\hat{\sigma}\), (right axis) over the input data noise level, \(\sigma_0\), are compared with various polynomial basis functions. The curve \(\hat{\sigma}/\sigma_0\) for the fourth-order polynomial and trigonometric bases have almost vanished.}
    \label{fig:NoiseLevel}
\end{figure}

In \Cref{fig:NoiseLevel} the ratio of estimations of \(\hat{\sigma}_0\) (left axis), and \(\hat{\sigma}\) (right axis) over the data noise \(\sigma_0\), is shown versus the data noise level \(\sigma_0\) and various polynomial basis orders, \(q\). The fourth-order polynomial and the trigonometric bases produce almost identical results as the corresponding curves are overlaid. They can also estimate noise steadily for the entire range of the data noise level. We also note \(\hat{\sigma}_0 / \sigma_0\) for fourth-order and trigonometric bases is slightly less than \(1\), which the estimation can be improved by using more data samples. On the other hand, the second-order polynomial basis results in large estimation errors at lower noise levels, yet the estimation at \(\sigma_0 = 0.2\) has reasonable accuracy.

% ---------------------------
% Performance and Scalability
% ---------------------------

\subsection{Performance and Scalability} \label{sec:scalability}

We investigate the scalability of our method by the computational cost over the data size, \(n\). The numerical experiments were performed on an Intel Xeon E5-2640 v4 processor using shared memory parallelism. The computational costs are measured by the total CPU times of all computing cores. We consider numerical algorithms for both dense and sparse correlation matrices respectively in \S \ref{sec:dense} and \S \ref{sec:sparse}.

% ------------------------
% Dense Correlation Matrix
% ------------------------

\subsubsection{Dense Correlation Matrix} \label{sec:dense}

The scalability of our method with dense correlation matrix \(\tens{K}\) is shown in \Cref{fig:ElapsedTime-Dense} by the curves using red themes. The dashed lines are the line-fit, which indicates a computational complexity of around \(\mathcal{O}(n^{2.5})\). The pre-computation (light red curve) is part of the computation before maximizing \(\ell\), which includes calculating \(\tau(\eta_i)\) at interpolant points \(\eta_i\) to interpolate the trace of \(\tens{K}_{\eta}^{-1}\) afterward based on \eqref{eq:approx} (see \S \ref{sec:pre-computation}). The cost of pre-computation is proportional to the number of interpolant points, \(\eta_i\). The interpolant points in our experiment are \(\eta_i \in \{1,10,40,10^2,10^3\}\) to cover a wide range of \(\eta\). The function \(\tau(\eta_i)\) at interpolant points is calculated by the Cholesky factorization method described in \eqref{eq:trace-chol}. The medium red curve in the figure shows the processing time of finding zeros of the derivative of \(\ell_{\hat{\sigma}^2(\eta)}(\eta)\) as described in \S \ref{sec:max-LogL} based on the numerical algorithm of \S \ref{sec:prelim-numerical}. The dark red curve is the total time of computation, which is the combination of the pre-computation and maximizing the likelihood function.

\begin{figure}[bt!]
    \centering
    \includegraphics[width=\textwidth]{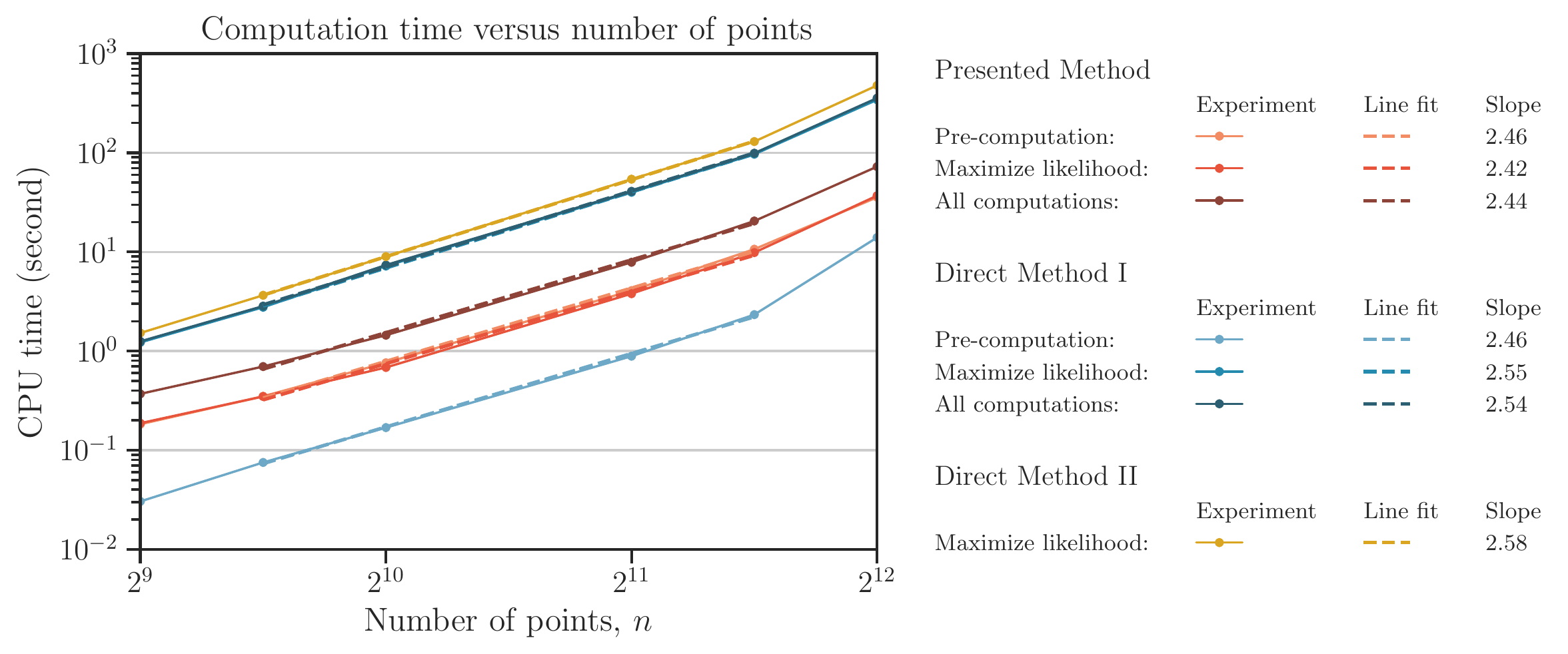}
    \caption{The elapsed time of computation versus data size for the dense correlation matrix.}
    \label{fig:ElapsedTime-Dense}
\end{figure}

We also compared the performance of our method with a conventional method of finding hyperparameters \((\sigma^2,\sigma_0^2)\). In the conventional approach, we maximize \(\ell(\sigma^2,\sigma_0^2)\) directly over the two-dimensional space of hyperparameters \((\sigma^2,\sigma_0^2)\) using an implementation of the Nelder-Mead optimization algorithm by \citet{GAO-2012}. In \Cref{fig:ElapsedTime-Dense}, this approach is indicated by the \emph{direct} method. The difference between the direct method I (blue theme curves) and direct method II (yellow curve) is in the computation of determinant term, \(\log |\gtens{\Sigma}|\), which is an expensive part of computing \(\ell\) using \eqref{eq:log-marginal-likelihood}. We describe both methods as follows.

In the first direct method, we obtain the log-determinant directly from the eigenvalues of \(\gtens{\Sigma}_{\sigma^2,\sigma_0^2} = \sigma^2 \tens{K} + \sigma_0^2 \tens{I}\) by,
\begin{equation*}
    \log |\gtens{\Sigma}_{\sigma^2,\sigma_0^2}| = \sum_{i=1}^n \log (\sigma^2 \lambda_i + \sigma_0^2),
\end{equation*}
where \(\lambda_i\) are the eigenvalues of \(\tens{K}\). For small and dense matrices (here, \(n \leq 2^{12}\)), all eigenvalues of \(\tens{K}\) can be efficiently obtained using tridiagonal reduction and shifted QR factorization. In \Cref{fig:ElapsedTime-Dense}, computing the log-determinant with eigenvalues is indicated as the pre-computation and shown by the light blue curve. The pre-computation of direct method I has almost an order of magnitude less processing time compared to our presented method, but with the similar scalability. However, the overall processing time of the direct method I (the dark blue curve) is an order of magnitude higher than our presented method. This is because our presented method, and the direct methods, reach convergence, respectively around \(10\) and \(400\) evaluations of the function \(\ell\). For all methods, the iterations are terminated when the convergence error of the estimated hyperparameters reached the tolerance of \(10^{-6}\).

For larger matrices, computing the log-determinant through its eigenvalues is not efficient. Instead, in the direct method II, we have computed the log-determinant of \(\gtens{\Sigma}_{\sigma^2,\eta} = \sigma^2 \tens{K}_{\eta}\) by the lower-triangular Cholesky factorization, \(\tens{L}_{\eta}\), of the matrix \(\tens{K}_{\eta}\) using,
\begin{equation}
    \log |\gtens{\Sigma}_{\sigma^2,\eta}| = n \log(\sigma^2) + 2 \trace(\log (\operatorname{diag}(\tens{L}_{\eta}))), \label{eq:log-det}
\end{equation}
where \(\operatorname{diag}(\tens{L}_{\eta})\) is the matrix of the diagonal elements of \(\tens{L}_{\eta}\). To obtain the above relation, we used the fact that the determinant of a lower-triangular matrix is the product of is diagonals.

The overall performance with the direct method II is shown by the yellow curve in \Cref{fig:ElapsedTime-Dense}. The direct method I performs slightly better than the direct method II, but, the latter has better scalability as we approach larger data sizes. As it can be seen from \Cref{fig:ElapsedTime-Dense}, the computational performance for larger dense matrices at \(n \geq 2^{12}\) deviates from their linear trend, which necessitates sparse matrix algorithms.

% -------------------------
% Sparse Correlation Matrix
% -------------------------

\subsubsection{Sparse Correlation Matrix} \label{sec:sparse}

A substantial difference in the computational cost between our presented method and the direct method is uncovered in large data sets. In the following, we compare these two methods for large data sets using a sparse correlation matrix.

To produce a sparse correlation matrix, a compactly supported kernel is used. Often, the tail of the correlation function is tapered to produce a compactly supported correlation kernel \citep{ZHANG-2008}, such as by multiplying the kernel \(K(\vect{x},\vect{x}')\) by the indicator function \(\mathbbm{1}_{K > \kappa}(\vect{x},\vect{x}')\), where \(\mathbbm{1}_{K > \kappa} = 1\) if \(K > \kappa\), and zero otherwise. The tapering threshold, \(\kappa\), should be large enough to reduce the density of the sparse matrix while carrying most of its information. On the other hand, \(\kappa\) should be small enough to not introduce undesirable oscillations in the spectral density of the kernel, which eradicates its positive-definiteness \citep{GENTON-2002}. In the following numerical experiment, we set the tapering threshold to \(\kappa = 0.03\) and the kernel decorrelation scale to \(\alpha = 0.005\). These settings result in a sparse correlation matrix with non-zero element density \(\rho = \operatorname{Vol}_d(-\alpha \log(\kappa)) \approx 10^{-3}\), which is the volume of a \(d\)-ball (here, \(d = 2\)) of radius \(-\alpha \log \kappa\).

We recall that the trace of \(\tens{K}_{\eta}^{-1}\) in the interpolation function \(\tau(\eta)\) is the most expensive term in the computation of the derivative of \(\ell\) and requires special attention. A possible approach is to use the sparse Cholesky factorization (such as by CHOLMOD \citet{CHEN-2008}) to compute the trace of \(\tens{K}_{\eta}^{-1}\) with the method described in \eqref{eq:trace-chol}. Instead, we employ a stochastic trace estimator as this is a more highly scalable class of methods. In particular, we used the stochastic Lanczos quadrature (see \S \ref{sec:pre-computation}) with Golub-Kahn-Lanczos bi-diagonalization technique described in \cite{UBARU-2017}. The computational cost of stochastic Lanczos quadrature is \(\mathcal{O}\left( (\operatorname{nnz}(\tens{K})l + l^2 n) n_v \right)\) (see \citet[p. 1083]{UBARU-2017}), where \(\operatorname{nnz}(\tens{K})\) is the number of non-zero elements of the sparse matrix \(\tens{K}\) and is equal to \(\rho n^2\). Also, \(l\) is the Lanczos degree which is the number of Lanczos iterations for Golub-Kahn-Lanczos bi-diagonalization, and \(n_v\) is the number of random vectors with Rademacher distribution for Monte-Carlo sampling. In our application, the cost of obtaining the interpolation function \(\tau(\eta)\) is \(p\) times the above-mentioned complexity, and we recall \(p\) is the number of interpolant points. Thus, the overall complexity of the calculation of \(\tau(\eta)\) is,
\begin{equation*}
    \mathcal{O} \left( (\rho n^2 + nl) n_v l p \right).
\end{equation*}
The processing time of the pre-computation (\ie calculation of \(\tau(\eta)\)) in our experiment with sparse matrices is shown by the light red curve in \Cref{fig:ElapsedTime-Sparse}. In the stochastic Lanczos quadrature method, we have employed \(n_v = 20\) number of Monte-Carlo random vectors with the Lanczos degree \(l = 20\) while keeping the sparse matrix density \(\rho\) constant throughout the experiment. We note that at \(n > 2^{15}\), the complexity of the method is proportional to \(\mathcal{O}(n^2)\), which can be verified in the figure. The medium red curve corresponds to the maximization of \(\ell\) using the method described in \ref{sec:prelim-numerical}. The dark red curve corresponds to the overall computation, which scales with the complexity of \(\mathcal{O}(n^2)\) as most of the computational cost is due to the pre-computation.

\begin{figure}[bt!]
    \centering
    \includegraphics[width=\textwidth]{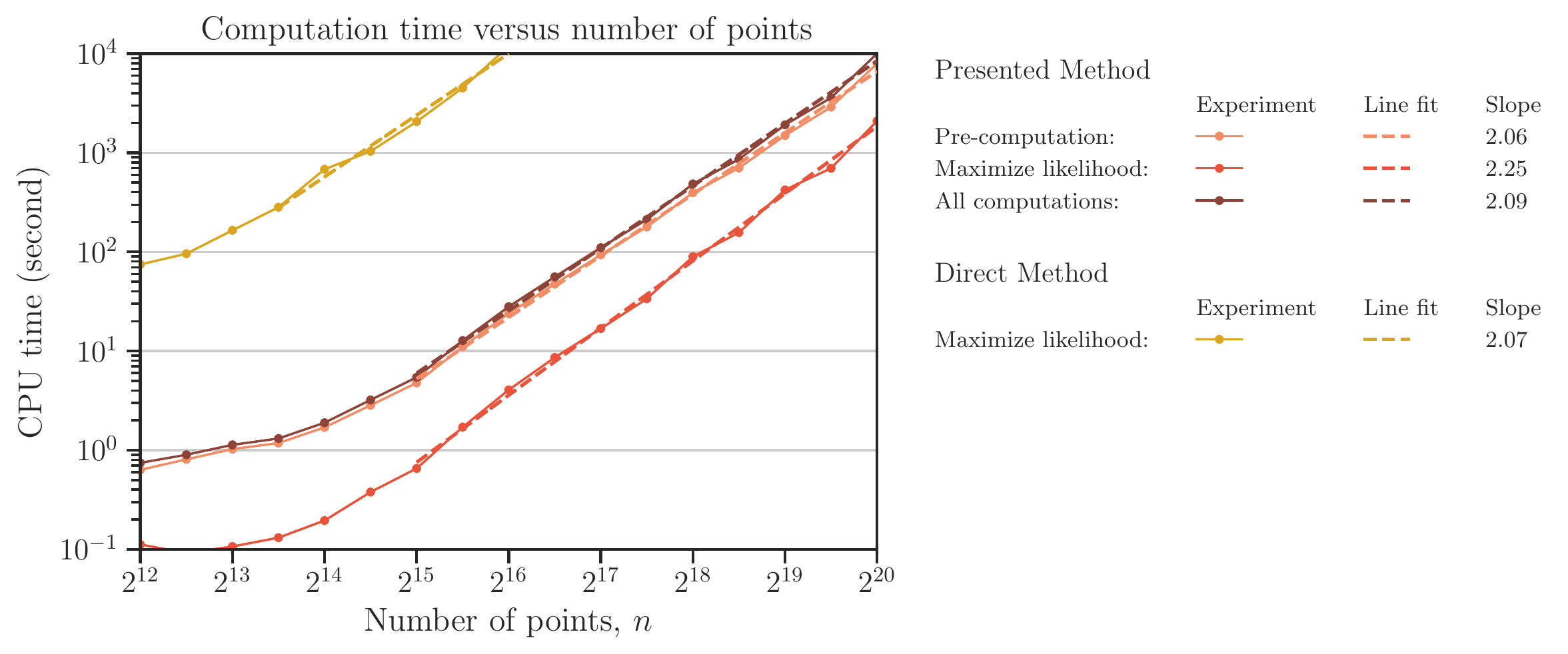}
    \caption{The elapsed time of computation versus data size for the sparse correlation matrix.}
    \label{fig:ElapsedTime-Sparse}
\end{figure}

We again compared the presented method with the direct method of optimizing \(\ell(\sigma^2,\sigma_0^2)\) in the hyperparameter space \((\sigma^2,\sigma_0^2)\). Recall that the computationally expensive term in the evaluation of \(\ell\) in \eqref{eq:log-marginal-likelihood} is the log-determinant, \(\log |\gtens{\Sigma}|\). Computing the determinant of large sparse matrices has been studied, such as by \cite{REUSKEN-2002}, \cite{IPSEN-2011}, and \cite{ERLEND-2014}. As before, we employ the stochastic Lanczos quadrature with Golub-Kahn-Lanczos bi-diagonalization technique to estimate log-determinant via trace by \eqref{eq:log-det}. For a fair comparison with our presented method, we set the same hyperparameters, namely, the number Monte-Carlo random vectors \(n_v = 20\) and the Lanczos degree \(l = 20\). The performance of the overall computation of the direct method is shown by the yellow curve in \Cref{fig:ElapsedTime-Sparse}. The direct method demonstrates similar scalability, \ie \(\mathcal{O}(n^2)\), but it takes two to three orders of magnitude more processing time than our presented method.

It is important to note that besides the performance advantage, the presented method is robust in the sense that it converges for all initial guess points in the root-finding algorithm. In contrast, the direct method is very sensitive to the initial guess of hyperparameters; often the solution diverges or converges to a wrong local maximum of \(\ell\) instead of the global maximum. We will compare the robustness of the methods with examples in \S \ref{sec:optimal-parameters}.

% --------------------------
% Modeling with Matern Class
% --------------------------

\subsection{Modeling with Mat\'{e}rn Class} \label{sec:optimal-parameters}

So far, we have found optimal values of the covariance function in the hyperparameter space \(\vect{\theta} = (\sigma^2,\sigma_0^2)\) using the exponential decay correlation function in \eqref{eq:exp-decay}, assuming the decorrelation scale hyperparameter \(\alpha\) is given. In the following, we relax the exact nature of the correlation function. To this end, we employ the Mat\'{e}rn correlation described in \S \ref{sec:matern}, which is a richer class of functions with an additional hyperparameter. Moreover, we allow the hyperparameters of the correlation to be found optimally by maximizing the posterior distribution of all hyperparameters. We study two cases, the posterior with and without prior distributions respectively in \S \ref{sec:without-prior} and \S \ref{sec:with-prior}.

% ---------------------------
% Matern Correlation Function
% ---------------------------

\subsubsection{Mat\'{e}rn Correlation Function} \label{sec:matern}

The isotropic correlation function of \cite{MATERN-1960} (see also \cite[p. 31]{STEIN-1999}) is given by
\begin{equation} \label{eq:matern-corr}
    K(\vect{x},\vect{x}'|\alpha,\nu) = \frac{2^{1-\nu}}{\Gamma(\nu)} \left( \sqrt{2 \nu} \frac{\| \vect{x} - \vect{x}' \|_2}{\alpha} \right)^{\nu} K_{\nu} \left( \sqrt{2 \nu} \frac{\| \vect{x} - \vect{x}' \|_2}{\alpha} \right),
\end{equation}
where \(\Gamma\) is the Gamma function and \(K_{\nu}\) is the modified Bessel function of the second kind of order \(\nu\) \citep[\S 9.6]{ABRAMOWITZ-1964}. The hyperparameter \(\alpha > 0\) is the decorrelation scale of the kernel as before. The Mat\'{e}rn class is a powerful formulation because the hyperparameter \(\nu > 0\) can enable modulation of the smoothness of the underlying random process. A Gaussian process with such correlation function is \(\lceil v \rceil -1\) differentiable in \(L^2\) (square mean) sense, where \(\lceil \cdot \rceil\) is the integer ceiling function.

The Mat\'{e}rn kernel is a widely accepted correlation function for both univariate applications \citep{GUTTORP-2006} as well as multivariate applications \citep{GNEITING-2010}. Some known correlation functions can be obtained from the Mat\'{e}rn function for various \(\nu\) \cite[Table 1]{GUTTORP-2006}. In the limit \(\nu \to 0^{+}\), the correlation function becomes discontinuous and the correlation matrix \(\tens{K}\) tends to the identity matrix, \(\tens{I}\), which then, \(\sigma\) and \(\sigma_0\) become indistinguishable. %The Whittle function \citep{WHITTLE-1954}, a commonly used correlation in hydrology, is obtained at \(\nu = 1\). 
At \(\nu = \frac{1}{2}\), the Mat\'{e}rn correlation function reduces to the exponential decay kernel that we used in \eqref{eq:exp-decay}. %In general, for a half-integer \(\nu\), the Mat\'{e}rn correlation formulation is simplified to the product of an exponential function and a polynomial of order \(\lfloor \nu \rfloor\) where \(\lfloor \cdot \rfloor\) is the integer floor function (see \cite[Table 2.2]{GELFAND-2010}). 
The values \(\nu = \frac{3}{2}\) and \(\frac{5}{2}\) correspond to second- and third-order auto-regressive models, and they are commonly used in machine learning applications \cite[p. 85]{RASMUSSEN-2006}. Also, in the limit \(\nu \to \infty\), the Mat\'{e}rn correlation function approaches the smooth Gaussian kernel,
\begin{equation}
    K(\vect{x},\vect{x}'|\alpha,\infty) = \exp \left( -\frac{1}{2} \frac{\| \vect{x} - \vect{x}' \|_2^2}{\alpha^2}\right). \label{eq:gaussian-kernel}
\end{equation}
In practice, the Mat\'{e}rn function with \(\nu > 25\) is almost Gaussian with less than \(1 \%\) difference error. The Mat\'{e}rn function for various smoothness hyperparameters \(\nu\) is shown in \Cref{fig:MaternKernel}.

\begin{figure}[bt!]
    \centering
    \includegraphics[width=0.48\textwidth]{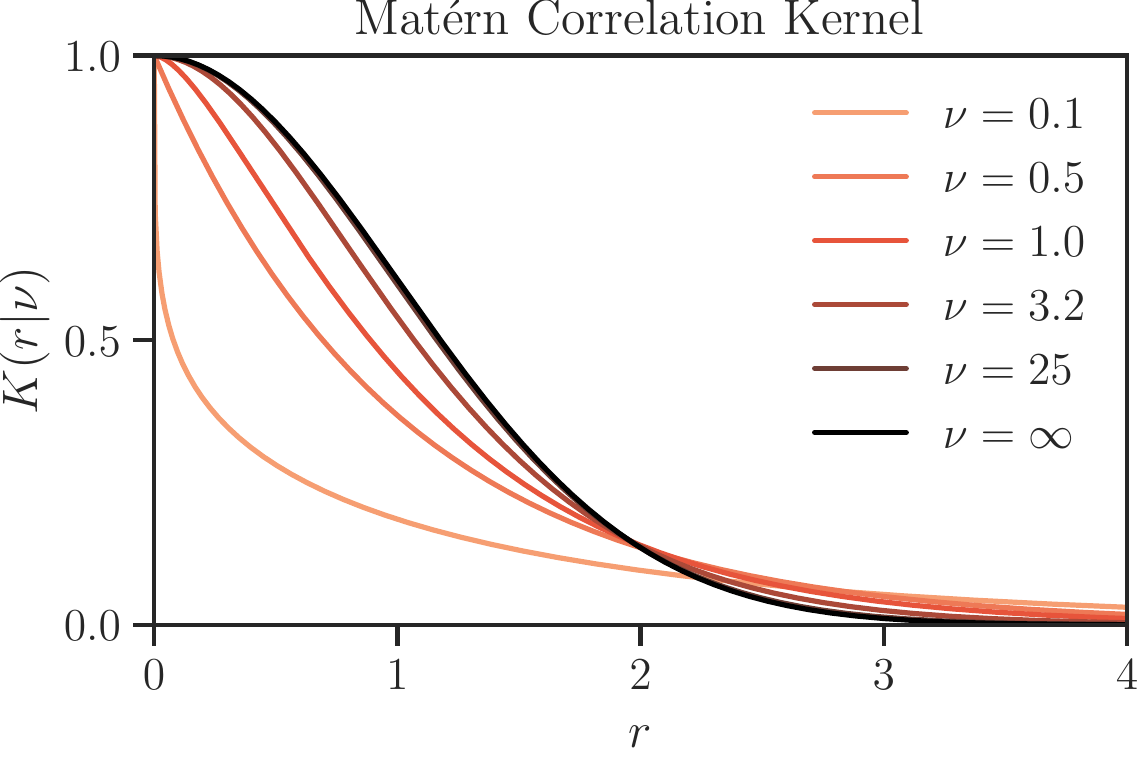}
    \caption{Mat\'{e}rn function for the normalized isotropic radial distance \(r \coloneqq \| \vect{x} - \vect{x}' \|_2 / \alpha\). The curve with \(\nu = 25\) is overlaid by the Gaussian kernel shown by the black curve.}
    \label{fig:MaternKernel}
\end{figure}

For the examples in the previous sections, we sought \(\hat{\vect{\theta}} = (\hat{\sigma}^2,\hat{\sigma}_0^2)\) assuming fixed structure hyperparameters \((\alpha,\nu) = (0.1,0.5)\). In the following, we aim to find an optimal value of the broader hyperparameters, \(\hat{\vect{\theta}} = (\hat{\alpha},\hat{\nu},\hat{\sigma}^2,\hat{\sigma}_0^2)\), for the covariance matrix,
\begin{equation*}
    \tens{\Sigma}_{\sigma^2,\sigma_0^2,\alpha,\nu} = \sigma^2 \tens{K}_{\alpha,\nu} + \sigma_0^2 \tens{I}.
\end{equation*}

% ---------------------------------------
% Optimal Parameters Using Uniform Priors
% ---------------------------------------

\subsubsection{Optimal Hyperparameters Using Uniform Priors} \label{sec:without-prior}

For a given set of hyperparameters \((\alpha,\nu)\), we can find the optimal hyperparameters \((\hat{\sigma}^2,\hat{\sigma}_0^2)\) using the presented method. Thus, instead of maximizing the log marginal likelihood \(\ell(\alpha,\nu,\sigma^2,\sigma_0^2)\) over the entire four-dimensional space of hyperparameters, we only maximize the \emph{profile} log marginal likelihood function \(\ell_{\hat{\sigma}^2(\hat{\eta}),\hat{\eta}}(\alpha,\nu)\) in a two-dimensional space, where the other two hyperparameters, \((\sigma^2,\sigma_0^2)\), are profiled out using their optimal values \(\hat{\sigma}^2(\alpha,\nu)\) and \(\hat{\sigma}_0^2(\alpha,\nu)\), or equivalently, through \(\hat{\eta}(\alpha,\nu)\). \Cref{fig:Profile-Log-Likelihood} illustrates the difference of profiled log marginal likelihood function \(\ell_{\hat{\sigma}^2(\hat{\eta}),\hat{\eta}}(\alpha,\nu)\) with its maximum. The maximum of the plot, \(\ell_{\hat{\sigma}^2(\hat{\eta}),\hat{\eta}}(\hat{\alpha},\hat{\nu})\), at the location \((\hat{\alpha},\hat{\nu})\) is shown by the white dot on the top edge of the diagram. This function asymptotically approaches to its maximal point at \(\nu \to \infty\). Hence, we limited the domain to \(\nu < 25\) as the variation above \(\nu > 25\) is insignificant (see also \Cref{fig:MaternKernel}). In this figure, we have used a data size of \(n = 30^2\), an additive noise with standard deviation \(\sigma_0 = 0.2\), and the quadratic polynomial basis functions, \ie \(q = 2\).

\begin{figure}[bt!]
    \centering
    \includegraphics[width=0.6\textwidth]{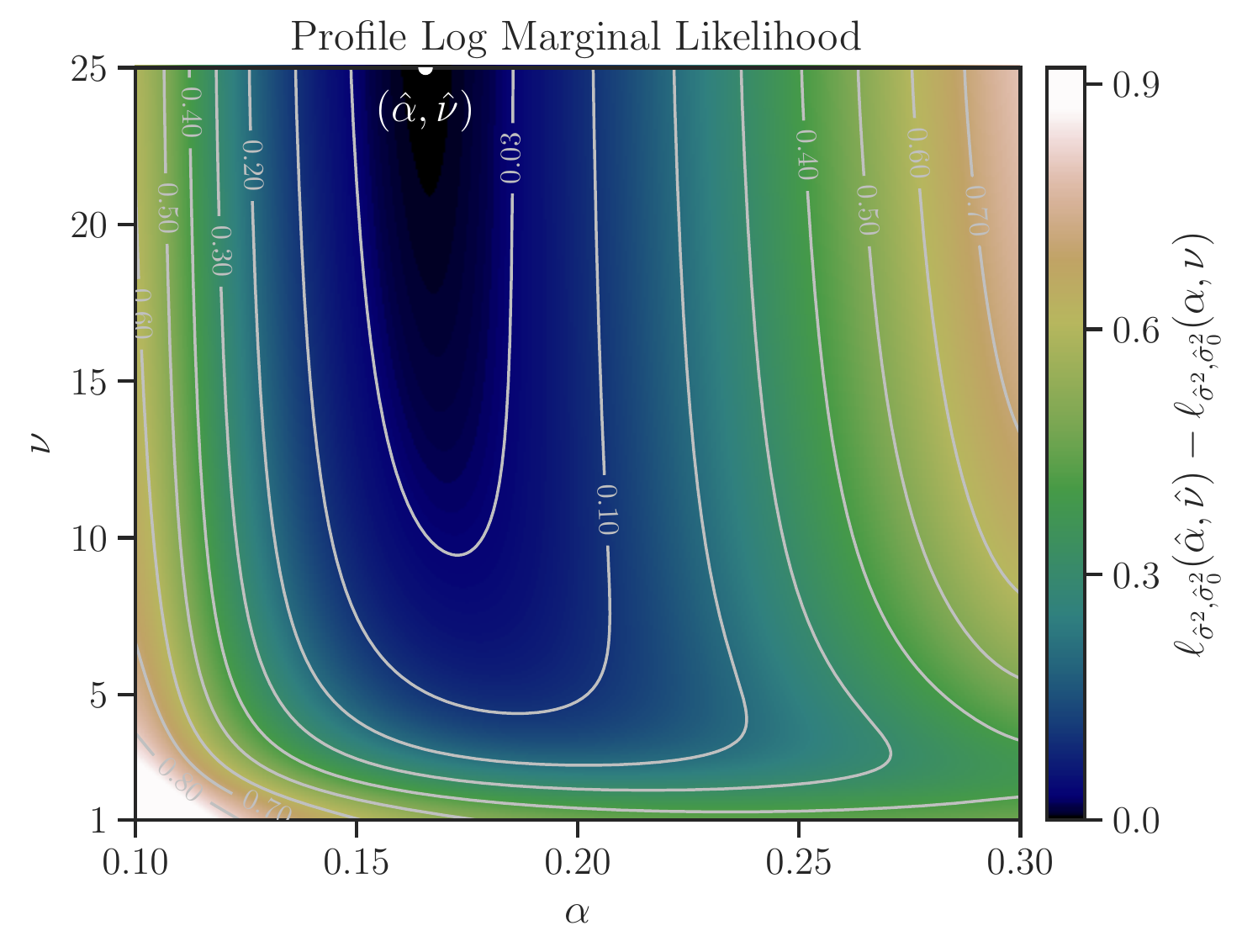}
    \caption{The log marginal likelihood \(\ell\) as a function of hyperparameters \((\alpha,\nu)\), where the other two hyperparameters, \ie \((\sigma^2,\sigma_0^2)\), are profiled out using their optimal estimates \(\hat{\sigma}^2(\alpha,\nu)\) and \(\hat{\sigma}_0^2(\alpha,\nu)\). The plot shows the difference of \(\ell\) with its maximum value at the point \((\hat{\alpha},\hat{\nu})\).}
    \label{fig:Profile-Log-Likelihood}
\end{figure}

Note that in the preceding analysis, we used \((\alpha,\nu) = (0.1,0.5)\), which is not optimal as revealed by \Cref{fig:Profile-Log-Likelihood}. To find the maximal point, we used Nelder-Mead's simplicial algorithm \citep{GAO-2012}, which is a direct optimization method that does not require evaluating the Jacobian or Hessian of the function. Between each iteration of the optimization algorithm, in an intermediate step, we find \((\hat{\sigma}^2,\hat{\sigma}_0^2)\) using our method. The algorithm is terminated when all estimated hyperparameters reach an accuracy of \(10^{-4}\) tolerance after \(N\) evaluations of the function \(\ell\). The resulting optimal hyperparameters are shown in the first row of \Cref{table:without-prior}. The estimated standard deviation of the noise in the seventh column of the table is very close to the actual data noise, \ie \(\sigma_0 = 0.2\).

\begin{table}[bht!]
    \centering
    \begin{adjustbox}{width=1\textwidth}
    \begin{small}
    \begin{tabular}{l l l c c c c c c}
        \toprule
        \multicolumn{3}{c}{Optimization Method} & \multicolumn{4}{c}{Optimal Hyperparameters (\(\pm 10^{-4}\))} & \multicolumn{2}{c}{Convergence} \\
        \cmidrule(lr){1-3} \cmidrule(lr){4-7} \cmidrule(lr){8-9}
        Param. Space & Initial Guess & Algorithm & \(\hat{\alpha}\) & \(\hat{\nu}\) & \(\hat{\sigma}\) & \(\hat{\sigma}_0\) & \(\max(\ell)\) & \(N\) \\
        \midrule
        \((\alpha,\nu)\) using \(\eta\) & \((0.1,1)\) & Nelder-Mead & \(0.1652\) & \(24.9999\) & \(0.0495\) & \(0.2031\) & \(958.5051\) & \(142\) \\
        \((\alpha,\nu,\sigma,\sigma_0)\) & \((0.1,1,0.05,0.05)\) & Nelder-Mead & \(0.1652\) & \(24.9997\) & \(0.0495\) & \(0.2031\) & \(958.5051\) & \(938\) \\
        \((\alpha,\nu,\sigma,\sigma_0)\) & \((0.1,1,0.02,0.02)\) & Nelder-Mead & \(0.0000\) & \(00.0000\) & \(0.0000\) & \(0.1172\) & \(527.4998\) & \(507\) \\
        \((\alpha,\nu,\sigma,\sigma_0)\) & Latin-Hypercube & Diff. Evolut. & \(0.1656\) & \(24.4360\) & \(0.0495\) & \(0.2031\) & \(958.5047\) & \(13640\) \\
    \bottomrule
    \end{tabular}
    \end{small}
    \end{adjustbox}
    \caption{Comparison of approaches to find the optimal hyperparameters of the Mat\'{e}rn covariance function using uniform prior distributions. The first row is based on our method, and the rest of the rows use the direct method. In the second and third rows, local optimization is applied using different initial guesses. In the fourth row, global optimization is used.} \label{table:without-prior}
\end{table}

We note that our presented method is robust in the sense that regardless of the initial guesses (\eg \((\alpha,\nu) = (0.1,1)\) given in the second column of the table), the algorithm converges to the unique maximal point with a relatively small number of function evaluations, \eg \(N \sim 100\). To better observe its robustness, we compare our method with the conventional method of finding the optimal hyperparameters by maximizing \(\ell\) directly in the four-dimensional space of \((\alpha,\nu,\sigma^2,\sigma_0^2)\).

Unfortunately, the solution with the direct approach using local optimization is prone to divergence. We can attempt to avoid divergence by imposing bounds on the hyperparameters using uniform prior distributions in a finite domain as
\begin{equation*}
    p(\nu) = H(\nu) - H(\nu - 25),
    \qquad \text{and} \qquad
    p(\alpha) = H(\alpha),
\end{equation*}
where \(H\) is the Heaviside step function. We note, even with the above priors that restrict the domain, the solution of the direct method with a reasonable initial guess of hyperparameters is not guaranteed to converge, and those solutions that converge, only do so because of the prior constraint. With the above priors, we maximize the profiled log posterior distribution
\begin{equation}
    \log p_{\hat{\sigma}^2(\hat{\eta}),\hat{\eta}}(\alpha,\nu | \vect{z}) = \ell_{\hat{\sigma}^2(\hat{\eta}),\hat{\eta}}(\alpha,\nu) + \log p(\alpha) + \log p(\nu). \label{eq:post}
\end{equation}

Examples of the direct optimization of the above posterior are given in the second to the fourth rows of the table \Cref{table:without-prior}. In the second row, we set the initial value \(\sigma = 0.05\), which is relatively close to the solution \(\hat{\sigma} = 0.0495\). However, it takes a significantly large number of iterations to achieve convergence of the solution despite the fortuitous initial condition. We note that without the constraining prior distributions, this solution would not converge. In the third row of the table, we demonstrated the sensitivity of the initial guess for the direct method, where, by changing the initial \(\sigma\) and \(\sigma_0\) slightly compared to the second row, the solution diverges. In this example, due to the implication of the bounding prior distributions, the diverged solution is constrained to the corner of the domain.

We observe that the profile log marginal likelihood \(\ell_{\hat{\sigma}^2(\hat{\eta}),\hat{\eta}}(\alpha,\nu)\) in our example appears to have a single maximum point. Therefore, local search optimization, such as the Nelder-Mead method, can efficiently locate its maximum. In contrast, a local search method is less exploitable to locate the global maxima of the log marginal likelihood \(\ell(\alpha,\nu,\sigma^2,\sigma_0^2)\) due to the potential of being trapped in a local minimum, which necessitates the utilization of a global search method. To this end, we applied the differential evolution optimization method \citep{STORN-1997} with best/1/exp strategy and 400 initial guess points distributed by the Latin hypercube method. The results are given in the fourth row of \Cref{table:without-prior}. This method finds the maximum point, but at a significant cost of many function evaluations.

% -------------------------------------------
% Optimal Parameters Using Nonuniform Priors
% -------------------------------------------

\subsubsection{Optimal Hyperparameters Using Nonuniform Priors} \label{sec:with-prior}

In the previous section, we estimated the smoothness hyperparameter \(\hat{\nu} = 25\) (or \(\hat{\nu} = \infty\), since we imposed a constraint). This suggests the data in our example is best represented by a Gaussian correlation function\footnote{Note that the Gaussian function here (as a spatial correlation) is unrelated to the assumption of the Gaussian process prior for \(z(\vect{x})\).}, \eqref{eq:gaussian-kernel}, which has a smooth sample path. However, realizations of natural phenomena are rarely represented by smooth stochastic processes, particularly, when observed by insufficient samples. To moderate the smoothness hyperparameter, \(\nu\), we apply nonuniform prior distributions. Using the non-informative priors given by \cite{HANDCOCK-1993,HANDCOCK-1994}, we set
\begin{equation}
    p(\nu) = \frac{H(\nu)}{\left( 1 + \frac{\nu}{25} \right)^2}, 
    \qquad \text{and} \qquad
    p(\alpha) = \frac{H(\alpha)}{\left( 1 + \alpha \right)^2}. \label{eq:inv-square-prior}
\end{equation}
The rationale for the prior \(p(\alpha)\) in the above is to limit the decorrelation scale since the data points in our example are confined in the unit square \(\mathcal{D}\). Besides the above distributions, many other priors are also commonly used, such as Gaussian-gamma conjugate.

\begin{remark}[Priors for \(\sigma^2\) and \(\sigma_0^2\)]
    Prior distributions may also be chosen for other hyperparameters, \ie \((\sigma^2,\sigma_0^2)\). For instance, \cite{BERGER-2001} and \cite{PAULO-2005} investigated various priors for spatial data and Gaussian processes, such as Jeffery's reference prior, Jefferey's rule prior, and independence Jefferey's prior. To use a nonuniform prior for either of \(\sigma^2\) or \(\sigma_0^2\), our formulations respectively in \Cref{thm:sigma} and \Cref{thm:eta} should be modified slightly to incorporate the priors. For simplicity, we use uniform priors for \(\sigma^2\) and \(\sigma_0^2\).
\end{remark}

The profile marginal posterior \(\log p_{\hat{\sigma}^2(\eta),\hat{\eta}}(\alpha,\nu)\) can be obtained from the profile marginal likelihood and the priors \eqref{eq:inv-square-prior} using \eqref{eq:post}. The profile marginal posterior is shown in \Cref{fig:Profile-Log-Posterior} using the same data and model as before. Due to the presence of nonuniform priors, the optimal point, \((\hat{\alpha},\hat{\nu})\), on the figure is relocated to lower values of \(\nu\) as compared to \Cref{fig:Profile-Log-Likelihood} without priors. However, \(\hat{\alpha}\) remains relatively the same.

\begin{figure}[bt!]
    \centering
    \includegraphics[width=0.6\textwidth]{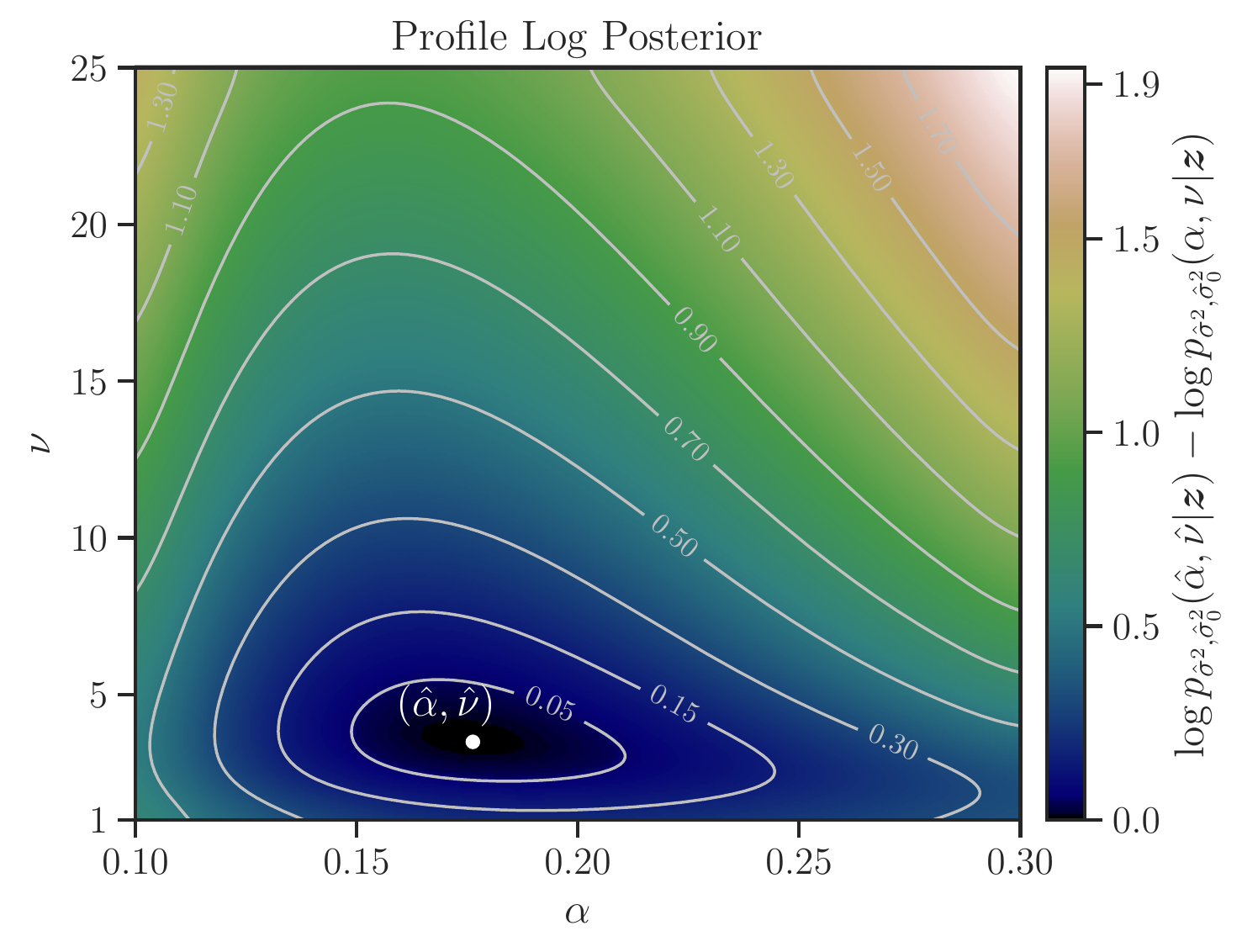}
    \caption{The marginal posterior, \(\log p(\alpha,\nu | \vect{z})\), using the inverse squared priors for hyperparameters \((\alpha,\nu)\). The hyperparameters \((\sigma^2,\sigma_0^2)\) are profiled out by using their optimal estimates \(\hat{\sigma}^2(\alpha,\nu)\) and \(\hat{\sigma}_0^2(\alpha,\nu)\). The plot shows the difference between \(\log p(\alpha,\nu | \vect{z})\) with its maximum at the point \((\hat{\alpha},\hat{\nu})\).}
    \label{fig:Profile-Log-Posterior}
\end{figure}

To find the optimal values of the hyperparameters, we use the Nelder-Mead local search optimization on the profile marginal posterior. The results, using an accuracy of \(10^{-4}\), are given in the first row of \Cref{table:with-prior}. The estimates \(\hat{\sigma}\) and \(\hat{\sigma}_0\) using the priors are identical to the results of \Cref{table:without-prior} without using the priors. However, \(\hat{\nu}\) is moderated considerably, indicating the correlation function is third-order differentiable. 

\begin{table}[bht!]
    \centering
    \begin{adjustbox}{width=1\textwidth}
    \begin{small}
    \begin{tabular}{l l l c c c c c c}
        \toprule
        \multicolumn{3}{c}{Optimization Method} & \multicolumn{4}{c}{Optimal Hyperparameters (\(\pm 10^{-4}\))} & \multicolumn{2}{c}{Convergence} \\
        \cmidrule(lr){1-3} \cmidrule(lr){4-7} \cmidrule(lr){8-9}
        Param. Space & Initial Guess & Algorithm & \(\hat{\alpha}\) & \(\hat{\nu}\) & \(\hat{\sigma}\) & \(\hat{\sigma}_0\) & \(\operatorname{max}(p)\) & \(N\) \\
        \midrule
        \((\alpha,\nu)\) using \(\eta\) & \((0.1,1)\) & Nelder-Mead & \(0.1769\) & \(3.2098\) & \(0.0495\) & \(0.2031\) & \(957.7796\) & \(82\) \\
        \((\alpha,\nu,\sigma,\sigma_0)\) & \((0.1,1,0.05,0.05)\) & Nelder-Mead & \(0.0235\) & \(0.0545\) & \(0.0416\) & \(0.2012\) & \(955.5810\) & \(444\) \\
        \((\alpha,\nu,\sigma,\sigma_0)\) & Latin-Hypercube & Diff. Evolut. & \(0.1769\) & \(3.2037\) & \(0.0498\) & \(0.2030\) & \(957.7796\) & \(8065\) \\
    \bottomrule
    \end{tabular}
    \end{small}
    \end{adjustbox}
    \caption{Comparison of approaches to find the optimal hyperparameters of the Mat\'{e}rn covariance function using nonuniform prior distributions. The first row is based on our method, and the rest of the rows use the direct method. In the second row, local optimization is applied, while in the third row, global optimization is used.} \label{table:with-prior}
\end{table}

We compared our method with the direct optimization of the posterior on the four-dimensional space of all hyperparameters, \((\alpha,\nu,\sigma^2,\sigma_0^2)\), with the results shown in the second and third rows of the \Cref{table:with-prior}. In this approach, the Nelder-Mead algorithm fails to converge to the global maximum for almost all initial guess points we tried. For example, in the second row of the table, we have examined the same initial point as in \Cref{table:without-prior}, but the solution converges to a local maximum with a lower posterior value than the global maximum. In the third row of the table, we have shown the optimization with the differential evolution algorithm with best/1/bin strategy and 400 initial guess points distributed by the Latin hypercube method. As a global search method, the differential evolution optimization successfully finds the global maximum, but again, with a significant cost of many function evaluations.

% ==========
% Conclusion
% ==========

\section{Summary and Conclusion} \label{sec:conclusion}
% \section{Concluding Remarks} \label{sec:conclusion}

In this paper, we have presented an efficient method for estimating the covariance hyperparameters of linear models for Gaussian process regression. The model we studied features correlated residual error, and uncorrelated additive noise. Namely, the presented method can efficiently estimate the variance of the residual error, \(\sigma^2\), and the variance of the additive noise, \(\sigma_0^2\), based on maximizing the marginal likelihood function. We summarize the paper as follows.

% \begin{enumerate}[leftmargin=*,align=left,wide]
\begin{enumerate}[leftmargin=*,align=left]
    \item We explicitly derived the derivatives of the marginal likelihood function with respect to an arbitrary hyperparameter of the covariance function in \Cref{prop:der-L}. By solving for the roots of the derivative with respect to the arbitrary hyperparameter, we obtain its optimal value. The hyperparameters of interest herein were the variances \((\sigma^2,\sigma_0^2)\). 

    \item We reduced the dimensionality of the hyperparameter estimation. Specifically, in \Cref{prop:der-eta}, we obtained the derivatives of the marginal likelihood function with respect to the ratio of the noise variance over the residual error variance, as a new hyperparameter \(\eta\). This formulation enables simplification to a function of only one hyperparameter. Consequently, the estimation of this hyperparameter is reduced to a univariate root-finding problem as presented in \Cref{thm:eta}. The other variance hyperparameters are then retrieved by \Cref{thm:sigma}.

    \item We investigated the properties of the likelihood function. %Namely, in \Cref{prop:sign-indefinite}, we showed that the quadratic forms of both the first and second-order derivatives are sign-indefinite. 
    We obtained the bounds of the marginal likelihood function and its derivatives in \Cref{prop:der-bound} and \Cref{cor:l-bound}. These bounds depend on the largest and the smallest eigenvalues of the correlation matrix. The bounds may be used to define an interval to initially search for the roots of the derivative of the marginal likelihood function. 

    \item We derived an asymptotic approximation of the derivative of the marginal likelihood function in \Cref{prop:asymptote}. Unlike the likelihood function, evaluating the asymptotic relation is inexpensive, and can be employed to obtain further knowledge of the problem before the hyperparameter estimation. For instance, the asymptotic relation can roughly estimate the location of the root of the derivative at large values of \(\eta\).
\end{enumerate}

The presented method was extensively examined on variations of a test problem, for which the results can be readily reproduced. We highlight the results of our numerical experiments as follows.

\begin{enumerate}[leftmargin=*,align=left]
    \item Our numerical examples demonstrated that the derived asymptotic relations approximate the root quite close to its true value, with the second-order asymptote providing a better result. Also, the asymptotic approximations may specify a rough interval in which to locate the roots.

    \item We compared our presented method with traditional (direct) hyperparameter estimation methods, where the marginal likelihood function is maximized directly in the two-dimensional space of hyperparameters. While only reducing one dimension with our method, the implications on performance gain are significant for both small and large data sizes. (a) For small data and dense correlation matrix, we observed an \(\mathcal{O}(n^{2.5})\) complexity and around an order of magnitude performance gain compared to the direct method. (b) For large data and sparse correlation matrix, the computational complexity in our numerical experiment steadily scales by \(\mathcal{O}(n^{2})\) with the performance gain up to three orders of magnitudes compared to the direct method.

    \item In a more general experiment, we relaxed assumptions regarding the nature of the spatial correlation of the data. This led to the estimation of hyperparameters of a general Mat\'{e}rn covariance function (described by the decorrelation \(\alpha\) and  smoothness \(\nu\)) in addition to the error variance and noise variance. We compared the direct hyperparameter estimation problem for these four variables \((\alpha,\nu,\sigma^2,\sigma_0^2)\) against our presented method with three concomitant variables \((\alpha,\nu,\eta)\). While having only one less hyperparameter, our presented method demonstrated a considerable advantage. Namely, our method was far more robust and insensitive to the initial guess of hyperparameters. In contrast, a direct optimization with a local search algorithm encountered many difficulties, such as sensitivity to the initial guess of hyperparameters and divergence of the solution. 
\end{enumerate}

% Acknowledgment
% \vspace{0.3cm}
% \noindent\textbf{Acknowledgments.}
\paragraph{Acknowledgments.}
The authors acknowledge support from the National Science Foundation, award number 1520825, and American Heart Association, award number 18EIA33900046.

% ========
% Appendix
% ========

% \appendix
% \renewcommand*{\thesection}{\Alph{section}}

\begin{appendices}
\section{Proofs of \texorpdfstring{\Cref{sec:properties}}{Section 4}}

% ----------------------
% Proof of Proposition 8
% ----------------------

\subsection{Proof of \texorpdfstring{\Cref{prop:der-bound}}{Proposition 8}} \label{prf:der-bound}

    We prove \eqref{eq:ldot-bound} and \eqref{eq:lddot-bound} respectively from the \ref{item:lambda1} to \ref{item:lambda5} and from \ref{item:lambda6} to \ref{item:lambda8} in the following.

    \begin{enumerate}[leftmargin=*,align=left,label*=\emph{Step (\roman*).},ref=step (\roman*),wide]
        \item\label{item:lambda1} By substituting \(\hat{\sigma}^2(\eta)\) from \Cref{thm:sigma} into the first derivative of \(\ell\) in \Cref{prop:der-eta}, we have,
    \begin{equation}
        \frac{\mathrm{d} \ell_{\hat{\sigma}^2(\eta)}(\eta)}{\mathrm{d} \eta} = -\frac{n-m}{2} \left( \frac{\trace(\tens{M}_{1,\eta})}{n-m} - \frac{\vect{z}^{\intercal} \tens{M}_{1,\eta}^2 \vect{z}}{\vect{z}^{\intercal} \tens{M}_{1,\eta} \vect{z}} \right). \label{eq:der-L-alt-3}
    \end{equation}
        We find bounds for the two terms in the parenthesis in the right-side of the above as follows.

        \item\label{item:lambda2} Let \(\eigM_i\) again denote the eigenvalues of \(\tens{M}_{1,\eta}\). Recall from \ref{item:gh1} in the proof of \Cref{prop:sign-indefinite} that \(\eigM_1 = \dots = \eigM_m = 0\), while the rest of eigenvalues \(\eigM_i\), \(i > m\), are positive. Also, from \eqref{eq:mu-bar-1} and \eqref{eq:mu-bar-ineq} recall that,
    \begin{equation}
        \eigM_{m+1} \leq \frac{\trace(\tens{M}_{1,\eta})}{n-m} \leq \eigM_n. \label{eq:mean-mu}
    \end{equation}

    \item\label{item:lambda3} \(\tens{M}_{1,\eta}\) is positive semi-definite, so \(\tens{M}_{1,\eta}^{\sfrac{1}{2}}\) is well-defined. Let \(\vect{w} \coloneqq \tens{M}_{1,\eta}^{\sfrac{1}{2}} \vect{z}\). Thus, the second term in the right-side of \eqref{eq:der-L-alt-3} is,
    \begin{equation*}
        \frac{\vect{z}^{\intercal} \tens{M}_{1,\eta}^2 \vect{z}}{\vect{z}^{\intercal} \tens{M}_{1,\eta} \vect{z}} = \frac{\vect{w}^{\intercal} \tens{M}_{1,\eta} \vect{w}}{\| \vect{w} \|^2},
    \end{equation*}
    which is the Rayleigh quotient for the matrix \(\tens{M}_{1,\eta}\). By the Courant-Fisher's mini-max principle \cite[Theorem 4.2.11]{HORN-1990}, the Rayleigh quotient is bounded by the largest and the smallest eigenvalues of \(\tens{M}_{1,\eta}\). However, here, the smallest non-zero eigenvalue is relevant since we imposed \(\vect{z} \notin \range(\tens{\X})\). That is, the vanishing eigenvalues, which correspond to the kernel space, do not apply. Therefore,
    \begin{equation}
        \eigM_{m+1} \leq \frac{\vect{z}^{\intercal} \tens{M}^2_{1,\eta} \vect{z}}{\vect{z}^{\intercal} \tens{M}_{1,\eta} \vect{z}} \leq \eigM_n. \label{eq:rayleigh}
    \end{equation}

\item\label{item:lambda4} Combining \eqref{eq:mean-mu} and \eqref{eq:rayleigh} with \eqref{eq:der-L-alt-3} yields,
    \begin{equation}
        \left| \frac{\mathrm{d} \ell_{\hat{\sigma}^2(\eta)}(\eta)}{\mathrm{d} \eta} \right| \leq \frac{n-m}{2} \left( \eigM_n - \eigM_{m+1} \right). \label{eq:ddot-l-ineq}
    \end{equation}
    We relate this inequality to the eigenvalues of \(\tens{K}\) in the next step.

\item\label{item:lambda5} Because \(\tens{M}_{1,\eta} = \tens{K}_{\eta}^{-1} \tens{P}_{\eta}\) is the projection of \(\tens{K}_{\eta}^{-1}\) onto a subspace, it can be shown, for instance, by the Courant-Fischer's min-max principle, that the largest eigenvalue of \(\tens{M}_{1,\eta}\) cannot be larger than the largest eigenvalue of \(\tens{K}_{\eta}^{-1} = (\tens{K} + \eta \tens{I})^{-1}\), \ie
    \begin{subequations}
    \begin{equation}
        \eigM_n \leq (\lambda_1 + \eta)^{-1}. \label{eq:ineq-mu-1}
    \end{equation}
        Similarly, the smallest non-zero eigenvalue of \(\tens{M}_{1,\eta}\) cannot be smaller than the smallest eigenvalue of \(\tens{K}_{\eta}^{-1}\), \ie
        \begin{equation}
            \eigM_{m+1} \geq (\lambda_n + \eta)^{-1}. \label{eq:ineq-mu-2}
        \end{equation}
        \end{subequations}
        Combining \eqref{eq:ddot-l-ineq}, \eqref{eq:ineq-mu-1}, and \eqref{eq:ineq-mu-2} concludes \eqref{eq:ldot-bound}.
 
    \item\label{item:lambda6} Finding bound for the second derivative in \eqref{eq:lddot-bound} follows correspondingly. We substitute \(\hat{\sigma}^2(\eta)\) from \Cref{thm:sigma} in \Cref{prop:der-eta} to represent the second derivative of \(\ell\) as,
    \begin{equation}
        \frac{\mathrm{d}^2 \ell_{\hat{\sigma}^2(\eta)}(\eta)}{\mathrm{d} \eta^2} =
        \frac{n-m}{2} \left( \frac{\trace(\tens{M}_{1,\eta}^2)}{n-m} + 
        \left( \frac{\vect{z}^{\intercal} \tens{M}_{1,\eta}^2 \vect{z}}{\vect{z}^{\intercal} \tens{M}_{1,\eta} \vect{z}} \right)^2
        - 2 \frac{\vect{z}^{\intercal} \tens{M}_{1,\eta}^3 \vect{z}}{\vect{z}^{\intercal} \tens{M}_{1,\eta} \vect{z}} \right).
        \label{eq:dd-l-eta-alt}
    \end{equation}
    From the definition of \(\vect{w}\) in \ref{item:lambda3}, the above can be written as,
    \begin{equation}
        \frac{\mathrm{d}^2 \ell_{\hat{\sigma}^2(\eta)}(\eta)}{\mathrm{d} \eta^2} =
        \frac{n-m}{2} \left( \frac{\trace(\tens{M}_{1,\eta}^2)}{n-m} + 
        \left( \frac{\vect{w}^{\intercal} \tens{M}_{1,\eta} \vect{w}}{\| \vect{w} \|^2} \right)^2
        - 2 \frac{\vect{w}^{\intercal} \tens{M}_{1,\eta}^2 \vect{w}}{\| \vect{w} \|^2} \right).
        \label{eq:der2-w}
    \end{equation}

\item\label{item:lambda7} Bounds on the above relation can be obtained as follows. Firstly, recall from \eqref{eq:mu-bar-2} and \eqref{eq:mu-bar-ineq} that,
    \begin{equation}
        \eigM_{m+1}^2 \leq \frac{\trace(\tens{M}_{1,\eta}^2)}{n-m} \leq \eigM_n^2. \label{eq:M2-bound}
    \end{equation}
    Secondly, bounds for the second term in the parenthesis in \eqref{eq:der2-w} is given previously in \eqref{eq:rayleigh}. Thirdly, the last term in the parenthesis in \eqref{eq:der2-w} is the Rayleigh quotient for the matrix \(\tens{M}_{1,\eta}^2\). Thus, with a similar to the argument in \ref{item:lambda3}, we have,
    \begin{equation}
        \eigM_{m+1}^2 \leq \frac{\vect{z}^{\intercal} \tens{M}_{1,\eta}^3 \vect{z}}{\vect{z}^{\intercal} \tens{M}_{1,\eta} \vect{z}} \leq \eigM_n^2. \label{eq:M3-bound}
    \end{equation}
    Combining \eqref{eq:rayleigh}, \eqref{eq:M2-bound}, and \eqref{eq:M3-bound} with \eqref{eq:dd-l-eta-alt} yields,
    \begin{equation}
        \left| \frac{\mathrm{d}^2 \ell_{\hat{\sigma}^2(\eta)}(\eta)}{\mathrm{d} \eta^2} \right| \leq (n-m) \left( \eigM_n^2 - \eigM_{m+1}^2 \right). \label{eq:dd-l-eta-ineq}
    \end{equation}

\item\label{item:lambda8} With a similar argument as in \ref{item:lambda5}, the above inequality can be expressed with the eigenvalues of \(\tens{K}\). Namely, combining \eqref{eq:ineq-mu-1}, \eqref{eq:ineq-mu-2} with \eqref{eq:dd-l-eta-ineq} concludes \eqref{eq:lddot-bound}.
    \end{enumerate}
    
% -----------------
% Proof of Lemma 10
% -----------------

\subsection{Proof of \texorpdfstring{\Cref{lem:M-asym}}{Lemma 10}} \label{prf:M-asym}

    As \(\eta^{-1}\) shrinks, we obtain asymptote of the matrix \(\tens{K}_{\eta}^{-1}\) in \ref{item:M-asym-1}, \(\tens{P}_{\eta}\) in \ref{item:M-asym-2} and \ref{item:M-asym-3}, and \(\tens{M}_{1,\eta}\) in \ref{item:M-asym-4}, as follows.

    \begin{enumerate}[leftmargin=*,align=left,label*=\emph{Step (\roman*).},ref=step (\roman*),wide]
        \item\label{item:M-asym-1} The first three terms of the Neumann series of the matrix \(\tens{K}_{\eta}^{-1}\), as a bounded operator (see \eg \cite[p. 320, Lemma 1]{DAUTRAY-2000}), is,
            \begin{align}
                \tens{K}_{\eta}^{-1} &= \frac{1}{\eta} \left( \tens{I} + \frac{1}{\eta} \tens{K} \right)^{-1} \notag \\
                &= \frac{1}{\eta} \left( \tens{I} - \frac{1}{\eta} \tens{K} + \frac{1}{\eta^2} \tens{K}^2 + \mathcal{O}\left( \| \eta^{-1} \tens{K} \|^3 \right) \right). \label{eq:Kinv-asym}
            \end{align}
            An infinite Neumann series is convergent if \(\| \eta^{-1} \tens{K} \| < 1\), which translates to \(\eta > \| \tens{K} \| = \lambda_n\) using the \(2\)-norm of the symmetric positive-definite matrix \(\tens{K}\). For the truncated series in the above, we impose \(\eta \gg \lambda_n\).

        \item\label{item:M-asym-2} Recall from \eqref{eq:P} and \eqref{eq:Sigma-K} that,
            \begin{equation}
                \tens{P}_{\eta} = \tens{I} - \tens{\X} \left( \tens{\X}^{\intercal} \tens{K}_{\eta}^{-1} \tens{\X} \right)^{-1} \tens{\X}^{\intercal} \tens{K}_{\eta}^{-1}. \label{eq:P2}
            \end{equation}
            Let \( \tens{B} \coloneqq \tens{\X}^{\intercal} \tens{K}_{\eta}^{-1} \tens{\X}\), which is a term in \(\tens{P}_{\eta}\). From \eqref{eq:Kinv-asym} and considering \(\tens{X}\) is full rank, we have,
            \begin{align*}
                \tens{B}^{-1} &= \left( \tens{\X}^{\intercal} \frac{1}{\eta} \left( \tens{I} - \frac{1}{\eta} \tens{K} + \frac{1}{\eta^2} \tens{K}^2 + \mathcal{O}\left(\eta^{-3} \lambda_n^3\right)  \right) \tens{\X} \right)^{-1} \\
                &= \eta \bigg( \tens{I} - \frac{1}{\eta} \underbrace{\left(\tens{\X}^{\intercal} \tens{\X} \right)^{-1} \left( \tens{\X}^{\intercal} \tens{K} \tens{\X} \right)}_{=: \tens{C}_1} + \frac{1}{\eta^2} \underbrace{\left( \tens{\X}^{\intercal} \tens{\X} \right)^{-1} \left( \tens{\X}^{\intercal} \tens{K}^2 \tens{\X} \right)}_{=: \tens{C}_2} + \mathcal{O}\left(\eta^{-3} \lambda_n^3\right) \bigg)^{-1} \left( \tens{\X}^{\intercal} \tens{\X} \right)^{-1}.
            \end{align*}
            Using the identity
            \begin{equation*}
                \left( \tens{I} - \frac{1}{\eta} \tens{C}_1 + \frac{1}{\eta^2} \tens{C}_{2} \right)^{-1} = \tens{I} + \frac{1}{\eta} \tens{C}_1 + \frac{1}{\eta^2}\left(\tens{C}_1^2 - \tens{C}_2 \right) + \mathcal{O}\left(\eta^{-3}\right),
            \end{equation*}
            for the matrices \(\tens{C}_1\) and \(\tens{C}_2\) as defined in the above, we obtain the Neumann series of \(\tens{B}^{-1}\) as,
            \begin{equation*}
                \begin{split}
                    \tens{B}^{-1} =& \eta \bigg[ \left( \tens{\X}^{\intercal} \tens{\X} \right)^{-1} + \frac{1}{\eta} \left( \tens{\X}^{\intercal} \tens{\X}\right)^{-1} \left( \tens{\X}^{\intercal} \tens{K} \tens{\X} \right) \left( \tens{\X}^{\intercal} \tens{\X} \right)^{-1} \\
                    &+ \frac{1}{\eta^2} \left( \left( \left( \tens{\X}^{\intercal} \tens{\X} \right)^{-1} (\tens{\X}^{\intercal} \tens{K} \tens{\X}) \right)^2 \left( \tens{\X}^{\intercal} \tens{\X} \right)^{-1} - \left( \tens{\X}^{\intercal} \tens{\X}\right)^{-1} \left( \tens{\X}^{\intercal} \tens{K}^2 \tens{\X} \right) \left( \tens{\X}^{\intercal} \tens{\X} \right)^{-1} \right) \bigg] \\
                    &+ \mathcal{O}\left( \eta^{-2} \lambda_n^3 \right).
                \end{split}
            \end{equation*}
            We will substitute \(\tens{B}^{-1}\) into \(\tens{P}_{\eta} \) in the next step.

        \item\label{item:M-asym-3} Define the projection matrix \(\tens{Q}_{\perp} \coloneqq \tens{\X} \left( \tens{\X}^{\intercal} \tens{\X} \right)^{-1} \tens{\X}^{\intercal}\), which is the orthogonal complement to the projection matrix \(\tens{Q}\) defined in \eqref{eq:Q}. That is, \(\tens{Q}_{\perp} = \tens{I} - \tens{Q}\). We can show that,
            \begin{equation*}
                \tens{\X} \tens{B}^{-1} \tens{\X}^{\intercal} = \eta \left( \tens{Q}_{\perp} + \frac{1}{\eta} \tens{Q}_{\perp} \tens{K} \tens{Q}_{\perp} + \frac{1}{\eta^2} \Big( \tens{Q}_{\perp} (\tens{K} \tens{Q}_{\perp})^2 - \tens{Q}_{\perp} \tens{K}^2 \tens{Q}_{\perp} \Big) \right) + \mathcal{O}\left(\eta^{-2} \lambda_n^3\right).
            \end{equation*}
            We derive \(\tens{P}_{\eta}\) by substituting the above term and \(\tens{K}_{\eta}^{-1}\) from \eqref{eq:Kinv-asym} into \eqref{eq:P2}. After carrying out the multiplications, we omit terms with order higher than \(\eta^{-2}\), yielding,
            \begin{align*}
                \tens{P}_{\eta} =& \left(\tens{I} - \tens{Q}_{\perp}\right) + \frac{1}{\eta} \left( \tens{Q}_{\perp} \tens{K} - \tens{Q}_{\perp} \tens{K} \tens{Q}_{\perp} \right) \\
                &+ \frac{1}{\eta^2} \Big( -\tens{Q}_{\perp} \tens{K}^2 + (\tens{Q}_{\perp} \tens{K})^2 - \tens{Q}_{\perp} (\tens{K} \tens{Q}_{\perp})^2 + \tens{Q}_{\perp} \tens{K}^2 \tens{Q} \Big) +  \mathcal{O}\left(\eta^{-3} \lambda_n^3\right).
            \end{align*}
            Using \(\tens{Q} = \tens{I} - \tens{Q}_{\perp}\), the above can be further simplified to
            \begin{equation}
                \tens{P}_{\eta} = \tens{Q} + \frac{1}{\eta} \tens{Q}_{\perp} \tens{K} \tens{Q} - \frac{1}{\eta^2} \tens{Q}_{\perp} (\tens{K} \tens{Q})^2 + \mathcal{O}\left(\eta^{-3} \lambda_n^3\right). \label{eq:P3}
            \end{equation}
        \item\label{item:M-asym-4} We calculate \(\tens{M}_{1,\eta} = \tens{K}_{\eta}^{-1} \tens{P}_{\eta}\) from \eqref{eq:Kinv-asym} and \eqref{eq:P3}. Again, we omit terms with order higher than \(\eta^{-2}\) after performing multiplications. Overall, we obtain,
            \begin{equation*}
                    \tens{M}_{1,\eta} = \frac{1}{\eta} \left( \tens{Q} + \frac{1}{\eta} \left( \tens{Q}_{\perp} \tens{K} \tens{Q} - \tens{K} \tens{Q} \right) + \frac{1}{\eta^2} \Big( -\tens{Q}_{\perp} (\tens{K} \tens{Q})^2 - \tens{K} \tens{Q}_{\perp} \tens{K} \tens{Q} + \tens{K}^2 \tens{Q} \Big) + \mathcal{O}\left( \eta^{-3} \lambda_n^3 \right) \right).
            \end{equation*}
            Again, by using \(\tens{Q}_{\perp} = \tens{I} - \tens{Q}\) and factoring terms, the above can be simplified to
            \begin{equation*}
                \tens{M} = \frac{1}{\eta} \tens{Q} \left( \tens{I} - \frac{1}{\eta} \tens{K} \tens{Q} + \frac{1}{\eta^2} (\tens{K} \tens{Q})^2 \right) + \mathcal{O}\left( \eta^{-4} \lambda_n^3 \right).
            \end{equation*}
            Applying \(\tens{N} = \tens{K} \tens{Q}\) concludes \eqref{eq:M-asymptote}.
    \end{enumerate}

\end{appendices}

% ============
% Bibliography
% ============

% \clearpage
\phantomsection{}
\addcontentsline{toc}{section}{References}

% Smaller References font
% \renewcommand*{\bibfont}{\small}

% \bibliographystyle{plain}
% \bibliographystyle{alpha}
% \bibliographystyle{apalike}
% \bibliographystyle{siam}

% natbib style
%\bibliographystyle{plainnat}
\bibliographystyle{apalike2}

\bibliography{References}

% For biblatex
% \bibliographystyle{apa}
% \printbibliography

\end{document}